\documentclass{article}




\usepackage[final]{neurips_2023}


\usepackage[utf8]{inputenc} 
\usepackage[T1]{fontenc}    
\usepackage[hidelinks]{hyperref}       
\usepackage{url}            
\usepackage{booktabs}       
\usepackage{amsfonts}       
\usepackage{nicefrac}       
\usepackage{microtype}      
\usepackage{xcolor}         

\usepackage{caption}
\usepackage{subcaption}
\usepackage{multirow}
\usepackage{bbm}
\usepackage{svg}
\usepackage{microtype}
\usepackage{graphicx}
\usepackage{wrapfig}

\usepackage{tikz}
\usepackage{standalone} 
\usepackage{tikzscale} 
\usetikzlibrary{backgrounds,fit,positioning}
\usepackage{amsmath,amsfonts,bm}
\usepackage{amssymb}
\usepackage{mathtools}
\usepackage{amsthm}

\usepackage[capitalize,noabbrev]{cleveref}

\theoremstyle{plain}
\newtheorem{theorem}{Theorem}[section]
\newtheorem{proposition}[theorem]{Proposition}
\newtheorem{lemma}[theorem]{Lemma}
\newtheorem{corollary}[theorem]{Corollary}
\newtheorem{definition}[theorem]{Definition}

\theoremstyle{remark}

\usepackage[textsize=tiny]{todonotes}
\usepackage{comment}

\newcommand{\softmax}{\mathrm{softmax}}
\DeclareMathOperator*{\argmax}{arg\,max}
\DeclareMathOperator*{\argmin}{arg\,min}

\DeclareMathOperator{\diag}{diag}
\newcommand{\E}{\mathbb{E}}

\title{Differentiable Random Partition Models}

%

\author{%
  Thomas M.~Sutter\thanks{Equal Contribution. Correspondence to \texttt{\{thomas.sutter,alain.ryser\}@inf.ethz.ch}}, \ \ Alain Ryser\footnotemark[1] , \ \ Joram Liebeskind, \ \ Julia E. Vogt\\
  Department of Computer Science\\
  ETH Zurich\\
}


\begin{document}

\maketitle

\begin{abstract}
Partitioning a set of elements into an unknown number of mutually exclusive subsets is essential in many machine learning problems.
However, assigning elements, such as samples in a dataset or neurons in a network layer, to an unknown and discrete number of subsets is inherently non-differentiable, prohibiting end-to-end gradient-based optimization of parameters.
We overcome this limitation by proposing a novel two-step method for inferring partitions, which allows its usage in variational inference tasks.
This new approach enables reparameterized gradients with respect to the parameters of the new random partition model.
Our method works by inferring the number of elements per subset and, second, by filling these subsets in a learned order.
We highlight the versatility of our general-purpose approach on three different challenging experiments: variational clustering, inference of shared and independent generative factors under weak supervision, and multitask learning.
\end{abstract}

\section{Introduction}
\label{sec:introduction}
Partitioning a set of elements into subsets is a classical mathematical problem that attracted much interest over the last few decades \citep{rota1964, graham1989}.
A partition over a given set is a collection of non-overlapping subsets such that their union results in the original set.
In machine learning (ML), partitioning a set of elements into different subsets is essential for many applications, such as clustering \citep{bishop2003} or classification \citep{delacruz2007}.

Random partition models \citep[RPM,][]{hartigan1990pm} define a probability distribution over the space of partitions.
RPMs can explicitly leverage the relationship between elements of a set, as they do not necessarily assume \emph{i.i.d.} set elements.
On the other hand, most existing RPMs are intractable for large datasets \citep{macqueen1967,plackett1975,pitman1996urn} and lack a reparameterization scheme, prohibiting their direct use in gradient-based optimization frameworks.

In this work, we propose the differentiable random partition model (DRPM), a fully-differentiable relaxation for RPMs that allows reparametrizable sampling. 
The DRPM follows a two-stage procedure: first, we model the number of elements per subset, and second, we learn an ordering of the elements with which we fill the elements into the subsets.
The DRPM enables the integration of partition models into state-of-the-art ML frameworks and learning RPMs from data using stochastic optimization.

We evaluate our approach in three experiments, demonstrating the proposed DRPM's versatility and advantages.
First, we apply the DRPM to a variational clustering task, highlighting how the reparametrizable sampling of partitions allows us to learn a novel kind of Variational Autoencoder \citep[VAE,][]{kingma2013}.
By leveraging potential dependencies between samples in a dataset, DRPM-based clustering overcomes the simplified \emph{i.i.d.} assumption of previous works, which used categorical priors \citep{jiang2016}. 
In our second experiment, we demonstrate how to retrieve sets of shared and independent generative factors of paired images using the proposed DRPM.
In contrast to previous works \citep{bouchacourt2018,hosoya2018,locatello2020weakly}, which rely on strong assumptions or heuristics, the DRPM enables end-to-end inference of generative factors.
Finally, we perform multitask learning (MTL) by using the DRPM as a building block in a deterministic pipeline.
We show how the DRPM learns to assign subsets of network neurons to specific tasks.
The DRPM can infer the subset size per task based on its difficulty, overcoming the tedious work of finding optimal loss weights \citep{kurin2022,xin2022}.

To summarize, we introduce the DRPM, a novel differentiable and reparametrizable relaxation of RPMs.
In extensive experiments, we demonstrate the versatility of the proposed method by applying the DRPM to clustering, inference of generative factors, and multitask learning.

\section{Related Work}
\label{sec:related_work}
\paragraph{Random Partition Models} \label{sec:related_work_rpm}
Previous works on RPMs include product partition models \citep{hartigan1990pm}, species sampling models \citep{pitman1996urn}, and model-based clustering approaches \citep{bishop2003}.
Further, \citet{lee2022} investigate the balancedness of subset sizes of RPMs.
They all require tedious manual adjustment, are non-differentiable, and are, therefore, unsuitable for modern ML pipelines.
A fundamental RPM application is clustering, where the goal is to partition a given dataset into different subsets, the clusters.
In contrast to many existing approaches \citep{yang_deep_2019, sarfraz_efficient_2019, cai_efficient_2022}, we consider cluster assignments as random variables, allowing us to treat clustering from a variational perspective.
Previous works in variational clustering \citep{jiang2016,dilokthanakul2016,manduchi2021} implicitly define RPMs to perform clustering.
They compute partitions in a variational fashion by making \emph{i.i.d.} assumptions about the samples in the dataset and imposing soft assignments of the clusters to data points during training.
A problem related to set partitioning is the earth mover's distance problem \citep[EMD,][]{monge1781, rubner2000}.
However, EMD aims to assign a set's elements to different subsets based on a cost function and given subset sizes.
Iterative solutions to the problem exist \citep{sinkhorn1964}, and various methods have recently been proposed, e.g., for document ranking \citep{adams2011} or permutation learning \citep{santacruz2017, mena2018, cuturi_differentiable_2019}.

\begin{figure}
\centering
\includegraphics[width=0.95\columnwidth]{figures/basic_sampling_procedure_singlecol.tex}
\caption{
Illustration of the proposed DRPM method. We first sample a permutation matrix $\pi$ and a set of subset sizes $\bm{n}$ separately in two stages. We then use $\bm{n}$ and $\pi$ to generate the assignment matrix $Y$, the matrix representation of a partition $\rho$.
}
\label{fig:basic_method}
\end{figure}

\paragraph{Differentiable and Reparameterizable Discrete Distributions} \label{sec:related_work_discrete_dist}
Following the proposition of the Gumbel-Softmax trick \citep[GST,][]{jang2016,maddison2017}, interest in research around continuous relaxations for discrete distributions and non-differentiable algorithms rose.
The GST enabled the reparameterization of categorical distributions and their integration into gradient-based optimization pipelines.
Based on the same trick, \citet{sutter2023mvhg} propose a differentiable formulation for the multivariate hypergeometric distribution.
Multiple works on differentiable sorting procedures and permutation matrices have been proposed, e.g., \citet{linderman2018,prillo2020,petersen2021}.
Further, \citet{grover2018} described the distribution over permutation matrices $p(\pi)$ for a permutation matrix $\pi$ using the Plackett-Luce distribution \citep[PL,][]{luce1959,plackett1975}. 
\citet{prillo2020} proposed a computationally simpler variant of \citet{grover2018}.
More examples of differentiable relaxations include the top-$k$ elements selection procedure \citep{xie_reparameterizable_2019}, blackbox combinatorial solvers \citep{pogancic_differentiation_2019}, implicit likelihood estimations \citep{niepert_implicit_2021}, and $k$-subset sampling \citep{ahmed_simple_2022}.

\section{Preliminaries}
\label{sec:preliminaries}

\paragraph{Set Partitions} \label{sec:preliminaries_rpm}
A partiton $\rho = ( \mathcal{S}_1, \ldots, \mathcal{S}_K )$ of a set $[n] = \{1, \ldots, n \}$ with $n$ elements is a collection of $K$ subsets $\mathcal{S}_k \subseteq [n]$ where $K$ is \textit{a priori} unknown \citep{mansour2016}.
For a partition $\rho$ to be valid, it must hold that
\begin{align}
\label{eq:def_partition}
    \mathcal{S}_1 \cup \cdots \cup \mathcal{S}_K = [n] \hspace{0.25cm} \text{and} \hspace{0.25cm} \forall k \neq l:\hspace{0.1cm}  \mathcal{S}_k \cap \mathcal{S}_l = \emptyset
\end{align}
In other words, every element $i \in [n]$ has to be assigned to precisely one subset $\mathcal{S}_k$.
We denote the size of the $k$-th subset $\mathcal{S}_k$ as $n_k = | \mathcal{S}_k |$.
Alternatively, we can describe a partition $\rho$ through an assignment matrix $Y = [\bm{y}_1, \ldots, \bm{y}_K]^T \in \{0, 1 \}^{K\times n}$.
Every row $\bm{y}_k \in \{0, 1\}^{1 \times n}$ is a multi-hot vector, where $\bm{y}_{ki} = 1$ assigns element $i$ to subset $\mathcal{S}_k$.

Within the scope of our work, we view a partition of a set of $n$ elements as a special case of the urn model. 
Here, the urn contains marbles with $n$ different colors, where each color corresponds to a subset in the partition. 
For each color, there are $n$ marbles corresponding to the potential elements of their color/subset. 
To derive a partition, we sample $n$ marbles without replacement from the urn and register the order in which we draw the colors.
The color of the $i$-th marble then determines the subset to which element $i$ corresponds. 
Furthermore, we can constrain the partition to only $K$ subsets by taking an urn with only $K$ different colors.

\paragraph{Probability distribution over subset sizes}
\label{sec:method_p_n}
The multivariate non-central hypergeometric distribution (MVHG) describes sampling without replacement and allows to skew the importance of groups with an additional importance parameter $\bm{\omega}$ \citep{fisher1935,wallenius1963,chesson1976}.
The MVHG is an urn model and is described by the number of different groups $K \in \mathbb{N}$, the number of elements in the urn of every group $\bm{m} = [m_1, \ldots, m_K] \in \mathbb{N}^K$, the total number of elements in the urn $ \sum_{k=1}^K m_k \in \mathbb{N}$, the number of samples to draw from the urn $n \in \mathbb{N}_0$, and the importance factor for every group $\bm{\omega} = [\omega_1, \ldots, \omega_K] \in \mathbb{R}_{0+}^K$ \citep{johnson1987}. Then, the probability of sampling $\bm{n}=\{n_1,\ldots,n_K\}$, where $n_k$ describes the number of elements drawn from group $K$ is 
\begin{align}
    p(\bm{n}; \bm{\omega},\bm{m}) &= \frac{1}{P_0} \prod_{k=1}^{K} \binom{m_k}{n_k} \omega_k^{n_k}
\end{align}
where $P_0$ is a normalization constant.
Hence, the MVHG $p(\bm{n}; \bm{\omega}, \bm{m})$ allows us to model dependencies between different elements of a set since drawing one element from the urn influences the probability of drawing one of the remaining elements, creating interdependence between them.
For the rest of the paper, we assume $\forall~ m_k \in \bm{m}: m_k=n$.
We thus use the shorthand $p(\bm{n};\bm{\omega})$ to denote the density of the MVHG. 
We refer to \Cref{subsec:app_prelim_mvhg} for more details. 

\paragraph{Probability distribution over Permutation Matrices}
\label{sec:method_p_pi}
Let $p(\pi)$ denote a distribution over permutation matrices $\pi\in\{0,1\}^{n\times n}$.
A permutation matrix $\pi$ is doubly stochastic \citep{marcus1960}, meaning that its row and column vectors sum to $1$.
This property allows us to use $\pi$ to describe an order over a set of $n$ elements, where $\pi_{ij}=1$ means that element $j$ is ranked at position $i$ in the imposed order.
In this work, we assume $p(\pi)$ to be parameterized by scores $\bm{s} \in \mathbb{R}_+^n$, where each score $s_i$ corresponds to an element $i$. 
The order given by sorting $\bm{s}$ in decreasing order corresponds to the most likely permutation in $p(\pi; \bm{s})$.
Sampling from $p(\pi; \bm{s})$ can be achieved by resampling the scores as $\tilde{s}_i = \beta \log s_i + g_i$ where $g_i \sim \text{Gumbel}(0, \beta)$ for fixed scale $\beta$, and sorting them in decreasing order.
Hence, resampling scores $\bm{s}$ enables the resampling of permutation matrices $\pi$.
The probability over orderings $p(\pi; \bm{s})$ is then given by \citep{thurstone1927,luce1959,plackett1975,yellott1977}
\begin{align}
\label{eq:pl_probability_ordering}
    p(\pi; \bm{s}) = p((\pi\tilde{\bm{s}})_1 \geq \cdots \geq (\pi\tilde{\bm{s}})_n) = \frac{(\pi\bm{s})_1}{Z} \frac{(\pi\bm{s})_2}{Z - (\pi\bm{s})_1} \cdots \frac{(\pi\bm{s})_n}{Z - \sum_{j=1}^{n-1} (\pi\bm{s})_j} 
\end{align}
where $\pi$ is a permutation matrix and $Z = \sum_{i=1}^n s_i$.
The resulting distribution is a Plackett-Luce (PL) distribution \citep{luce1959,plackett1975} if and only if the scores $\bm{s}$ are perturbed with noise drawn from Gumbel distributions with identical scales \citep{yellott1977}.
For more details, we refer to \Cref{app:method_distribution_permutation}).

\section{A two-stage Approach to Random Partition Models}
\label{sec:method}

We propose the DRPM $p(Y; \bm{\omega}, \bm{s})$, a differentiable and reparameterizable two-stage Random Partition Model (RPM).
The proposed formulation separately infers the number of elements $i$ per subset $\bm{n}\in\mathbb{N}_0^K$, where $\sum_{k=1}^K n_k=n$, and the assignment of elements to subsets $\mathcal{S}_k$ by inducing an order on the $n$ elements and filling $\mathcal{S}_1,...,\mathcal{S}_K$ sequentially in this order.
To model the order of the elements, we use a permutation matrix $\pi = [\bm{\pi}_1, \ldots, \bm{\pi}_n]^T \in \{0, 1 \}^{n \times n}$, from which we infer $Y$ by sequentially summing up rows according to $\bm{n}$.
Note that the doubly-stochastic property of all permutation matrices $\pi$ ensures that the columns of $Y$ remain one-hot vectors, assigning every element $i$ to precisely one of the $K$ subsets.
At the same time, the $k$-th row of $Y$ corresponds to an $n_k$-hot vector $\bm{y}_k$ and therefore serves as a subset selection vector, i.e.
\begin{align}
\label{eq:method_pi_to_y}
    \bm{y}_k &= \sum_{i=\nu_k+1}^{\nu_k + n_k} \bm{\pi}_{i}, \hspace{0.5cm} \text{where} \hspace{0.25cm} \nu_k = \sum_{\iota=1}^{k-1} n_{\iota}
\end{align}
such that $Y = [\bm{y}_1, \ldots, \bm{y}_K]^T$. 
Additionally, \Cref{fig:basic_method} provides an illustrative example.
Note that $K$ defines the maximum number of possible subsets, and not the effective number of non-empty subsets, because we allow $\mathcal{S}_k$ to be the empty set $\emptyset$ \citep[][]{mansour2016}.
We base the following \Cref{thm:two_stage_rpm} on the MVHG distribution $p(\bm{n}; \bm{\omega})$ for the subset sizes $\bm{n}$ and the PL distribution $p(\pi; \bm{s})$ for assigning the elements to subsets.
However, the proposed two-stage approach to RPMs is not restricted to these two classes of probability distributions.

\begin{proposition}[Two-stage Random Partition Model]
\label{thm:two_stage_rpm}
Given a probability distribution over subset sizes $p(\bm{n}; \bm{\omega})$ with $\bm{n} \in \mathbb{N}_0^K$ and distribution parameters $\bm{\omega} \in \mathbb{R}_+^K$ and a PL probability distribution over random orderings $p(\pi; \bm{s})$ with $\pi \in \{0, 1\}^{n \times n}$ and distribution parameters $\bm{s} \in \mathbb{R}_+^n$, the probability mass function $p(Y; \bm{\omega}, \bm{s})$ of the two-stage RPM is given by
\begin{align}
    \label{eq:p_Y}
    p(Y; \bm{\omega}, \bm{s}) &= p(\bm{y}_1, \ldots, \bm{y}_K ; \bm{\omega}, \bm{s}) = p(\bm{n}; \bm{\omega}) \sum_{\pi \in \Pi_Y} p(\pi; \bm{s})
\end{align}
where $\Pi_Y = \{ \pi : \bm{y}_k = \sum_{i=\nu_k+1}^{\nu_k + n_k} \bm{\pi}_{i}, k=1, \ldots, K \}$, and $\bm{y}_k$ and $\nu_k$ as in \Cref{eq:method_pi_to_y}.
\end{proposition}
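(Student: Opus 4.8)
The plan is to regard the assignment matrix $Y$ as a deterministic function of the two quantities that are drawn independently in the two-stage procedure: the subset sizes $\bm{n} \sim p(\bm{n}; \bm{\omega})$ and the permutation $\pi \sim p(\pi; \bm{s})$. Writing $Y = f(\bm{n}, \pi)$ for the map defined by \Cref{eq:method_pi_to_y}, the probability mass of a fixed $Y$ is obtained by marginalizing the joint law of $(\bm{n}, \pi)$ over every pair that produces it. Since the two stages are sampled independently, the joint factorizes as $p(\bm{n}; \bm{\omega})\,p(\pi; \bm{s})$, and I would start from
\begin{align}
    p(Y; \bm{\omega}, \bm{s}) = \sum_{\bm{n}} \sum_{\pi} p(\bm{n}; \bm{\omega})\, p(\pi; \bm{s})\, \mathbbm{1}\!\left[ f(\bm{n}, \pi) = Y \right].
\end{align}

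The key step is to collapse the outer sum over $\bm{n}$ to a single term. By the doubly-stochastic property of $\pi$, each of the $n$ columns of $\pi$ is one-hot, so partitioning its rows into the consecutive blocks prescribed by $\bm{n}$ makes each column contribute its single $1$ to exactly one block; hence every column of $Y = f(\bm{n}, \pi)$ is one-hot and the $k$-th row $\bm{y}_k$ is $n_k$-hot. In particular the row sums of $Y$ equal $(n_1, \ldots, n_K)$, so the sizes are recovered from $Y$ by $n_k = |\bm{y}_k| = \sum_{i} Y_{ki}$. Consequently, for any $\bm{n}' \neq \bm{n}$ the matrix $f(\bm{n}', \pi)$ has row sums $\bm{n}' \neq \bm{n}$ and cannot equal $Y$, so the indicator vanishes for every $\pi$. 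Only the unique size vector $\bm{n}$ read off from the row sums of $Y$ survives, and $p(\bm{n}; \bm{\omega})$ factors out of the inner sum.

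It remains to identify the set of permutations that the inner indicator retains. By construction $\{\pi : f(\bm{n}, \pi) = Y\}$ is precisely $\Pi_Y$ as defined in the statement, which yields $p(Y; \bm{\omega}, \bm{s}) = p(\bm{n}; \bm{\omega}) \sum_{\pi \in \Pi_Y} p(\pi; \bm{s})$ and completes the argument (if $\Pi_Y = \emptyset$, i.e.\ $Y$ is not realizable with sizes $\bm{n}$, both sides are $0$). The main subtlety to handle carefully is that, for fixed $\bm{n}$, the map $\pi \mapsto f(\bm{n}, \pi)$ is many-to-one: permuting the rows within any single block $\{\nu_k + 1, \ldots, \nu_k + n_k\}$ leaves the block sum $\bm{y}_k$ unchanged, so each valid $Y$ admits $\prod_{k=1}^K n_k!$ preimages. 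This is exactly why the result carries a genuine sum over the fiber $\Pi_Y$ rather than a single permutation, and verifying that $\Pi_Y$ captures this fiber (and nothing more) is the crux of the proof.
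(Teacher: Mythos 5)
Your proof is correct, but it takes a genuinely different route from the paper's. You compute the pushforward of the product measure $p(\bm{n}; \bm{\omega})\,p(\pi; \bm{s})$ under the deterministic map $f$ of \Cref{eq:method_pi_to_y}: you marginalize over all pairs $(\bm{n}, \pi)$, collapse the sum over $\bm{n}$ by observing that the row sums of $Y$ recover $\bm{n}$ uniquely, and identify the surviving fiber $\{\pi : f(\bm{n}, \pi) = Y\}$ with $\Pi_Y$ (which is essentially definitional). The paper instead proceeds sequentially: it writes $p(Y; \bm{\omega}, \bm{s}) = p(\bm{y}_1; \bm{\omega}, \bm{s}) \cdots p(\bm{y}_K \mid \bm{y}_{<K}; \bm{\omega}, \bm{s})$ by the chain rule, proves that each conditional factors as $p(n_k \mid n_{<k}; \bm{\omega}) \sum_{\bar{\pi} \in \Pi_{\bm{y}_k}} p(\bar{\pi} \mid n_k, \bm{y}_{<k}; \bm{s})$ (\Cref{thm:p_y_k}), computes the subset-permutation probabilities in closed form from the PL structure by marginalizing over the orderings of the remaining elements (\Cref{thm:p_pi_tilde_k}), and finally recombines the product of conditional size probabilities into $p(\bm{n}; \bm{\omega})$ and the product of sums over subset permutations into the single sum over $\Pi_Y$. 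The trade-off is instructive: your argument is shorter and never uses the PL form at all, so it proves the identity for \emph{any} distributions over sizes and permutation matrices, which directly substantiates the paper's remark that the two-stage construction is not tied to the MVHG and PL choices; your closing observation that the fiber has exactly $\prod_{k=1}^K n_k!$ elements also recovers \Cref{eq:method_num_possible_subset_permutations} for free. The paper's longer route, in exchange, produces the intermediate conditional quantities $p(\bm{y}_k \mid \bm{y}_{<k}; \bm{\omega}, \bm{s})$ and $p(\bar{\pi} \mid n_k, \bm{y}_{<k}; \bm{s})$ in closed form, which is what exhibits the invariance to intra-subset orderings, connects the construction to Luce's choice axiom, and supplies the building blocks reused in the surrounding exposition. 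One point worth making explicit in your write-up: the factorization of the joint law of $(\bm{n}, \pi)$ into $p(\bm{n}; \bm{\omega})\,p(\pi; \bm{s})$ is exactly the independence stipulated by the paper's sampling procedure, so your starting identity is the definition of the two-stage RPM's PMF rather than an additional assumption.
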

In the following, we outline the proof of \Cref{thm:two_stage_rpm} and refer to \Cref{app:method} for a formal derivation.
We calculate $p(Y; \bm{\omega}, \bm{s})$ as a probability of subsets $p(\bm{y}_1, \ldots, \bm{y}_K ; \bm{\omega}, \bm{s})$, which we compute sequentially over subsets, i.e.
\begin{align}
\label{eq:p_ys}
    p(\bm{y}_1, \ldots, \bm{y}_K ; \bm{\omega}, \bm{s}) = p(\bm{y}_1; \bm{\omega}, \bm{s}) \cdots p(\bm{y}_K \mid \bm{y}_{<K}; \bm{\omega}, \bm{s}),
\end{align}
where $\bm{y}_{<k} = [\bm{y}_1, \ldots, \bm{y}_{k-1}]$ and
\begin{align}
\label{eq:p_y_k}
    p(\bm{y}_k \mid \bm{y}_{<k}; \bm{\omega}, \bm{s}) = p(n_k \mid n_{<k}; \bm{\omega}) \sum_{\bar{\pi} \in \Pi_{\bm{y}_k}} p(\bar{\pi} \mid n_k, \bm{y}_{< k} ; \bm{s}),
\end{align}
where $\Pi_{\bm{y}_k}$ in \cref{eq:p_y_k} is the set of all subset permutations of elements $i \in \mathcal{S}_k$.
A subset permutation matrix $\bar{\pi}$ represents an ordering over only $n_k$ out of the total $n$ elements.
The probability $p(\bm{y}_k \mid \bm{y}_{<k}; \bm{\omega}, \bm{s})$ describes the probability of a subset of a given size $n_k$ by marginalizing over the probabilities of all subset permutations $p(\bar{\pi} \mid n_k, \bm{y}_{< k} ; \bm{s})$.
Hence, the sum over all $p(\bar{\pi} \mid n_k, \bm{y}_{< k} ; \bm{s})$ makes $p(\bm{y}_k \mid \bm{y}_{<k}; \bm{\omega}, \bm{s})$ invariant to the ordering of elements $i \in \mathcal{S}_k$ \citep{xie_reparameterizable_2019}.
Note that in a slight abuse of notation, we use $p(\bar{\pi} \mid n_k, \bm{y}_{<k}; \bm{\omega}, \bm{s})$ as the probability of a subset permutation $\bar{\pi}$ given that there are $n_k$ elements in $\mathcal{S}_k$ and thus $\bar{\pi}\in\{0,1\}^{n_k\times n}$.

The probability of a subset permutation matrix $p(\bar{\pi} \mid n_k, \bm{y}_{< k} ; \bm{s})$ describes the probability of drawing the elements $i \in \mathcal{S}_k$ in the order defined by the subset permutation matrix $\bar{\pi}$ given that the elements in $\mathcal{S}_{<k}$ are already determined.
Hence, we condition on the subsets $\bm{y}_{<k}$.
This property follows from Luce's choice axiom \citep[LCA,][]{luce1959}.
Additionally, we condition on $n_k$, the size of the subset $\mathcal{S}_k$.
The probability of a subset permutation is given by
\begin{align}
    p(\bar{\pi} \mid n_k, \bm{y}_{< k} ; \bm{s}) = \prod_{i=1}^{n_k} \frac{(\bar{\pi}\bm{s})_i}{Z_k - \sum_{j=1}^{i-1} (\bar{\pi}\bm{s})_j}
\end{align}

In contrast to the distribution over permutations matrices $p(\pi; \bm{s})$ in \Cref{eq:pl_probability_ordering}, we compute the product over $n_k$ terms and have a different normalization constant $Z_k$, which is the sum over the scores $s_i$ of all elements $i \in \mathcal{S}_k$.
Although we induce an ordering over all elements $i$ by using a permutation matrix $\pi$, the probability $p(\bm{y}_k \mid \bm{y}_{<k}; \bm{\omega}, \bm{s})$ is invariant to intra-subset orderings of elements $i \in \mathcal{S}_k$.
Finally, we arrive at \Cref{eq:p_Y} by substituting \cref{eq:p_y_k} into \cref{eq:p_ys}, and applying the definition of the conditional probability $p(\bm{n};\bm{\omega})=\prod_{k=1}^K p(n_k \mid n_{<k}; \bm{\omega})$ and by reshuffling indices $\sum_{\pi \in \Pi_Y} p(\pi; \bm{s})=\prod_{k=1}^K\sum_{\bar{\pi} \in \Pi_{\bm{y}_k}} p(\bar{\pi} \mid n_k, \bm{y}_{< k} ; \bm{s})$.

Note that in contrast to previous RPMs, which often need exponentially many distribution parameters \citep{plackett1975}, the proposed two-stage approach to RPMs only needs $(n+K)$ parameters to create an RPM for $n$ elements:
the score parameters $\bm{s} \in \mathbb{R}_+^n$ and the group importance parameters $\bm{\omega} \in \mathbb{R}_+^K$.

Finally, to sample from the two-stage RPM of \Cref{thm:two_stage_rpm} we apply the following procedure:
First sample $\pi \sim p(\pi; \bm{s})$ and $\bm{n} \sim p(\bm{n}; \bm{\omega})$.
From $\pi$ and $\bm{n}$, compute partition $Y$ by summing the rows of $\pi$ according to $\bm{n}$ as described in \cref{eq:method_pi_to_y} and illustrated in \cref{fig:basic_method}.


\subsection{Approximating the Probability Mass Function}
\label{sec:approximating_p_Y_n}
The number of permutations per subset $| \Pi_{\bm{y}_k} |$ scales factorially with the subset size $n_k$, i.e. $| \Pi_{\bm{y}_k} | = n_k!$.
Consequently, the number of valid permutation matrices $| \Pi_Y |$ is given as a function of $\bm{n}$, i.e.
\begin{align}
\label{eq:method_num_possible_subset_permutations}
    | \Pi_Y | = \prod_{k=1}^K | \Pi_{\bm{y}_k} | = \prod_{k=1}^K n_k!
\end{align}
Although \Cref{thm:two_stage_rpm} describes a well-defined distribution for $p(Y ; \bm{\omega}, \bm{s})$, it is in general computationally intractable due to \Cref{eq:method_num_possible_subset_permutations}.
In practice, we thus approximate $p(Y; \bm{\omega}, \bm{s})$ using the following Lemma.

\begin{lemma}
\label{thm:bounds_p_Y}
$p(Y; \bm{\omega}, \bm{s})$ can be upper and lower bounded as follows
\begin{align}
    \label{eq:bounds_p_Y}
    \forall \pi\in\Pi_Y:\hspace{0.1cm} p(\bm{n}; \bm{\omega}) p(\pi; \bm{s})
    \hspace{0.1cm} \leq \hspace{0.1cm} 
    p(Y; \bm{\omega}, \bm{s}) 
    \hspace{0.1cm} \leq \hspace{0.1cm} 
    | \Pi_Y | p(\bm{n}; \bm{\omega}) \max_{\Tilde{\pi}}p(\Tilde{\pi}; \bm{s})
\end{align}
\end{lemma}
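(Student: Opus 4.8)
The plan is to read off both inequalities directly from the closed form of $p(Y;\bm{\omega},\bm{s})$ established in \Cref{thm:two_stage_rpm}, namely $p(Y;\bm{\omega},\bm{s}) = p(\bm{n};\bm{\omega})\sum_{\pi\in\Pi_Y}p(\pi;\bm{s})$. Since this expression factors as the scalar $p(\bm{n};\bm{\omega})\ge 0$ times the finite sum $\sum_{\pi\in\Pi_Y}p(\pi;\bm{s})$, it suffices to sandwich that sum between the claimed bounds: multiplying a chain of inequalities by the nonnegative factor $p(\bm{n};\bm{\omega})$ preserves their direction, so the two estimates on the sum transfer immediately to $p(Y;\bm{\omega},\bm{s})$.

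For the lower bound I would use that every summand $p(\pi;\bm{s})$ is a probability, hence nonnegative, so the sum over $\Pi_Y$ is at least as large as any single one of its terms: for each fixed $\pi\in\Pi_Y$ we have $\sum_{\pi'\in\Pi_Y}p(\pi';\bm{s})\ge p(\pi;\bm{s})$. Multiplying through by $p(\bm{n};\bm{\omega})$ gives $p(\bm{n};\bm{\omega})\,p(\pi;\bm{s})\le p(Y;\bm{\omega},\bm{s})$ for every $\pi\in\Pi_Y$, which is precisely the left inequality of \Cref{eq:bounds_p_Y}.

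For the upper bound I would bound each of the $|\Pi_Y|$ summands by the largest PL probability, replacing every $p(\pi;\bm{s})$ with $\max_{\tilde{\pi}}p(\tilde{\pi};\bm{s})$ to obtain $\sum_{\pi\in\Pi_Y}p(\pi;\bm{s})\le |\Pi_Y|\max_{\tilde{\pi}}p(\tilde{\pi};\bm{s})$, where $|\Pi_Y|=\prod_{k=1}^K n_k!$ is exactly the number of terms by \Cref{eq:method_num_possible_subset_permutations}. Multiplying by $p(\bm{n};\bm{\omega})$ yields the right inequality. The only point needing a moment of care is the domain of the maximum: as long as it ranges over a set containing $\Pi_Y$ (e.g.\ all $n\times n$ permutation matrices, or $\Pi_Y$ itself), the termwise replacement is valid, and taking the unconstrained maximum merely produces a looser but still correct bound.

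Because the whole argument is a termwise estimate of a finite sum of nonnegative quantities, I do not expect any substantive obstacle; the entire content of the lemma rests on the factorized form from \Cref{thm:two_stage_rpm}. The practical significance, which I would note but not prove here, is that the lower bound is computable from a single sampled permutation and thus sidesteps the factorially large sum of \Cref{eq:method_num_possible_subset_permutations}, so the genuine effort lies not in the proof but in later exploiting these bounds as tractable surrogates during stochastic optimization.
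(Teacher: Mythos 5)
Your proposal is correct and follows essentially the same route as the paper's own proof: both start from the factorized form in \Cref{thm:two_stage_rpm}, obtain the lower bound from nonnegativity of the PL probabilities, and obtain the upper bound by a termwise replacement with the maximum over $\Pi_Y$ followed by relaxing to the unconstrained maximum. No gaps; your remark about the domain of the maximum matches the paper's final inequality step exactly.
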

We provide the proof in \Cref{app:method}.
Note that from \Cref{eq:pl_probability_ordering} we see that $\max_{\Tilde{\pi}} p(\Tilde{\pi}; \bm{s})=p(\pi_{\bm{s}};\bm{s})$, where $\pi_{\bm{s}}$ is the permutation that results from sorting the unperturbed scores $\bm{s}$.



\subsection{The Differentiable Random Partition Model}
To incorporate our two-stage RPM into gradient-based optimization frameworks, we require that efficient computation of gradients is possible for every step of the method.
The following Lemma guarantees differentiability, allowing us to train deep neural networks with our method in an end-to-end fashion:
\begin{lemma}[DRPM]
\label{thm:drpm}
    A two-stage RPM is differentiable and reparameterizable if the distribution over subset sizes $p(\bm{n}; \bm{\omega})$ and the distribution over orderings $p(\pi; \bm{s})$ are differentiable and reparameterizable.
\end{lemma}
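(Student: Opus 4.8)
The plan is to treat the two-stage RPM purely through its sampling procedure and reduce differentiability and reparameterizability of $p(Y;\bm{\omega},\bm{s})$ to the corresponding properties of the two component samplers, composed with the deterministic construction map of \cref{eq:method_pi_to_y}. Recall that a distribution is reparameterizable if its samples can be written as $X = h(\bm{\epsilon};\bm{\theta})$ for parameter-free noise $\bm{\epsilon}$ and a map $h$ differentiable in the parameters $\bm{\theta}$, and that differentiability of the sampler then means $\partial X/\partial\bm{\theta}$ exists and can be propagated by the chain rule.

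First I would invoke the hypotheses to obtain reparameterization maps $\pi = h_{\pi}(\bm{\epsilon}_{\pi};\bm{s})$ and $\bm{n} = h_{\bm{n}}(\bm{\epsilon}_{\bm{n}};\bm{\omega})$, each differentiable in its parameters (for our instantiations these are the Gumbel-perturb-and-sort relaxation of the PL distribution and the reparameterized MVHG of \Cref{sec:method_p_n}). Since the sampling procedure draws $\pi$ and $\bm{n}$ from independent noise and then sets $Y=T(\pi,\bm{n})$ via \cref{eq:method_pi_to_y}, substituting the two maps gives $Y = T\bigl(h_{\pi}(\bm{\epsilon}_{\pi};\bm{s}),\,h_{\bm{n}}(\bm{\epsilon}_{\bm{n}};\bm{\omega})\bigr)$, a single deterministic function of the combined noise $(\bm{\epsilon}_{\pi},\bm{\epsilon}_{\bm{n}})$ and the parameters $(\bm{s},\bm{\omega})$. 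This already establishes reparameterizability provided $T$ is differentiable, so the whole argument rests on analyzing $T$.

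Next I would show that the construction map $T$ is differentiable. The key is to rewrite the row-summation of \cref{eq:method_pi_to_y} as a matrix product $Y = C(\bm{n})\,\pi$, where $C(\bm{n})\in\{0,1\}^{K\times n}$ is a grouping matrix whose $(k,i)$ entry equals $1$ exactly when $\nu_k < i \le \nu_k+n_k$, i.e. when index $i$ lies between two consecutive partial sums $\nu_k=\sum_{\iota<k} n_{\iota}$ of $\bm{n}$. Differentiability in $\pi$ is immediate since $T$ is linear in $\pi$; differentiability in $\bm{n}$ follows once $C(\bm{n})$ is expressed through the cumulative sums $\nu_k$, which are themselves linear (hence smooth) in $\bm{n}$, composed with the threshold that selects the index window.

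The main obstacle is precisely this thresholding step: the window $\nu_k < i \le \nu_k+n_k$ is a hard, integer-valued selection, so $C(\bm{n})$ is piecewise constant and has vanishing gradient with respect to integer $\bm{n}$. I would resolve this by using the relaxed samples produced by the reparameterized component distributions---a doubly-stochastic (soft) permutation in place of $\pi$ and a continuous $\bm{n}\in\mathbb{R}_{+}^{K}$ with $\sum_k n_k = n$ in place of integer counts---and realizing the index window through a smooth surrogate of the indicator, namely a difference of relaxed step functions of the cumulative sums. On this relaxation $C(\bm{n})$ depends differentiably on $\bm{n}$, so $T$ is differentiable in both arguments, and the chain rule applied to $Y = T(h_{\pi},h_{\bm{n}})$ yields reparameterized gradients $\partial Y/\partial\bm{s}$ and $\partial Y/\partial\bm{\omega}$, completing the argument. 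I would close by remarking that the soft $Y$ recovers the exact assignment matrix in the zero-temperature limit, so the relaxation is consistent with the distribution of \Cref{thm:two_stage_rpm}.
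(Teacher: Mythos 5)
Your reparameterization argument is essentially the one the paper gives for that half of the claim: the paper likewise factors the sampler into reparameterized draws of $\bm{n}$ and $\pi$ followed by the deterministic map of \Cref{eq:method_pi_to_y}, likewise isolates the hard index window $\nu_k < i \leq \nu_k + n_k$ as the only problematic step (writing $\bm{y}_k = \sum_{i=1}^n (\bm{\alpha}_k)_i \bm{\pi}_i$ with indicator vectors $\bm{\alpha}_k$, which are exactly the rows of your grouping matrix $C(\bm{n})$), and likewise relaxes that indicator as a difference of sigmoid step functions of the cumulative sums $\nu_k$ with a temperature, recovering the exact assignment matrix in the zero-temperature limit (implemented via a straight-through estimator). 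So on the sampling side you and the paper coincide.

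What your proposal omits is the other half of the paper's proof: differentiability of the probability mass function, or rather of its tractable surrogate. Since the exact PMF of \Cref{thm:two_stage_rpm} is intractable, the paper's notion of a ``differentiable'' two-stage RPM explicitly includes the requirement that the upper and lower bounds of \Cref{thm:bounds_p_Y} admit gradients with respect to $(\bm{\omega}, \bm{s})$; this is not a formality, because those bounds (not the samples alone) are what enter the KL terms of the variational objectives in \Cref{sec:experiments_clustering,sec:experiments_ws_learning}, so the lemma as used needs the likelihood surrogate to be a differentiable training signal. The missing step is short but must be stated: $|\Pi_Y| = \prod_{k=1}^K n_k!$ is a differentiable function of the reparameterized subset sizes (e.g.\ through $\sum_k \log\Gamma(n_k+1)$), and $\max_{\tilde{\pi}} p(\tilde{\pi};\bm{s}) = p(\pi_{\bm{s}};\bm{s})$, where $\pi_{\bm{s}}$ is the permutation sorting the unperturbed scores, so both bounds are differentiable compositions of the component densities assumed differentiable in the hypothesis. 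As written, your proof establishes reparameterizable sampling but not differentiability of the (approximated) PMF, which the paper treats as an essential part of the statement.
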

We provide the proof in \Cref{app:method}.
Note that \Cref{thm:drpm} enables us to learn variational posterior approximations and priors using Stochastic Gradient Variational Bayes \citep[SGVB,][]{kingma2013}.
In our experiments, we apply \cref{thm:drpm} using the recently proposed differentiable formulations of the MVHG \citep{sutter2023mvhg} and the PL distribution \citep{grover2018}, though other choices would also be valid.

\section{Experiments}
\label{sec:experiments}
We demonstrate the versatility and effectiveness of the proposed DRPM in three different experiments.
First, we propose a novel generative clustering method based on the DRPM, which we compare against state-of-the-art variational clustering methods and demonstrate its conditional generation capabilities.
Then, we demonstrate how the DRPM can infer shared and independent generative factors under weak supervision.
Finally, we apply the DRPM to multitask learning (MTL), where the DRPM enables an adaptive neural network architecture that partitions layers based on task difficulty\footnote{We provide the code under \url{https://github.com/thomassutter/drpm}}.

\subsection{Variational Clustering with Random Partition Models}
\label{sec:experiments_clustering}
In our first experiment, we introduce a new version of a Variational Autoencoder \citep[VAE,][]{kingma2013}, the DRPM Variational Clustering (DRPM-VC) model.
The DRPM-VC enables clustering and unsupervised conditional generation in a variational fashion.
To that end, we assume that each sample $\bm{x}$ of a dataset $X$ is generated by a latent vector $\bm{z}\in\mathbb{R}^l$, where $l \in \mathbb{N}$ is the latent space size.
Traditional VAEs would then assume that all latent vectors $\bm{z}$ are generated by a single Gaussian prior distribution $\mathcal{N}(\bm{0}, \mathbb{I}_l)$.
Instead, we assume every $\bm{z}$ to be sampled from one of $K$ different latent Gaussian distributions $\mathcal{N}(\bm{\mu}_k, \diag(\bm{\sigma}_k)), k = {1, \ldots, K}$, with $\bm{\mu}_k, \bm{\sigma}_k\in\mathbb{R}^l$.
Further, note that similar to an urn model (\cref{sec:method_p_n}), if we draw a batch from a given finite dataset with samples from different clusters, the cluster assignments within that batch are not entirely independent.
Since there is only a finite number of samples per cluster, drawing a sample from a specific cluster decreases the chance of drawing a sample from that cluster again, and the distribution of the number of samples drawn per cluster will follow an MVHG distribution.
Previous work on variational clustering proposes to model the cluster assignment $\bm{y}\in\{0,1\}^K$ of each sample $\bm{x}$ through independent categorical distributions \citep{jiang2016}, which might thus be over-restrictive and not correctly reflect reality.
Instead, we propose explicitly modeling the dependency between the $\bm{y}$ of different samples by assuming they are drawn from an RPM.
Hence, the generative process leading to $X$ can be summarized as follows: First, the cluster assignments are represented as a partition matrix $Y$ and sampled from our DRPM, i.e., $Y\sim p(Y;\bm{\omega}, \bm{s})$.
Given an assignment $\bm{y}$ from $Y$, we can sample the respective latent variable $\bm{z}$, where $\bm{z} \sim \mathcal{N}(\bm{\mu}_{\bm{y}},\diag(\bm{\sigma}_{\bm{y}}))$, $\bm{z}\in\mathbb{R}^l$.
Note that we use the notational shorthand $\bm{\mu}_{\bm{y}}\coloneqq \bm{\mu}_{\argmax(\bm{y})}$.
Like in vanilla VAEs, we infer $\bm{x}$ by independently passing the corresponding $\bm{z}$ through a decoder model.
Assuming this generative process, we derive the following evidence lower bound (ELBO) for $p(X)$:
\begin{align*}
    \mathcal{L}_{ELBO}= &~ \sum_{\bm{x}\in X}\mathbb{E}_{q(\bm{z}|\bm{x})}\left[\log p(\bm{x}|\bm{z})\right] - \sum_{\bm{x}\in X}\mathbb{E}_{q(Y|X)}\left[KL[q(\bm{z}|\bm{x})||p(\bm{z}|Y)]\right] - KL[q(Y|X)||p(Y)]
\end{align*}
Note that computing $KL[q(Y|X)||p(Y)]$ directly is computationally intractable, and we need to upper bound it according to \Cref{thm:bounds_p_Y}.
For an illustration of the generative assumptions and more details on the ELBO, we refer to \Cref{sec:app_experiments_clustering}. 
\begin{table}[t]
    \centering
    \caption{We compare the clustering performance of the DRPM-VC on test sets of MNIST and FMNIST between Gaussian Mixture Models (GMM), GMM in latent space (Latent GMM), and Variational Deep Embedding (VaDE). We measure performance in terms of the Normalized Mutual Information (NMI), Adjusted Rand Index (ARI), and cluster accuracy (ACC) over five seeds and put the best model in bold.}
    \label{tab:cluster_scores}
    \vskip 0.15in
    \begin{center}
    \begin{sc}
\begin{tabular}{lcccccc}
\toprule
{} & \multicolumn{3}{c}{MNIST} & \multicolumn{3}{c}{FMNIST} \\
\cmidrule(l){2-4}\cmidrule(l){5-7}
{} &                         NMI &                         ARI &                         ACC &                          NMI &                         ARI &                         ACC \\
\midrule
GMM        &  $0.32{\scriptstyle\pm0.01}$ &  $0.22{\scriptstyle\pm0.02}$ &  $0.41{\scriptstyle\pm0.01}$ &   $0.49{\scriptstyle\pm0.01}$ &  $0.33{\scriptstyle\pm0.00}$ &  $0.44{\scriptstyle\pm0.01}$ \\
Latent GMM &  $0.86{\scriptstyle\pm0.02}$ &  $0.83{\scriptstyle\pm0.06}$ &  $0.88{\scriptstyle\pm0.07}$ &   $0.60{\scriptstyle\pm0.00}$ &  $0.47{\scriptstyle\pm0.01}$ &  $0.62{\scriptstyle\pm0.01}$ \\
VaDE       &  $0.84{\scriptstyle\pm0.01}$ &  $0.76{\scriptstyle\pm0.05}$ &  $0.82{\scriptstyle\pm0.04}$ &  $0.56{\scriptstyle\pm0.02}$ &  $0.40{\scriptstyle\pm0.04}$ &  $0.56{\scriptstyle\pm0.03}$ \\
DRPM-VC       &  $\bm{0.89}{\scriptstyle\pm0.01}$ &  $\bm{0.88}{\scriptstyle\pm0.03}$ &  $\bm{0.94}{\scriptstyle\pm0.02}$ &   $\bm{0.64}{\scriptstyle\pm0.00}$ &  $\bm{0.51}{\scriptstyle\pm0.01}$ &  $\bm{0.65}{\scriptstyle\pm0.00}$ \\
\bottomrule
\end{tabular}
    \end{sc}
    \end{center}
    \vskip -0.1in
\end{table}
\begin{figure}[t]
    \centering
    \includegraphics[width=0.99\columnwidth]{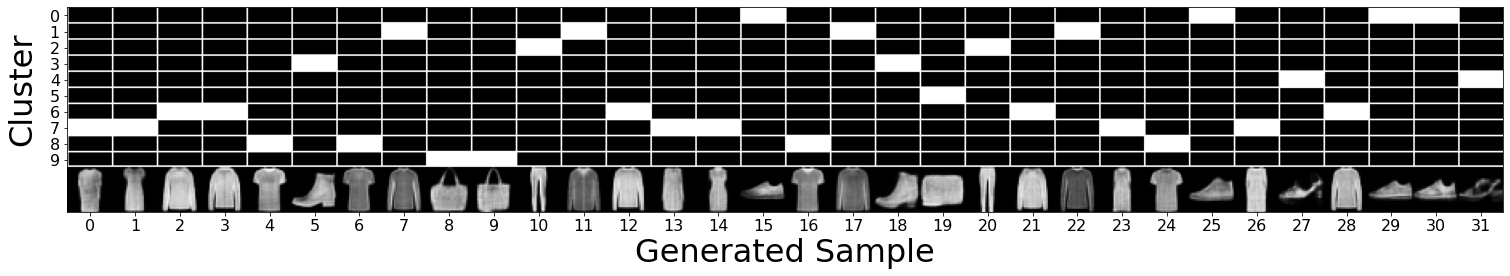}
    \caption{A sample drawn from a DRPM-VC model trained on FMNIST. On top is the sampled partition with the cluster assignments, and on the bottom are generated images corresponding to the sampled assignment matrix. The DRPM-VC learns consistent clusters for different pieces of clothing and can generate new samples of each cluster with great variability.}
    \label{fig:drpm_partiton_samples}
\end{figure}

To assess the clustering performance, we train our model on two different datasets, namely MNIST \citep{lecun1998} and  Fashion-MNIST \citep[FMNIST,][]{xiao_fashion-mnist_2017}, and compare it to three baselines.
Two of the baselines are based on a Gaussian Mixture Model, where one is directly trained on the original data space (GMM), whereas the other takes the embeddings from a pretrained encoder as input (Latent GMM).
The third baseline is Variational Deep Embedding \citep[VaDE,][]{jiang2016}, which is similar to the DRPM-VC but assumes \emph{i.i.d.} categorical cluster assignments.
For all methods except GMM, we use the weights of a pretrained encoder to initialize the models and priors at the start of training. 
We present the results of these experiments in Table~\ref{tab:cluster_scores}.
As can be seen, we outperform all baselines, indicating that modeling the inherent dependencies implied by finite datasets benefits the performance of variational clustering.
While achieving decent clustering performance, another benefit of variational clustering methods is that their reconstruction-based nature intrinsically allows unsupervised conditional generation.
In \Cref{fig:drpm_partiton_samples}, we present the result of sampling a partition and the corresponding generations from the respective clusters after training the DRPM-VC on FMNIST.
The model produces coherent generations despite not having access to labels, allowing us to investigate the structures learned by the model more closely.
We refer to \Cref{sec:app_experiments_clustering} for more illustrations of the learned clusters, details on the training procedure, and ablation studies.


\begin{table*}
\caption{
Partitioning of Generative Factors.
We evaluate the learned latent representations of the four methods (Label-VAE, Ada-VAE, HG-VAE, DRPM-VAE) with respect to the shared (S) and independent (I) generative factors.
We do this by fitting linear classifiers on the shared and independent dimensions of the representation, predicting the respective generative factors.
We report the results in adjusted balanced accuracy \citep{sutter2023mvhg} across five seeds.
}
\label{tab:exp_ws_downstream_task}
\vskip 0.15in
\begin{center}
\begin{small}
\begin{sc}
\begin{tabular}{lccccccc}
\toprule
&$n_s=0$&\multicolumn{2}{c}{$n_s=1$}&\multicolumn{2}{c}{$n_s=3$}&\multicolumn{2}{c}{$n_s=5$}\\
\cmidrule(l){2-2}\cmidrule(l){3-4}\cmidrule(l){5-6}\cmidrule(l){7-8}
& I& S & I& S & I& S & I\\
\midrule
Label&$0.14{\scriptstyle\pm 0.01}$&$0.19{\scriptstyle\pm 0.03}$&$0.16{\scriptstyle\pm 0.01}$&$0.10{\scriptstyle\pm 0.00}$&$0.23{\scriptstyle\pm 0.01}$&$0.34{\scriptstyle\pm 0.00}$&$0.00{\scriptstyle\pm 0.00}$\\
\midrule
Ada&$0.12{\scriptstyle\pm 0.01}$&$0.19{\scriptstyle\pm 0.01}$&$0.15{\scriptstyle\pm 0.01}$&$0.10{\scriptstyle\pm 0.03}$&$0.22{\scriptstyle\pm 0.02}$&$0.33{\scriptstyle\pm 0.03}$&$0.00{\scriptstyle\pm 0.00}$\\
\midrule
HG &$0.18{\scriptstyle\pm 0.01}$&$0.22{\scriptstyle\pm 0.05}$&$0.19{\scriptstyle\pm 0.01}$&$0.08{\scriptstyle\pm 0.02}$&$0.28{\scriptstyle\pm 0.01}$&$0.28{\scriptstyle\pm 0.01}$&$0.01{\scriptstyle\pm 0.00}$\\
\midrule
DRPM &$\bm{0.26}{\scriptstyle\pm0.02}$&$\bm{0.39}{\scriptstyle\pm0.07}$&$\bm{0.2}{\scriptstyle\pm0.01}$&$\bm{0.15}{\scriptstyle\pm0.01}$&$\bm{0.29}{\scriptstyle\pm0.02}$&$\bm{0.42}{\scriptstyle\pm0.03}$&$0.01{\scriptstyle\pm0.00}$\\
\bottomrule
\end{tabular}
\end{sc}
\end{small}
\end{center}
\vskip -0.3in
\end{table*}
\subsection{Variational Partitioning of Generative Factors}
\label{sec:experiments_ws_learning}
\begin{wrapfigure}[16]{r}{0.4\textwidth}
    \vspace*{-1.3\baselineskip}
    \centering
    \includegraphics[width=0.39\textwidth]{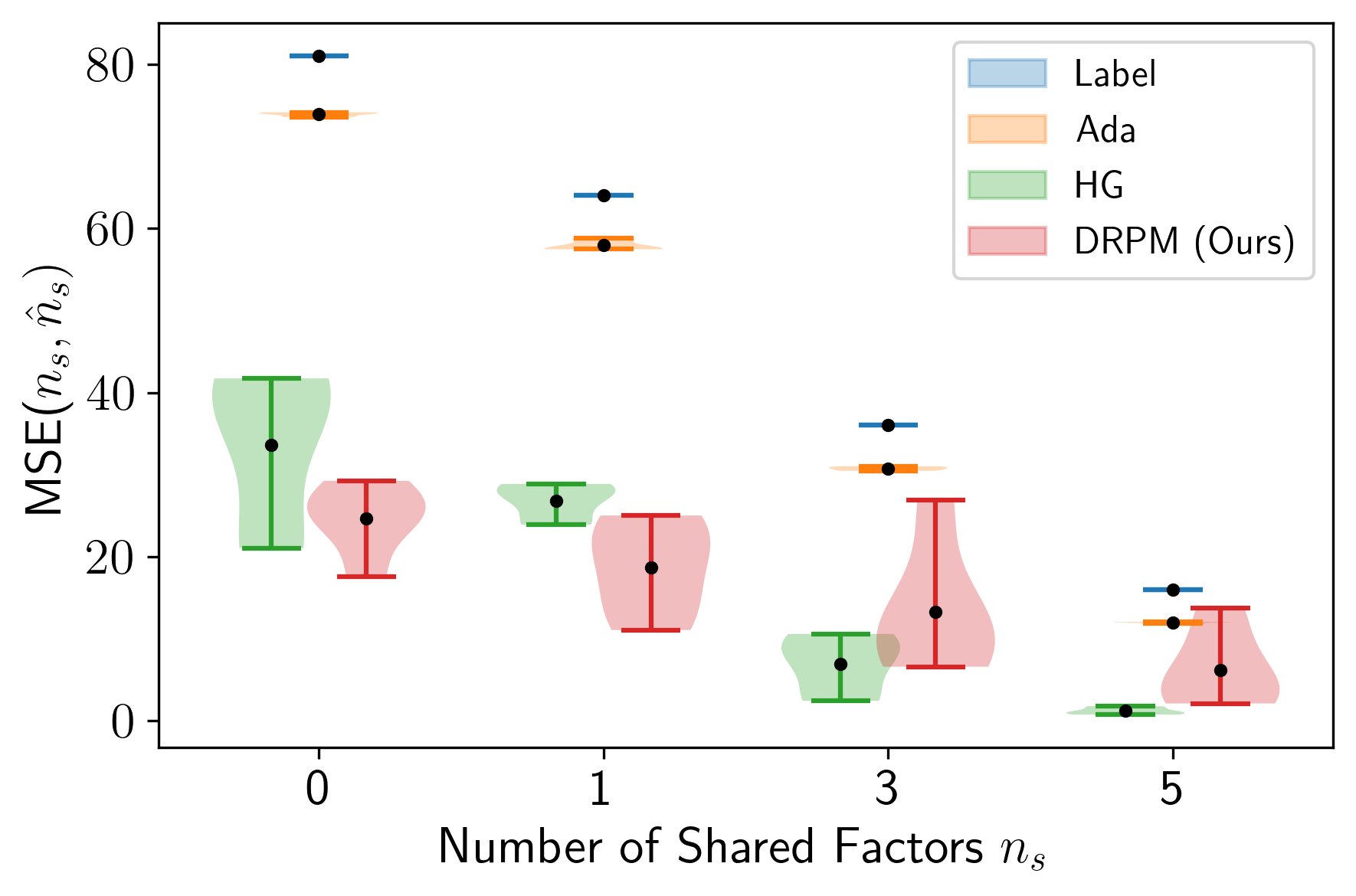}
    \caption{
    The mean squared errors between the estimated number of shared factors $\hat{n}_s$ and the true number of shared factors $n_s$ across five seeds for the Label-VAE, Ada-VAE, HG-VAE, and DRPM-VAE.
    }
    \label{fig:exp_ws_n_shared_factors}
\end{wrapfigure}
Data modalities not collected as \emph{i.i.d.} samples, such as consecutive frames in a video, provide a weak-supervision signal for generative models and representation learning \citep{sutter2023mvhg}.
Here, on top of learning meaningful representations of the data samples, we are also interested in discovering the relationship between coupled samples. 
If we assume that the data is generated from underlying generative factors, weak supervision comes from the fact that we know that certain factors are shared between coupled pairs while others are independent.
The supervision is weak because we neither know the underlying generative factors nor the number of shared and independent factors.
In such a setting, we can use the DRPM to learn a partition of the generative factors and assign them to be either shared or independent.

In this experiment, we use paired frames $\bm{X} = [\bm{x}_1, \bm{x}_2]$ from the \emph{mpi3d} dataset \citep{gondal2019}.
Every pair of frames shares a subset of its seven generative factors.
We introduce the DRPM-VAE, which models the division of the latent space into shared and independent latent factors as RPM.
We add a posterior approximation $q(Y \mid \bm{X})$ and additionally a prior distribution of the form $p(Y)$.
The model maximizes the following ELBO on the marginal log-likelihood of images through a VAE \citep{kingma2013}:
\begin{align}
    \mathcal{L}_{ELBO} =  &~ \sum_{j=1}^2 \E_{q(\bm{z}_s, \bm{z}_j, Y \mid \bm{X})} \left[ \log p(\bm{x}_j \mid \bm{z}_s, \bm{z}_j) \right]  \\
    &~ - \E_{q(Y \mid \bm{X})} \left[ KL \left[ q(\bm{z}_s, \bm{z}_1, \bm{z}_2 \mid Y, \bm{X}) || p(\bm{z}_s, \bm{z}_1, \bm{z}_2) \right] \right] - KL \left[ q(Y \mid \bm{X}) ||~ p(Y) \right] \nonumber
\end{align}
Similar to the ELBO for variational clustering in \Cref{sec:experiments_clustering}, computing $KL \left[ q(Y \mid \bm{X}) ||~ p(Y) \right]$ directly is intractable, and we need to bound it according to \Cref{thm:bounds_p_Y}.

We compare the proposed DRPM-VAE to three methods, which only differ in how they infer shared and latent dimensions.
While the Label-VAE \citep{bouchacourt2018,hosoya2018} assumes that the number of independent factors is known, the Ada-VAE \citep{locatello2020weakly} relies on a heuristic-based approach to infer shared and independent latent factors.
Like in \citet{locatello2020weakly} and \citet{sutter2023mvhg}, we assume a single known factor for Label-VAE in all experiments.
HG-VAE \citep{sutter2023mvhg} also relies on the MVHG to model the number of shared and independent factors.
Unlike the proposed DRPM-VAE approach, HG-VAE must rely on a heuristic to assign latent dimensions to shared factors, as the MVHG only allows to model the number of shared and independent factors but not their position in the latent vector.
We use the code from \citet{locatello2020weakly} and follow the evaluation in \citet{sutter2023mvhg}.
We refer to \Cref{sec:app_experiments_ws_learning} for details on the ELBO, the setup of the experiment, the implementation, and an illustration of the generative assumptions.

We evaluate all methods according to their ability to estimate the number of shared generative factors (\Cref{fig:exp_ws_n_shared_factors}) and how well they partition the latent representations into shared and independent factors (\Cref{tab:exp_ws_downstream_task}).
Because we have access to the data-generating process, we can control the number of shared $n_s$ and independent $n_i$ factors.
We compare the methods on four different datasets with $n_s \in \{0, 1, 3, 5 \}$.
In \Cref{fig:exp_ws_n_shared_factors}, we demonstrate that the DRPM-VAE accurately estimates the true number of shared generative factors.
It matches the performance of HG-VAE and outperforms the other two baselines, which consistently overestimate the true number of shared factors.
In \Cref{tab:exp_ws_downstream_task}, we see a considerable performance improvement compared to previous work when assessing the learned latent representations.
We attribute this to our ability to not only estimate the subset sizes of latent and shared factors like HG-VAE but also learn to assign specific latent dimensions to the corresponding shared or independent representations. 
Thus, the DRPM-VAE dynamically learns more meaningful representations and can better separate and infer the shared and independent subspaces for all dataset versions.

The DRPM-VAE provides empirical evidence of how RPMs can leverage weak supervision signals by learning to maximize the data likelihood while also inferring representations that capture the relationship between coupled data samples.
Additionally, we can explicitly model the data-generating process in a theoretically grounded fashion instead of relying on heuristics.

\begin{wrapfigure}[26]{R}{0.45\textwidth}
    \vspace*{-1.5em}
    \centering
    \includegraphics[width=0.44\columnwidth]{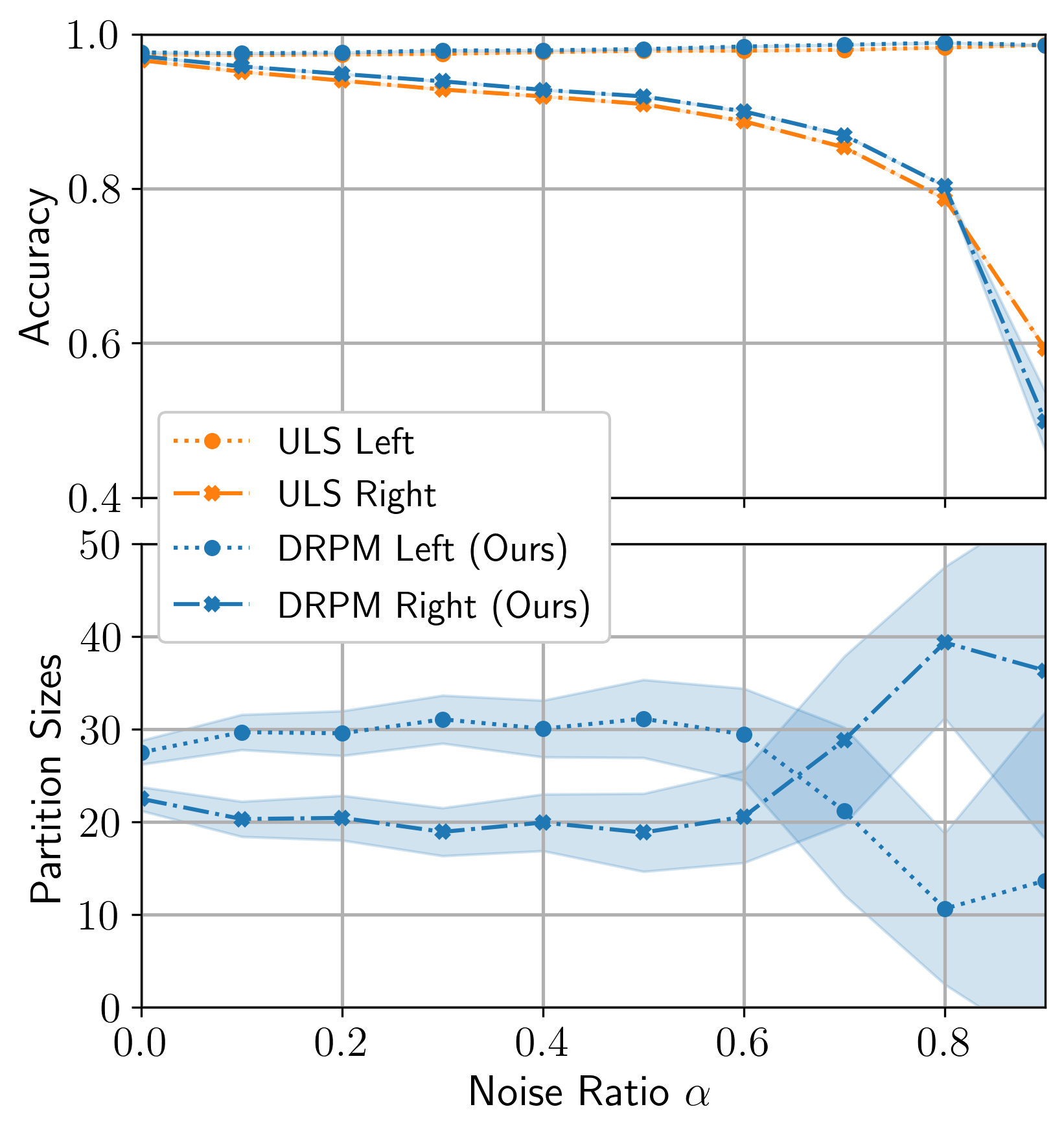}
    \caption{
    Results for noisyMultiMNIST experiment.
    In the upper plot, we compare the task accuracy of the two methods ULS and the DRPM-MTL.
    We see that the DRPM-MTL can reach higher accuracy for most of the different noise ratios $\alpha$ while it assigns the number of dimensions per task according to their difficulty.
    }
    \label{fig:exp_mtl_multimnist}
\end{wrapfigure}

\subsection{Multitask Learning}
\label{sec:exp_multitask}
Many ML applications aim to solve specific tasks, where we optimize for a single objective while ignoring potentially helpful information from related tasks.
Multitask learning (MTL) aims to improve the generalization across all tasks, including the original one, by sharing representations between related tasks \citep{caruana1993,caruana1996} 
Recent works \citep{kurin2022,xin2022} show that it is difficult to outperform a convex combination of task losses if the task losses are appropriately scaled.
I.e., in case of equal difficulty of the two tasks, a classifier with equal weighting of the two classification losses serves as an upper bound in terms of performance.
However, finding suitable task weights is a tedious and inefficient approach to MTL.
A more automated way of weighting multiple tasks would thus be vastly appreciated. 

In this experiment, we demonstrate how the DRPM can learn task difficulty by partitioning a network layer.
Intuitively, a task that requires many neurons is more complex than a task that can be solved using a single neuron.
Based on this observation, we propose the DRPM-MTL.
The DRPM-MTL learns to partition the neurons of the last shared layer such that only a subset of the neurons are used for every task.
In contrast to the other experiments (\Cref{sec:experiments_clustering,sec:experiments_ws_learning}), we use the DRPM without resampling and infer the partition $Y$ as a deterministic function.
This can be done by applying the two-step procedure of \Cref{thm:two_stage_rpm} but skipping the resampling step of the MVHG and PL distributions.
We compare the DRPM-MTL to the unitary loss scaling method \citep[ULS,][]{kurin2022}, which has a fixed architecture and scales task losses equally.
Both DRPM-MTL and ULS use a network with shared architecture up to some layer, after which the network branches into two task-specific layers that perform the classifications.
Note the difference between the methods.
While the task-specific branches of the ULS method access all neurons of the last shared layer, the task-specific branches of the DRPM-MTL access only the subset of neurons reserved for the respective task.

We perform experiments on MultiMNIST \citep{sabour2017}, which overlaps two MNIST digits in one image, and we want to classify both numbers from a single sample. 
Hence, the two tasks, classification of the left and the right digit (see \Cref{sec:app_experiments_mtl} for an example), are approximately equal in difficulty by default.
To increase the difficulty of one of the two tasks, we introduce the noisyMultiMNIST dataset.
There, we control task difficulty by adding salt and pepper noise to one of the two digits, subsequently increasing the difficulty of that task with increasing noise ratios.
Varying the noise, we evaluate how our DRPM-MTL adapts to imbalanced difficulties, where one usually has to tediously search for optimal loss weights to reach good performance.
We base our pipeline on \citep{sener2018}.
For more details and additional CelebA MTL experiments we refer to \Cref{sec:app_experiments_mtl}.

We evaluate the DRPM-MTL concerning its classification accuracy on the two tasks and compare the inferred subset sizes per task for different noise ratios $\alpha \in \{0.0, \ldots, 0.9\}$ of the noisyMultiMNIST dataset (see \Cref{fig:exp_mtl_multimnist}).
The DRPM-MTL achieves the same or better accuracy on both tasks for most noise levels (upper part of~\Cref{fig:exp_mtl_multimnist}).
It is interesting to see that, the more we increase $\alpha$, the more the DRPM-MTL tries to overcome the increased difficulty of the right task by assigning more dimensions to it (lower part of \Cref{fig:exp_mtl_multimnist}, noise ratio $\alpha$ $0.6$-$0.8$).
Note that for the maximum noise ratio of $\alpha=0.9$, it seems that the DRPM-MTL basically surrenders and starts neglecting the right task, instead focusing on getting good performance on the left task, which impacts the average accuracy.

\section*{Limitations \& Future Work}
\label{sec:limitations}
The proposed two-stage approach to RPMs requires distributions over subset sizes and permutation matrices.
The memory usage of the permutation matrix used in the two-stage RPM increases quadratically in the number of elements $n$.
Although we did not experience memory issues in our experiments, this may lead to problems when partitioning vast sets.
Furthermore, learning subsets by first inferring an ordering of all elements can be a complex optimization problem. 
Approaches based on minimizing the earth mover’s distance \citep{monge1781} to learn subset assignments could be an alternative to the ordering-based approach in our DRPM and pose an interesting direction for future work.
Finally, note that we compute the probability mass function (PMF) $p(Y;\bm{\omega}, \bm{s})$ by approximating it with the bounds in \Cref{thm:bounds_p_Y}.
While the upper bound is tight when all scores have similar magnitude, the bound loosens if scores differ a lot, leading \cref{eq:bounds_p_Y} to overestimate the value of the PMF.
In practice, we thus reweight the respective terms in the loss function, but in the future, we will investigate better estimates for the PMF.

Ultimately, we are interested in exploring how to apply the DRPM to multimodal learning under weak supervision, for instance, in medical applications.
\Cref{sec:experiments_ws_learning} demonstrated the potential of learning from coupled samples, but further research is needed to ensure fairness concerning underlying, hidden attributes when working with sensitive data.

\section*{Conclusion}
\label{sec:conclusion}
In this work, we proposed the differentiable random partition model, a novel approach to random partition models.
Our two-stage method enables learning partitions end-to-end by separately controlling subset sizes and how elements are assigned to subsets.
This new approach to partition learning enables the integration of random partition models into probabilistic and deterministic gradient-based optimization frameworks.
We show the versatility of the proposed differentiable random partition model by applying it to three vastly different experiments.
We demonstrate how learning partitions enables us to explore the modes of the data distribution, infer shared and independent generative factors from coupled samples, and learn task-specific sub-networks in applications where we want to solve multiple tasks on a single data point.
\section*{Acknowledgements}
\label{sec:acknowledgements}
AR is supported by the StimuLoop grant \#1-007811-002 and the Vontobel Foundation.
TS is supported by the grant \#2021-911 of the Strategic Focal Area “Personalized Health and Related Technologies (PHRT)” of the ETH Domain (Swiss Federal Institutes of Technology).
\bibliographystyle{abbrvnatnourl}
\bibliography{references}

\newpage
\appendix
\onecolumn

\section{Preliminaries}
\label{sec:app_preliminaries}

\subsection{Hypergeometric Distribution}
\label{subsec:app_prelim_mvhg}
This part is largely based on \citet{sutter2023mvhg}.

Suppose we have an urn with marbles in different colors.
Let $K \in \mathbb{N}$ be the number of different classes or groups (e.g. marble colors in the urn), $\bm{m} = [m_1, \ldots, m_K] \in \mathbb{N}^K$ describe the number of elements per class (e.g. marbles per color), $N = \sum_{k=1}^K m_K$ be the total number of elements (e.g. all marbles in the urn) and $n \in \{0, \ldots, N \}$ be the number of elements (e.g. marbles) to draw.
Then, the multivariate hypergeometric distribution describes the probability of drawing $\bm{n} = [n_1, \ldots, n_K]\in\mathbb{N}^K$ marbles by sampling without replacement such that $\sum_{k=1}^K n_k = n$, where $n_k$ is the number of drawn marbles of class $k$.

In the literature, two different versions of the noncentral hypergeometric distribution exist, Fisher's \citep{fisher1935} and Wallenius' \citep{wallenius1963,chesson1976} distribution.
\citet{sutter2023mvhg} restrict themselves to Fisher's noncentral hypergeometric distribution due to limitations of the latter \citep{fog2008calculation}.
Hence, we will also talk solely about Fisher's noncentral hypergeometric distribution.

\begin{definition}[Multivariate Fisher's Noncentral Hypergeometric Distribution \citep{fisher1935}]
\label{def:hypergeom_fisher}
A random vector $\bm{X}$ follows Fisher's noncentral multivariate distribution, if its joint probability mass function is given by 
\begin{align}
\label{eq:hypergeom_fisher_pmf}
    P(\bm{N} = \bm{n}; \bm{\omega}) &= p(\bm{n}; \bm{\omega}) = \frac{1}{P_0} \prod_{k=1}^{K} \binom{m_k}{n_k} \omega_k^{n_k} \hspace{0.25cm} \\ 
    \text{where} \hspace{0.25cm} P_0 &= \sum_{(\eta_1, \ldots, \eta_K) \in \mathcal{S}} \prod_{k=1}^K \binom{m_k}{\eta_k} \omega_{k}^{\eta_k}
\end{align}
The support $S$ of the PMF is given by $S = \{ \bm{n} \in \mathbb{N}^{K} : \forall k \quad n_k \leq m_k, \sum_{k=1}^K n_k = n \}$ and $\binom{n}{k} = \frac{n!}{k!(n-k)!}$.
\end{definition}
The class importance $\bm{\omega}$ is a crucial modeling parameter in applying the noncentral hypergeometric distribution (see \citep{chesson1976}).
\subsubsection{Differentiable MVHG}
\label{sec:app_differentiable_mvhg}
Their reparameterizable sampling for the differentiable MVHG consists of three parts:
\begin{enumerate}
    \item Reformulate the multivariate distribution as a sequence of interdependent and conditional univariate hypergeometric distributions.
    \item Calculate the probability mass function of the respective univariate distributions.
    \item Sample from the conditional distributions utilizing the Gumbel-Softmax trick.
\end{enumerate}
Following the chain rule of probability, the MVHG distribution allows for sequential sampling over classes $k$.
Every step includes a merging operation, which leads to biased samples compared to groundtruth non-differentiable sampling with equal class weights $\bm{\omega}$.
Given that we intend to use the differentiable MVHG in settings where we want to learn the unknown class weights, we do not expect a negative effect from this sampling procedure.
For details on how to merge the MVHG into a sequence of unimodal distributions, we refer to \citet{sutter2023mvhg}.

The probability mass function calculation is based on unnormalized log-weights, which are interpreted as unnormalized log-weights of a categorical distribution.
The interpretation of the class-conditional unimodal hypergeometric distributions as categorical distributions allows applying the Gumbel-Softmax trick \citep{jang2016,maddison2017}.
Following the use of the Gumbel-Softmax trick, the class-conditional version of the hypergeometric distribution is differentiable and reparameterizable.
Hence, the MVHG has been made differentiable and reparameterizable as well.
Again, for details we refer to the original paper \citep{sutter2023mvhg}.

\subsection{Distribution over Random Orderings}
\label{app:method_distribution_permutation}
\citet{yellott1977} show that the distribution over permutation matrices $p(\pi; \bm{s})$ follows a Plackett-Luce (PL) distribution \citep{plackett1975,luce1959}, if and only of the perturbed scores $\bm{\tilde{s}}$ are sampled independently from Gumbel distributions with identical scales.
For each item $i$, sample $g_i \sim \text{Gumbel}(0, \beta)$ independently with zero mean and and fixed scale $\beta$. 
Let $\bm{\tilde{s}}$ be the vector of Gumbel perturbed log-weights such that $\tilde{s}_i = \beta \log s_i + g_i$.
Hence, 
\begin{align}
\label{eq:plackett_luce_permutation}
     q(\tilde{s}_1 \geq \cdots \geq \tilde{s}_n) & = \frac{s_1}{Z} \cdot \frac{s_2}{Z - s_1} \cdot \cdots \cdot \frac{s_n}{Z - \sum_{i=1}^{n-1} s_i}
\end{align}
We refer to \citet{yellott1977} or \citet{grover2018} for the proof.
However, \citet{grover2018} provide only an adapted proof sketch from \citet{yellott1977}.
The probability of sampling element $i$ first is given by its score $s_i$  divided by the sum of all weights in the set 
\begin{align}
\label{eq:plackett_luce_single_element}
    q(\tilde{s}_i) = \frac{s_i}{Z}
\end{align}
For $z_i = \log s_i$, the right hand side of \Cref{eq:plackett_luce_single_element} is equal to the softmax distribution $\text{softmax}(z_i) = \exp(z_i) / \sum_j \exp (z_j)$ as already described in \citep{xie_reparameterizable_2019}.
Hence, \Cref{eq:plackett_luce_single_element} directly leads to the Gumbel-Softmax trick \citep{jang2016,maddison2017}.

\subsubsection{Differentiable Sorting}
\label{sec:app_preliminaries_differentiable_sorting}
In the main text of the paper we rely on a differentiable function $f_\pi(\bm{\tilde{s}})$, which sorts the resampled version of the scores $\bm{s}$
\begin{align}
    \pi = f_\pi(\bm{\tilde{s}}) = \text{sort}(\bm{\tilde{s}})
\end{align}
Here, we summarise the findings from \citet{grover2018} on how to construct such a differentiable sorting operator.
As already mentioned in \Cref{sec:related_work}, there are multiple works on the topic \citep{prillo2020,petersen2021,mena2018}, but we restrict ourselves to the work of \citet{grover2018} as we see the differentiable generation of permutation matrices as a tool in our pipeline.

\begin{corollary}[Permutation Matrix \citep{grover2018}]
\label{thm:permutation_matrix_argmax}
Let $\bm{s} = [s_1, \ldots, s_n]^T$ be a real-valued vector of length $n$.
Let $A_{\bm{s}}$ denote the matrix of absolute pairwise differences of the elements of $\bm{s}$ such that $A_{\bm{s}}[i, j] = |s_i - s_j|$.
The permutation matrix $\pi$ corresponding to $\text{sort}(\bm{s})$ is given by:
\begin{align}
\pi =
\begin{dcases*}
1 & if  $j = \argmax[(n + 1 -2i)\bm{s} - A_{\bm{s}}\mathbbm{1}]$\\
0 & otherwise
\end{dcases*}
\end{align}
where $\mathbbm{1}$ denotes the column vector of all ones.
\end{corollary}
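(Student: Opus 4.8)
The plan is to show, for each fixed row index $i$, that the $j$-th entry of the vector $(n+1-2i)\bm{s} - A_{\bm{s}}\mathbbm{1}$, namely $g_i(j) = (n+1-2i)s_j - \sum_{k=1}^n |s_j - s_k|$, is maximized exactly by the index whose score is the $i$-th largest. Since $\text{sort}(\bm{s})$ places the $i$-th largest score at position $i$ and the permutation-matrix convention sets $\pi[i,j]=1$ iff element $j$ sits at position $i$, this immediately identifies the stated argmax with the correct permutation matrix. Without loss of generality I would relabel the elements by their rank, writing $a_1 \geq a_2 \geq \cdots \geq a_n$ for the order statistics of $\bm{s}$ (assuming distinct values, so the sort is unique), and let $\sigma(r)$ denote the original index of the $r$-th largest score; it then suffices to prove $\argmax_{r} g_i(\sigma(r)) = i$.

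First I would evaluate the sum of absolute differences in closed form. For the element of rank $r$, splitting the sum according to whether the other elements are larger or smaller gives
\begin{align}
\sum_{l=1}^n |a_r - a_l| = \sum_{l<r}(a_l - a_r) + \sum_{l>r}(a_r - a_l) = \sum_{l<r} a_l - \sum_{l>r} a_l + (n-2r+1)\,a_r. \nonumber
\end{align}
Substituting this into $g_i$ yields the clean expression
\begin{align}
g_i(\sigma(r)) = 2(r-i)\,a_r - \sum_{l<r} a_l + \sum_{l>r} a_l, \nonumber
\end{align}
where the coefficient of $a_r$ has collapsed from $(n+1-2i)-(n-2r+1)$ to $2(r-i)$.

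The key step is then a discrete monotonicity argument on $r$. I would compute the consecutive difference $g_i(\sigma(r)) - g_i(\sigma(r+1))$; the two partial sums telescope and leave only the boundary terms $a_r$ and $a_{r+1}$, giving
\begin{align}
g_i(\sigma(r)) - g_i(\sigma(r+1)) = \big(2(r-i)+1\big)\,(a_r - a_{r+1}). \nonumber
\end{align}
Because $a_r - a_{r+1} > 0$, the sign of this difference is governed entirely by $2(r-i)+1$: negative for $r < i$ and positive for $r \geq i$, so the sequence $g_i(\sigma(\cdot))$ strictly increases up to $r=i$ and strictly decreases afterwards, making $r=i$ the unique maximizer and hence $\sigma(i)$ the argmax. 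I expect the main obstacle to be precisely obtaining the simplified difference in the displayed form: the cancellation of the partial sums and the fusion of the $i$- and $r$-dependent coefficients into the single factor $2(r-i)+1$ is what makes the sign analysis transparent, and getting the index bookkeeping right (which elements count as larger versus smaller, together with the edge cases $r=1$ and $r=n$) is the step most prone to error. Ties among the scores would only render the maximizer non-unique, matching the ambiguity inherent in the sort itself, and thus pose no real difficulty.
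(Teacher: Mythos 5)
Your proof is correct, and it is worth noting at the outset that the paper itself never proves this statement: the corollary is imported verbatim from \citet{grover2018}, and the appendix explicitly defers to that work for the derivation, so the only meaningful comparison is with the original source rather than with anything in the paper. Your three computational steps all check out: the closed form $\sum_{l}|a_r-a_l|=\sum_{l<r}a_l-\sum_{l>r}a_l+(n-2r+1)\,a_r$, the collapse of the rank-$r$ coefficient to $2(r-i)$, and the telescoped difference $g_i(\sigma(r))-g_i(\sigma(r+1))=\bigl(2(r-i)+1\bigr)(a_r-a_{r+1})$ are each correct; since $2(r-i)+1$ is an odd integer, it is at most $-1$ when $r<i$ and at least $1$ when $r\geq i$, so $r\mapsto g_i(\sigma(r))$ is strictly unimodal with unique peak at $r=i$, and by the paper's convention that $\pi_{ij}=1$ means element $j$ occupies position $i$ of the decreasing sort, this is exactly the claim. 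Your route is also genuinely different from the one in \citet{grover2018}: there, the corollary is deduced from the Ogryczak--Tamir variational characterization of order statistics, namely that $s_{[k]}$ minimizes $\lambda\mapsto k\lambda+\sum_i \max(s_i-\lambda,0)$, with the minimization then restricted to the entries of $\bm{s}$. Your discrete unimodality argument is more elementary and fully self-contained, needing no external lemma; what the variational formulation buys in exchange is an identity that Grover et al.\ reuse for their other relaxation results, so neither approach dominates.

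One caveat: your closing remark on ties is too casual. With repeated scores, the per-row argmaxes, if broken arbitrarily and independently, can select the same column $j$ for two distinct rows $i$, in which case the displayed formula does not return a permutation matrix at all --- not merely an ambiguous one. The statement genuinely requires distinct entries (in \citet{grover2018} this holds almost surely because the scores are Gumbel-perturbed, hence have a continuous distribution), so your parenthetical ``assuming distinct values'' should be promoted to a standing hypothesis of the corollary rather than dismissed as harmless ambiguity.
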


As we know, the $\argmax$ operator is non-differentiable which prohibits the direct use of \Cref{thm:permutation_matrix_argmax} for gradient computation.
Hence, \citet{grover2018} propose to replace the $\argmax$ operator with $\softmax$ to obtain a continuous relaxation $\pi(\tau)$ similar to the GS trick \citep{jang2016,maddison2017}.
In particular, the $i$th row of $\pi(\tau)$ is given by:
\begin{align}
    \pi(\tau)[i, :] = \softmax[(n + 1 -2i)\bm{s} - A_{\bm{s}}\mathbbm{1}/\tau]
\end{align}
where $\tau > 0$ is a temperature parameter.
We adapted this section from \citet{grover2018} and we also refer to their original work for more details on how to generate differentiable permutation matrices.

In this, work we remove the temperature parameter $\tau$ to reduce clutter in the notation.
Hence, we only write $\pi$ instead of $\pi(\tau)$, although it is still needed for the generation of the matrix $\pi$.
For details on how we select the temperature parameter $\tau$ in our experiments, we refer to \Cref{sec:app_experiments}.

\section{Detailed Derivation of the Differentiable Two-Stage Random Partition Model}
\label{app:method}
\subsection{Two-Stage Partition Model}
\label{app:method_two_stage_partition_model}
We want to partition $n$ elements $[n] = \{1, \ldots, n \}$ into $K$ subsets $\{ \mathcal{S}_1, \ldots, \mathcal{S}_K \}$ where $K$ is \textit{a priori} unknown.
\begin{definition}[Partition]
\label{def:partition_model}
A partition $\rho$ of a set of elements $[n] = \{ 1, \ldots, n \}$ is a collection of subsets $(\mathcal{S}_1, \ldots, \mathcal{S}_K)$ such that
\begin{align}
    \mathcal{S}_1 \cup \cdots \cup \mathcal{S}_K = [n] \hspace{0.25cm} \text{and} \hspace{0.25cm} \forall i \neq j:\hspace{0.1cm}  \mathcal{S}_i \cap \mathcal{S}_j = \emptyset
\end{align}
\end{definition}
Put differently, every element $i$ has to be assigned to precisely one subset $\mathcal{S}_k$.
We denote the size of the $k$-th subset $\mathcal{S}_k$ as $n_k = | \mathcal{S}_k |$.
Alternatively, we describe a partition $\rho$ as an assignment matrix $Y = [\bm{y}_1, \ldots, \bm{y}_K]^T \in \{0, 1 \}^{K\times n}$.
Every row $\bm{y}_k \in \{0, 1\}^{1 \times n}$ is a multi-hot vector, where $\bm{y}_{ki} = 1$ assigns element $i$ to subset $\mathcal{S}_k$.

In this work, we propose a new two-stage procedure to learn partitions.
The proposed formulation separately infers the number of elements per subset $n_k$ and the assignment of elements to subsets $\mathcal{S}_k$ by inducing an order on the $n$ elements and filling $\mathcal{S}_1,...,\mathcal{S}_K$ sequentially in this order. See \Cref{fig:basic_method} for an example.

\begin{definition}[Two-stage partition model]
\label{def:two_stage_partition_model}
Let $\bm{n}~=~[n_1, \ldots, n_K] \in \mathbb{N}_0^K$ be the subset sizes in $\rho$, with $\mathbb{N}_0$ the set of natural numbers including $0$ and $\sum_{k=1}^K n_k = n$, where $n$ is the total number of elements.
Let $\pi \in \{0, 1\}^{n \times n}$ be a permutation matrix that defines an order over the $n$ elements.
We define the two-stage partition model of $n$ elements into $K$ subsets as an assignment matrix $Y =[\bm{y}_1, \ldots, \bm{y}_K]^T \in \{0, 1\}^{K \times n}$ with
\begin{align}
    \bm{y}_k &= \sum_{i=\nu_k+1}^{\nu_k + n_k} \bm{\pi}_{i}, \hspace{0.5cm} \text{where} \hspace{0.25cm} \nu_k = \sum_{\iota=1}^{k-1} n_{\iota}
\end{align}
such that $Y = [\{\bm{y}_k \mid n_k > 0\}_{k=1}^K]^T$.
\end{definition}
Note that in contrast to previous work on partition models \citep{mansour2016}, we allow $\mathcal{S}_k$ to be the empty set $\emptyset$.
Hence, $K$ defines the maximum number of possible subsets, not the effective number of non-empty subsets.

To model the order of the elements, we use a permutation matrix $\pi = [\bm{\pi}_1, \ldots, \bm{\pi}_n]^T \in \{0, 1 \}^{n \times n}$ which is a square matrix where every row and column sums to $1$.
This doubly-stochastic property of all permutation matrices $\pi$ \citep{marcus1960} thus ensures that the columns of $Y$ remain one-hot vectors.
At the same time, its rows correspond to $n_k$-hot vectors $\bm{y}_k$ in \Cref{def:two_stage_partition_model} and therefore serve as subset assignment vectors.

\begin{corollary}
\label{thm:two_stage_partition_model}
A two-stage partition model $Y$, which follows \Cref{def:two_stage_partition_model}, is a valid partition satisfying \cref{def:partition_model}.
\end{corollary}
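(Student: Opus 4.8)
The plan is to verify the two defining conditions of \Cref{def:partition_model}, namely that $\mathcal{S}_1 \cup \cdots \cup \mathcal{S}_K = [n]$ and that the $\mathcal{S}_k$ are pairwise disjoint, by showing directly that every \emph{column} of the assignment matrix $Y$ from \Cref{def:two_stage_partition_model} is one-hot. Recall that column $j$ of $Y$ being one-hot means element $j$ is assigned to precisely one subset, i.e. $\sum_{k=1}^K Y_{kj} = 1$ with each $Y_{kj} \in \{0,1\}$; this single statement is equivalent to the conjunction of the coverage and disjointness requirements, so proving it suffices.

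First I would observe that the consecutive index blocks $B_k = \{\nu_k+1, \ldots, \nu_k+n_k\}$ used to define $\bm{y}_k$ themselves partition the row index set $\{1,\ldots,n\}$ of $\pi$. This follows from $\nu_1 = 0$, the recursion $\nu_{k+1} = \nu_k + n_k$, and the constraint $\sum_{k=1}^K n_k = n$, so that the blocks are disjoint, consecutive, and exhaust $\{1,\ldots,n\}$. The crux of the argument then uses the doubly-stochastic property of the permutation matrix $\pi$: fixing an element (column) $j$, I would compute
\begin{align}
    \sum_{k=1}^K (\bm{y}_k)_j = \sum_{k=1}^K \sum_{i \in B_k} \pi_{ij} = \sum_{i=1}^n \pi_{ij} = 1,
\end{align}
where the middle equality uses that the blocks $B_k$ partition the rows, and the final equality is precisely the unit column sum of a permutation matrix \citep{marcus1960}.

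Since each $(\bm{y}_k)_j = \sum_{i \in B_k} \pi_{ij}$ is a sum of entries in $\{0,1\}$ and is therefore a nonnegative integer, and since these integers sum to $1$ over $k$, exactly one of them equals $1$ while all others equal $0$. Hence column $j$ of $Y$ is one-hot for every $j$. Translating back to set language, this says element $j$ lies in exactly one subset, which immediately yields both $\mathcal{S}_1 \cup \cdots \cup \mathcal{S}_K = [n]$ (no column is all-zero) and $\mathcal{S}_k \cap \mathcal{S}_l = \emptyset$ for $k \neq l$ (no column has two ones), establishing \Cref{def:partition_model}.

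I do not expect a genuinely hard step here; the argument is essentially bookkeeping built on two facts stated earlier in the excerpt, the block partition of indices and the column-sum property of $\pi$. The one point requiring slight care is the nonnegative-integer-summing-to-one observation, which is what rules out an element being placed in two subsets even though each $\bm{y}_k$ is formed by summing several rows of $\pi$ rather than reading off a single entry.
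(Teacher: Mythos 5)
Your proof is correct, and it reaches the conclusion by a slightly different decomposition than the paper. The paper verifies the two partition axioms separately: it uses the fact that each \emph{column} of $\pi$ is one-hot to argue that the block sums $\bm{y}_k$ have pairwise non-overlapping supports (disjointness), and the fact that each \emph{row} of $\pi$ is one-hot to count exactly $n_k$ ones in each $\bm{y}_k$, so that $\sum_k n_k = n$ forces coverage. You instead prove the single equivalent statement that every column of $Y$ is one-hot, via the computation $\sum_{k=1}^K (\bm{y}_k)_j = \sum_{i=1}^n \pi_{ij} = 1$ combined with integrality of the summands, from which both axioms drop out at once. Your route is marginally more economical: it needs only the column-sum property of $\pi$ (together with $\{0,1\}$ entries and the block partition of the row indices induced by $\sum_k n_k = n$), and never invokes the row one-hot property; the integrality observation you flag at the end is precisely what makes this unification sound. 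What the paper's two-step version buys in exchange is an explicit record that each $\bm{y}_k$ is an $n_k$-hot vector, a fact it reuses later (e.g.\ the identity $n_k = \sum_{j} y_{kj}$ inside the proof of \Cref{thm:p_y_k}), whereas your argument establishes the partition property without ever stating it.
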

\begin{proof}
By definition, every row $\bm{\pi}_i$ and column $\bm{\pi}_j$ of $\pi$ is a one-hot vector, hence every $\sum_{i=\nu_k+1}^{\nu_k + n_k} \bm{\pi}_{i}$ results in different, non-overlapping $n_k$-hot encodings, ensuring $\mathcal{S}_i \cap \mathcal{S}_j = \emptyset \hspace{0.25cm} \forall \hspace{0.25cm} i,j$ and $i \neq j$.
Further, since $n_k$-hot encodings have exactly $n_k$ entries with $1$, we have $\sum_{i=\nu_k+1}^{\nu_k + n_k} \sum_{j=1}^n \bm{\pi}_{ij} = n_k$.
Hence, since $\sum_{k=1}^K n_k = n$, every element $i$ is assigned to a $\bm{y}_k$, ensuring $\mathcal{S}_1 \cup \cdots \cup \mathcal{S}_K=[n]$. 
\end{proof}

\subsection{Two-Stage Random Partition Models}
\label{app:reparameterizable_random_partition_model}
An RPM $p(Y)$ defines a probability distribution over partitions $Y$.
In this section, we derive how to extend the two-stage procedure from \Cref{def:two_stage_partition_model} to the probabilistic setting to create a two-stage RPM.
To derive the two-stage RPM's probability distribution $p(Y)$, we need to model distributions over $\bm{n}$ and $\pi$.
We choose the MVHG distribution $p(\bm{n}; \bm{\omega})$ and the PL distribution $p(\pi; \bm{s})$ (see \Cref{sec:method_p_pi}).

We calculate the probability $p(Y; \bm{\omega}, \bm{s})$ sequentially over the probabilities of subsets $p_{\bm{y}_k}:=p(\bm{y}_k \mid \bm{y}_{<k}; \bm{\omega}, \bm{s})$.
$p_{\bm{y}_k}$ itself depends on the probability over subset permutations ${p_{\bar{\pi}_k}:=p(\bar{\pi} \mid n_k, \bm{y}_{< k} ; \bm{s})}$, where a subset permutation matrix $\bar{\pi}$ represents an ordering over $n_k$ out of $n$ elements.
\begin{definition}[Subset permutation matrix $\bar{\pi}$]
\label{def:subset_permutation_matrix}
A subset permutation matrix $\bar{\pi}\in\{0,1\}^{n_k\times n}$, where $n_k\leq n$, must fulfill
\begin{align*}
    \forall i \leq n_k:\hspace{0.1cm} \sum_{j=1}^n \bar{\pi}_{ij}=1 \hspace{0.25cm} \text{and} \hspace{0.25cm}
    \forall j \leq n:\hspace{0.1cm} \sum_{i=1}^{n_k} \bar{\pi}_{ij}\leq 1.
\end{align*}
\end{definition}
We describe the probability distribution over subset permutation matrices $p_{\bar{\pi}_k}$ using \Cref{def:subset_permutation_matrix,eq:pl_probability_ordering}.
\begin{lemma}[Probability over subset permutations $p_{\bar{\pi}_k}$]
\label{thm:p_pi_tilde_k}
The probability $p_{\bar{\pi}_k}$ of any subset permutation matrix ${\bar{\pi} = [\bm{\bar{\pi}}_1, \ldots, \bm{\bar{\pi}}_{n_k}]^T \in \{0, 1\}^{n_k \times n}}$ is given by
\begin{align}
    p_{\bar{\pi}_k} := p(\bar{\pi} \mid n_k, \bm{y}_{< k} ; \bm{s}) = \prod_{i=1}^{n_k} \frac{(\bar{\pi}\bm{s})_i}{Z_k - \sum_{j=1}^{i-1} (\bar{\pi}\bm{s})_j}
\end{align}
where $\bm{y}_{<k}=\{\bm{y}_1,...,\bm{y}_{k-1}\}$, $Z_k = Z - \sum_{j\in\mathcal{S}_{<k}} \bm{s}_j$ and $\mathcal{S}_{<k}=\bigcup_{j=1}^{k-1}\mathcal{S}_j$.
\end{lemma}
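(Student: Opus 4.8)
The plan is to obtain $p_{\bar{\pi}_k}$ directly from the sequential ``sampling-without-replacement'' reading of the Plackett--Luce distribution in \Cref{eq:pl_probability_ordering}, in which a full ordering is built by repeatedly selecting an element with probability proportional to its score among those not yet chosen. Writing $R$ for the current pool of remaining elements, the probability of selecting element $i$ next is $s_i / \sum_{j\in R} s_j$, and by Luce's Choice Axiom this ratio is unaffected by which elements have already been removed. I would first record that, since $\bar{\pi}$ has one-hot rows (\Cref{def:subset_permutation_matrix}), $(\bar{\pi}\bm{s})_i = \sum_j \bar{\pi}_{ij} s_j$ is exactly the score of the element placed at position $i$ of the subset ordering, so the numerators in the claimed product correctly track the scores of the $n_k$ elements of $\mathcal{S}_k$ in the order prescribed by $\bar{\pi}$.

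The key step handles the conditioning on $\bm{y}_{<k}$. Conditioning on $\bm{y}_{<k}$ fixes the \emph{set} $\mathcal{S}_{<k}=\bigcup_{j=1}^{k-1}\mathcal{S}_j$ of already-assigned elements, and I would invoke Luce's Choice Axiom to argue that the choice probabilities among the remaining elements depend only on this set and not on the order in which its members were drawn. Consequently, once $\mathcal{S}_{<k}$ is removed from the pool, the total remaining score is exactly $Z_k = Z - \sum_{j\in\mathcal{S}_{<k}} s_j$, which is the normalization appearing in the statement.

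It then remains to multiply the $n_k$ consecutive choice probabilities for the draws that fill $\mathcal{S}_k$. Starting from the pool of total score $Z_k$, the first draw has probability $(\bar{\pi}\bm{s})_1/Z_k$; after removing that element the remaining score is $Z_k-(\bar{\pi}\bm{s})_1$, so the second draw has probability $(\bar{\pi}\bm{s})_2/(Z_k-(\bar{\pi}\bm{s})_1)$; and in general the $i$-th draw has probability $(\bar{\pi}\bm{s})_i/(Z_k-\sum_{j=1}^{i-1}(\bar{\pi}\bm{s})_j)$. Taking the product over $i=1,\dots,n_k$ yields the stated formula. The main obstacle is the conditioning step: one must justify rigorously that the PL choice probabilities among the surviving elements are invariant to the ordering (and mere presence) of the removed elements, which is precisely the content of Luce's Choice Axiom; a secondary check is that marginalizing over the internal orderings of $\mathcal{S}_{<k}$ is consistent with conditioning on the set $\bm{y}_{<k}$ alone, so that $Z_k$ depends only on $\mathcal{S}_{<k}$ as required.
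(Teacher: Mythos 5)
Your proof is correct, but it takes a genuinely different route from the paper's. The paper proves the lemma algebraically for $k=1$ (and asserts the general case is ``equivalent''): it writes $p(\bar{\pi} \mid n_1; \bm{s})$ as a sum of full-permutation PL probabilities over all $\pi$ whose first $n_1$ rows equal $\bar{\pi}$, factors out the common prefix product, and observes that the leftover sum over suffix orderings is itself a normalized PL distribution over the remaining $n-n_1$ elements and hence equals one. You instead work with the sequential sampling-without-replacement reading of \Cref{eq:pl_probability_ordering}: you condition on the prefix event $\bm{y}_{<k}$, invoke Luce's choice axiom to argue that the conditional choice probabilities depend only on the set $\mathcal{S}_{<k}$ and not on the order in which its members were drawn, and then chain together the $n_k$ successive choice probabilities with running normalization starting at $Z_k$. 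What your route buys: it treats general $k$ and the conditioning on $\bm{y}_{<k}$ explicitly --- precisely the step the paper's proof elides --- and it matches the paper's own informal narrative, which cites LCA for exactly this conditioning. What the paper's route buys: it is purely algebraic from the product formula in \Cref{eq:pl_probability_ordering}, needing neither the urn characterization nor LCA as a separate ingredient. Two points to tighten in yours: first, the claim that the product of the first $\nu_k + n_k$ choice probabilities is the correct marginal probability of that prefix is, starting from \Cref{eq:pl_probability_ordering} alone, exactly the suffix-sums-to-one computation the paper performs, so you should either take the sequential process as the definition of PL (legitimate, since it is equivalent by the chain rule) or include that one-line marginalization; second, your phrase that the choice ratio is ``unaffected by which elements have already been removed'' should read ``unaffected by the \emph{order} in which they were removed'' --- the ratio certainly depends on the removed \emph{set}, through $Z_k$ --- though your later sentences make clear this is what you intend.
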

\begin{proof}
We provide the proof for $p_{\bar{\pi}_1}$, but it is equivalent for all other subsets.
Without loss of generality, we assume that there are $n_1$ elements in $\mathcal{S}_1$.
Following \Cref{eq:pl_probability_ordering}, the probability of a permutation matrix $p(\pi; \bm{s})$ is given by
\begin{align}
    p(\pi; \bm{s}) = & \frac{(\pi\bm{s})_1}{Z} \frac{(\pi\bm{s})_2}{Z - (\pi\bm{s})_1} \cdots \frac{(\pi\bm{s})_n}{Z - \sum_{j=1}^{n-1} (\pi\bm{s})_j} 
\end{align}
At the moment, we are only interested in the ordering of the first $n_1$ elements.
The probability of the first $n_1$ is given by marginalizing over the remaining $n - n_1$ elements:
\begin{align}
    p(\bar{\pi} \mid n_1; \bm{\omega}) = & \sum_{\pi \in \Pi_1} p(\pi \mid \bm{s})
\end{align}
where $\Pi_1$ is the set of permutation matrices such that the top $n_1$ rows select the elements in a specific ordering $\bar{\pi} \in \{0, 1\}^{n_1 \times n}$, i.e. $\Pi_1 = \{ \pi: [\bm{\pi}_1, \ldots, \bm{\pi}_{n_1}]^T = \bar{\pi} \}$.
It follows
\begin{align}
    p(\bar{\pi} \mid n_1; \bm{\omega}) = & \sum_{\pi \in \Pi_1} p(\pi \mid \bm{s}) \\
    = & \sum_{\pi \in \Pi_1} \prod_{i=1}^{n} \frac{(\pi\bm{s})_i}{Z - \sum_{j=1}^{i-1}(\pi\bm{s})_j} \\
    = & \prod_{i=1}^{n_1} \frac{(\bar{\pi}\bm{s})_i}{Z - \sum_{j=1}^{i-1}(\bar{\pi}\bm{s})_j} \sum_{\pi \in \Pi_1} \prod_{i=1}^{n - n_1} \frac{(\pi\bm{s})_{n_1+i}}{Z - \sum_{j=1}^{n_1} (\bar{\pi}\bm{s})_j - \sum_{j=1}^{i-1} (\bar{\pi}\bm{s})_j} \\
    = & \prod_{i=1}^{n_1} \frac{(\bar{\pi}\bm{s})_i}{Z - \sum_{j=1}^{i-1}(\bar{\pi}\bm{s})_j} \sum_{\pi \in \Pi_1} \prod_{i=1}^{n - n_1} \frac{(\pi\bm{s})_{n_1+i}}{Z_1 - \sum_{j=1}^{i-1} (\bar{\pi}\bm{s})_j}
\end{align}
where $Z_1 = Z - \sum_{j=1}^{n_1} (\bar{\pi}\bm{s})_j$.
It follows
\begin{align}
    p(\bar{\pi} \mid n_1; \bm{\omega}) = & \prod_{i=1}^{n_1} \frac{(\bar{\pi}\bm{s})_i}{Z - \sum_{j=1}^{i-1}(\bar{\pi}\bm{s})_j}
\end{align}
\end{proof}

\Cref{thm:p_pi_tilde_k} describes the probability of drawing the elements $i \in \mathcal{S}_k$ in the order described by the subset permutation matrix $\bar{\pi}$ given that the elements in $\mathcal{S}_{<k}$ are already determined.
Note that in a slight abuse of notation, we use $p(\bar{\pi} \mid n_k, \bm{y}_{<k}; \bm{\omega}, \bm{s})$ as the probability of a subset permutation $\bar{\pi}$ given that there are $n_k$ elements in $\mathcal{S}_k$ and thus $\bar{\pi}\in\{0,1\}^{n_k\times n}$.
Additionally, we condition on the subsets $\bm{y}_{<k}$ and $n_k$, the size of subset $\mathcal{S}_k$.
In contrast to the distribution over permutations matrices $p(\pi; \bm{s})$ in \Cref{eq:pl_probability_ordering}, we take the product over $n_k$ terms and have a different normalization constant $Z_k$.
Although we induce an ordering over all elements $i$ in \Cref{def:two_stage_partition_model}, the probability $p_{\bm{y}_k}$ is invariant to intra-subset orderings of elements $i \in \mathcal{S}_k$.

\begin{lemma}[Probability distribution $p_{\bm{y}_k}$]
\label{thm:p_y_k}
The probability distribution over subset assignments $p_{\bm{y}_k}$ is given by
\begin{align}
    p_{\bm{y}_k} :=  p(\bm{y}_k \mid \bm{y}_{<k}; \bm{\omega}, \bm{s}) = p(n_k \mid n_{<k}; \bm{\omega}) \sum_{\bar{\pi} \in \Pi_{\bm{y}_k}} p(\bar{\pi} \mid n_k, \bm{y}_{< k} ; \bm{s}) \nonumber
\end{align}
where $\Pi_{\bm{y}_k} = \{ \bar{\pi}\in\{0,1\}^{n_k\times n} : \bm{y}_k = \sum_{i=1}^{n_k} \bm{\bar{\pi}}_{i}\}$ and $p(\bar{\pi} \mid n_k, \bm{y}_{<k}; \bm{s})$ as in \Cref{thm:p_pi_tilde_k}.
\end{lemma}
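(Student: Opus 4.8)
The plan is to derive $p_{\bm{y}_k}$ directly from the generative procedure, exploiting that the two stages---sampling the subset sizes $\bm{n}\sim p(\bm{n};\bm{\omega})$ and sampling the ordering $\pi\sim p(\pi;\bm{s})$---are carried out independently. Consequently the joint law factors as $p(\bm{n},\pi;\bm{\omega},\bm{s})=p(\bm{n};\bm{\omega})\,p(\pi;\bm{s})$, and every quantity of interest is obtained by marginalizing this product over the parts of $(\bm{n},\pi)$ that are not pinned down by the observed assignments. First I would note that conditioning on $\bm{y}_{<k}$ is equivalent to conditioning on the sizes $n_{<k}$ together with the contents of $\mathcal{S}_{<k}$, i.e. on the first $\nu_k$ rows of $\pi$ having selected exactly the elements of $\mathcal{S}_{<k}$ in some order that we marginalize out.

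Next I would observe that a fixed multi-hot vector $\bm{y}_k$ encodes two independent pieces of information: its cardinality $n_k=\sum_{j}\bm{y}_{kj}$, which is a statement about the size stage, and the identity of its support, which is a statement about the ordering stage. Using the independence of the two stages, $p(\bm{y}_k\mid\bm{y}_{<k};\bm{\omega},\bm{s})$ splits into a product of a size factor and an ordering factor. The size factor is the probability that the $k$-th drawn size equals $n_k$ given $n_{<k}$, which by the chain rule applied to the MVHG is exactly $p(n_k\mid n_{<k};\bm{\omega})$.

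For the ordering factor I would argue that, given $n_k$ and $\bm{y}_{<k}$, the rows $\bm{\pi}_{\nu_k+1},\dots,\bm{\pi}_{\nu_k+n_k}$ realize $\bm{y}_k$ precisely when they form one of the subset permutation matrices $\bar{\pi}\in\Pi_{\bm{y}_k}$, i.e. a matrix whose $n_k$ rows sum to $\bm{y}_k$. Since these orderings are mutually exclusive events, the probability that the next $n_k$ positions select exactly the support of $\bm{y}_k$ is the sum $\sum_{\bar{\pi}\in\Pi_{\bm{y}_k}}p(\bar{\pi}\mid n_k,\bm{y}_{<k};\bm{s})$, with each term given by \Cref{thm:p_pi_tilde_k}. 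Here conditioning on $\bm{y}_{<k}$ enters only through the normalization $Z_k=Z-\sum_{j\in\mathcal{S}_{<k}}\bm{s}_j$, a direct consequence of Luce's choice axiom: removing the already-assigned elements renormalizes the Plackett--Luce weights over the remaining ones. Multiplying the size and ordering factors yields the claimed expression, and summing over $\Pi_{\bm{y}_k}$ makes the result invariant to the intra-subset order of the elements of $\mathcal{S}_k$.

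The step I expect to be the main obstacle is the rigorous justification of the ordering factor---specifically, showing that conditioning on $\bm{y}_{<k}$ (rather than on a particular order of $\mathcal{S}_{<k}$) leaves a clean restricted Plackett--Luce distribution with normalization $Z_k$. This requires invoking Luce's choice axiom to guarantee that marginalizing out the internal order of the already-assigned block does not couple it to the ordering of $\mathcal{S}_k$, so that the block contributions telescope exactly as in the proof of \Cref{thm:p_pi_tilde_k}.
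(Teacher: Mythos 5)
Your proposal is correct and takes essentially the same route as the paper: the paper's proof formalizes your ``size factor $\times$ ordering factor'' split by marginalizing over the size variable $n_k'$, applying the chain rule, invoking the conditional independences $p(n_k' \mid \bm{y}_{<k}; \bm{\omega}, \bm{s}) = p(n_k' \mid n_{<k}; \bm{\omega})$ and $p(\bm{y}_k \mid n_k', \bm{y}_{<k}; \bm{\omega}, \bm{s}) = p(\bm{y}_k \mid n_k', \bm{y}_{<k}; \bm{s})$ (your two-stage independence), dropping the zero terms with $n_k' \neq n_k$ (your cardinality consistency), and finally summing over $\Pi_{\bm{y}_k}$ exactly as you do. The ``main obstacle'' you flag---that conditioning on $\bm{y}_{<k}$ rather than a specific order of $\mathcal{S}_{<k}$ yields a clean restricted Plackett--Luce law with normalization $Z_k$---is precisely the content of the companion lemma on subset permutation probabilities (\Cref{thm:p_pi_tilde_k}), which the present statement takes as given, so no additional work is needed there.
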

\begin{proof}
We can proof the statement of \Cref{thm:p_y_k} as follows:
\begin{align}
    p_{\bm{y}_k} &= p(\bm{y}_k \mid \bm{y}_{<k}; \bm{\omega}, \bm{s}) \nonumber \\
    \label{eq:p_y_k_marginalizing}
    &=\sum_{n_k'} p(\bm{y}_k, n_k' \mid \bm{y}_{<k}; \bm{\omega}, \bm{s})\\
    \label{eq:p_y_k_bayse}
    &=\sum_{n_k'} p(n_k' \mid \bm{y}_{<k}; \bm{\omega}, \bm{s}) p(\bm{y}_k \mid n_k', \bm{y}_{<k}; \bm{\omega}, \bm{s})\\
    \label{eq:p_y_k_independence}
    &=\sum_{n_k'} p(n_k' \mid n_{<k}; \bm{\omega}, \bm{s}) p(\bm{y}_k \mid n_k', \bm{y}_{<k}; \bm{s})\\
    \label{eq:p_y_k_drop_zero_probs}
    &=p(n_k \mid n_{<k}; \bm{\omega}, \bm{s}) p(\bm{y}_k \mid n_k, \bm{y}_{<k}; \bm{s})\\
    \label{eq:p_y_k_apply_def}
    &= p(n_k \mid n_{<k}; \bm{\omega}) \sum_{\bar{\pi} \in \Pi_{\bm{y}_k}} p(\bar{\pi} \mid n_k, \bm{y}_{< k} ; \bm{s})
\end{align}
\Cref{eq:p_y_k_marginalizing} holds by marginalization, where $n_k'$ denotes the random variable that stands for the size of subset $\mathcal{S}_k$.
By Bayes' rule, we can then derive \Cref{eq:p_y_k_bayse}.
The next derivations stem from the fact that we can compute $n_{<k}$ if $\bm{y}_{<k}$ is given, as the assignments $\bm{y}_{<k}$ hold information on the size of subsets $\mathcal{S}_{<k}$. 
More explicitly, $n_i = \sum_{j=1}^n y_{ij}$.
Further, $\bm{y}_k$ is independent of $\bm{\omega}$ if the size $n_k'$ of subset $\mathcal{S}_k$ is given, leading to \Cref{eq:p_y_k_independence}.
We further observe that $p(\bm{y}_k \mid n_k', \bm{y}_{<k}; \bm{s})$ is only non-zero, if $n_k'=\sum_{i=1}^n y_{ki}=n_k$.
Dropping all zero terms from the sum in \Cref{eq:p_y_k_independence} thus results in \Cref{eq:p_y_k_drop_zero_probs}.
Finally, by \Cref{def:two_stage_partition_model}, we know that $\bm{y}_k = \sum_{i=\nu_k+1}^{\nu_k + n_k} \bm{\pi}_{i}$, where $\nu_k = \sum_{\iota=1}^{k-1} n_{\iota}$ and $\pi\in\{0,1\}^{n\times n}$ a permutation matrix.
Hence, in order to get $\bm{y}_k$ given $\bm{y}_{<k}$, we need to marginalize over all permutations of the elements of $\bm{y}_k$ given that the elements in $\bm{y}_{<k}$ are already ordered, which corresponds exactly to marginalizing over all subset permutation matrices $\bar{\pi}$ such that $\bm{y}_k = \sum_{i=1}^{n_k} \bm{\bar{\pi}}_{i}$, resulting in \Cref{eq:p_y_k_apply_def}.
\end{proof}
In \Cref{thm:p_y_k}, we describe the set of all subset permutations $\bar{\pi}$ of elements $i \in \mathcal{S}_k$ by $\Pi_{\bm{y}_k}$.
Put differently, we make ${p(\bm{y}_k \mid \bm{y}_{< k}; \bm{\omega}, \bm{s})}$ invariant to the ordering of elements $i \in\mathcal{S}_k$ by marginalizing over the probabilities of subset permutations $p_{\bar{\pi}_k}$ \citep{xie_reparameterizable_2019}.

Using \Cref{thm:p_pi_tilde_k,thm:p_y_k}, we propose the two-stage random partition $p(Y; \bm{\omega}, \bm{s})$.
Since $Y = [\bm{y}_1, \ldots, \bm{y}_K]^T$, we  calculate $p(Y; \bm{\omega}, \bm{s})$, the PMF of the two-stage RPM, sequentially using \Cref{thm:p_pi_tilde_k,thm:p_y_k}, where we leverage the PL distribution for permutation matrices $p(\pi; \bm{s})$ to describe the probability distribution over subsets ${p(\bm{y}_k \mid \bm{y}_{<k}; \bm{\omega}, \bm{s})}$.

\paragraph{\Cref{thm:two_stage_rpm}} (Two-Stage Random Partition Model). Given a probability distribution over subset sizes $p(\bm{n}; \bm{\omega})$ with $\bm{n} \in \mathbb{N}_0^K$ and distribution parameters $\bm{\omega} \in \mathbb{R}_+^K$ and a PL probability distribution over random orderings $p(\pi; \bm{s})$ with $\pi \in \{0, 1\}^{n \times n}$ and distribution parameters $\bm{s} \in \mathbb{R}_+^n$, the probability mass function $p(Y; \bm{\omega}, \bm{s})$ of the two-stage RPM is given by
\begin{align}
    p(Y; \bm{\omega}, \bm{s}) &= p(\bm{y}_1, \ldots, \bm{y}_K ; \bm{\omega}, \bm{s}) = p(\bm{n}; \bm{\omega}) \sum_{\pi \in \Pi_Y} p(\pi; \bm{s})
\end{align}
where $\Pi_Y = \{ \pi : \bm{y}_k = \sum_{i=\nu_k+1}^{\nu_k + n_k} \bm{\pi}_{i}, k=1, \ldots, K \}$, and $\bm{y}_k$ and $\nu_k$ as in \Cref{def:two_stage_partition_model}.
\begin{proof}
Using \Cref{thm:p_pi_tilde_k,thm:p_y_k}, we write 
\begin{align}
    p(Y) = & p(\bm{y}_1, \ldots, \bm{y}_K; \bm{\omega}, \bm{s}) = p(\bm{y}_1; \bm{\omega}, \bm{s}) \cdots p(\bm{y}_K \mid \{\bm{y}_j \}_{j<K}; \bm{\omega}, \bm{s}) \nonumber \\
    = & \left( p(n_1; \bm{\omega}) \sum_{\bar{\pi}_1 \in \Pi_{\bm{y}_1}} p(\bar{\pi}_1 \mid n_1; \bm{s}) \right) \nonumber \\
    & \cdots \left( p(n_K \mid \{n_j\}_{j<K}; \bm{\omega}) \sum_{\bar{\pi}_K \in \Pi_{\bm{y}_K}} p(\bar{\pi}_K \mid \{ n_j \}_{j \leq K} ; \bm{s}) \right) \\
    = & p(n_1; \bm{\omega}) \cdots  p(n_K \mid \{n_K\}_{j<K}; \bm{\omega}) \nonumber \\
    & \cdot \left( \sum_{\bar{\pi}_1 \in \Pi_{\bm{y}_1}} p(\bar{\pi}_1 \mid n_1; \bm{s}) \cdots \sum_{\pi_K \in \Pi_{\bm{y}_K}} p(\bar{\pi}_K \mid \{ n_j \}_{j \leq K} ; \bm{s}) \right) \\
    = & p(\bm{n}; \bm{\omega}) \left( \sum_{\bar{\pi}_1 \in \Pi_{\bm{y}_1}} \cdots \sum_{\pi_K \in \Pi_{\bm{y}_K}} p(\bar{\pi}_1 \mid n_1; \bm{s}) \cdots p(\bar{\pi}_K \mid \{ n_j \}_{j \leq K} ; \bm{s}) \right) \\
    = & p(\bm{n}; \bm{\omega}) \sum_{\pi \in \Pi_Y} p(\pi \mid \bm{n}; \bm{s}) \\
    = & p(\bm{n}; \bm{\omega}) \sum_{\pi \in \Pi_Y} p(\pi; \bm{s})
\end{align}
\end{proof}

\subsection{Approximating the Probability Mass Function}
\label{app:method_approximating_p_Y}
\paragraph{\Cref{thm:bounds_p_Y}.}
$p(Y; \bm{\omega}, \bm{s})$ can be upper and lower bounded as follows
\begin{align}
    \forall \pi\in\Pi_Y:\hspace{0.1cm} p(\bm{n}; \bm{\omega}) p(\pi; \bm{s})
    \hspace{0.1cm} \leq \hspace{0.1cm} 
    p(Y; \bm{\omega}, \bm{s}) 
    \hspace{0.1cm} \leq \hspace{0.1cm} 
    | \Pi_Y | p(\bm{n}; \bm{\omega}) \max_{\Tilde{\pi}}p(\Tilde{\pi}; \bm{s})
\end{align}
\begin{proof}
Since $p(\pi;\bm{s})$ is a probability we know that $\forall \pi\in\{0,1\}^{n\times n}\hspace{0.25cm} p(\pi; \bm{s}) \geq 0$.
Thus, it follows directly that:
\begin{align*}
    \forall \pi \in \Pi_Y : \hspace{0.25cm}p(Y; \bm{\omega}, \bm{s}) = p(\bm{n}; \bm{\omega}) \sum_{\pi' \in \Pi_Y} p(\pi'; \bm{s})\geq  p(\bm{n}; \bm{\omega}) p(\pi; \bm{s}),
\end{align*}
proving the lower bound of \Cref{thm:bounds_p_Y}.

On the other hand, can prove the upper bound in \Cref{thm:bounds_p_Y} by:
\begin{align*}
    p(Y; \bm{\omega}, \bm{s}) & = p(\bm{n}; \bm{\omega}) \sum_{\pi' \in \Pi_Y} p(\pi'; \bm{s})\\
    \leq &p(\bm{n}; \bm{\omega}) \sum_{\pi' \in \Pi_Y} \max_{\pi \in \Pi_Y}p(\pi; \bm{s})\\
    =&p(\bm{n}; \bm{\omega})  \max_{\pi \in \Pi_Y}p(\pi; \bm{s}) \sum_{\pi' \in \Pi_Y} 1\\
    =&| \Pi_Y |\cdot p(\bm{n}; \bm{\omega})  \max_{\pi \in \Pi_Y}p(\pi; \bm{s}) \\
    \leq & | \Pi_Y | \cdot p(\bm{n}; \bm{\omega}) \max_{\pi}p(\pi; \bm{s})
\end{align*}
We can compute the maximum probability $\max_{\pi}p(\pi; \bm{s})$ with the probability of the permutation matrix $f_\pi(\bm{s})$, which sorts the unperturbed scores in decreasing order.
\end{proof}
\subsection{The Differentiable Random Partition Model}
\label{app:drpm}
We propose the DRPM $p(Y; \bm{\omega}, \bm{s})$, a differentiable and reparameterizable two-stage RPM.

\paragraph{\Cref{thm:drpm}} (DRPM).
A two-stage RPM is differentiable and reparameterizable if the distribution over subset sizes $p(\bm{n}; \bm{\omega})$ and the distribution over orderings $p(\pi; \bm{s})$ are differentiable and reparameterizable.
\begin{proof}
To prove that our two-stage RPM is differentiable we need to prove that we can compute gradients for the bounds in \Cref{thm:bounds_p_Y} and to provide a reparameterization scheme for the two-stage approach in \Cref{def:two_stage_partition_model}.
    
\textbf{Gradients for the bounds:} Since we assume that $p(\bm{n}; \bm{\omega})$ and $p(\pi; \bm{s})$ are differentiable and reparameterizable, we only need to show that we can compute $| \Pi_Y |$ and $ \max_{\Tilde{\pi}}p(\Tilde{\pi}; \bm{s})$ in a differentiable manner to prove that the bounds in \Cref{thm:bounds_p_Y} are differentiable. 
By definition (see \Cref{sec:approximating_p_Y_n}),
\begin{align*}
    | \Pi_Y | = \prod_{k=1}^K | \Pi_{\bm{y}_k} | = \prod_{k=1}^K n_k!.
\end{align*}
Hence, $| \Pi_Y |$ can be computed given a reparametrized version $n_k$, which is provided by the reparametrization trick for the MVHG $p(\bm{n}; \bm{\omega})$.
Further, from \Cref{eq:plackett_luce_permutation}, we immediately see that the most probable permutation is given by the order induced by sorting the original, unperturbed scores $\bm{s}$ from highest to lowest.
This implies that $\max_{\Tilde{\pi}} p(\Tilde{\pi}; \bm{s})=p(\pi_{\bm{s}};\bm{s})$, which we can compute due to $p(\pi_{\bm{s}};\bm{s})$ being differentiable according to our assumptions.

\textbf{Reparametrization of the two-stage approach:} Given reparametrized versions of $\bm{n}$ and $\pi$, we compute a partition as follows:
\begin{align}
\label{eq:two_stage_partition}
    \bm{y}_k &= \sum_{i=\nu_k+1}^{\nu_k + n_k} \bm{\pi}_{i}, \hspace{0.5cm} \text{where} \hspace{0.25cm} \nu_k = \sum_{\iota=1}^{k-1} n_{\iota}
\end{align}
The challenge here is that we need to be able to backpropagate through $n_k$, which appears as an index in the sum.
Let $\bm{\alpha}_k=\{0,1\}^n$, such that 
\begin{align*}
(\bm{\alpha}_k)_i =
\begin{dcases*}
1 & if  $\nu_k < i \leq \nu_{k+1}$\\
0 & otherwise
\end{dcases*}
\end{align*}
Given such $\bm{\alpha}_k$, we can rewrite \Cref{eq:two_stage_partition} with
\begin{align}
\label{eq:differentiable_two_stage_partition}
    \bm{y}_k &= \sum_{i=1}^{n} (\bm{\alpha}_k)_i \bm{\pi}_{i}.
\end{align}
While this solves the problem of propagating through sum indices, it is not clear how to compute $\bm{\alpha}_k$ in a differentiable manner.
Similar to other works on continuous relaxations \citep{jang2016,maddison2017}, we can compute a relaxation of $\bm{\alpha}_k$ by introducing a temperature $\tau$.
Let us introduce auxiliary function $f: \mathbb{N}\rightarrow [0,1]^n$, that maps an integer $x$ to a vector with entries
\begin{align*}
    f_i(x;\tau)=\sigma\left(\frac{x-i+\epsilon}{\tau}\right),
\end{align*}
such that $f_i(x;\tau)\approx 0$ if $\frac{x-i}{\tau}<0$ and $f_i(x;\tau)\approx 1$ if $\frac{x-i}{\tau}\geq 0$.
Note that $\sigma(\cdot)$ is the standard sigmoid function and $\epsilon<<1$ is a small positive constant to break the tie at $\sigma(0)$.
We then compute an approximation of $\bm{\alpha}_k$ with 
\begin{align*}
    \Tilde{\bm{\alpha}}_k(\tau) = f(\nu_k;\tau) - f(\nu_{k-1};\tau),
\end{align*}
$\Tilde{\bm{\alpha}}_k(\tau)\in[0,1]^n$.
Then, for $\tau\rightarrow 0$ we have $\Tilde{\bm{\alpha}}_k(\tau)\rightarrow \bm{\alpha}_k$.
In practice, we cannot set $\tau=0$ since this would amount to a division by $0$.
Instead, we can apply the straight-through estimator \citep{bengio2013estimating} to the auxiliary function $f(x;\tau)$ in order to get $\Tilde{\bm{\alpha}}_k\in\{0,1\}^n$ and use it to compute \Cref{eq:differentiable_two_stage_partition}.
\end{proof}
Note that in our experiments, we use the MVHG relaxation of \citet{sutter2023mvhg} and can thus leverage that they return one-hot encodings for $n_k$.
This allows a different path for computing $\bm{\alpha}_k$ which circumvents introducing yet another temperature parameter altogether.
We refer to our code in the supplement for more details.

\section{Experiments}
\label{sec:app_experiments}
In the following, we describe each of our experiments in more detail and provide additional ablations.
All our experiments were run on RTX2080Ti GPUs.
Each run took $6$h-$8$h (Variational Clustering), $4$h-$6$h (Generative Factor Partitioning), or $\sim1$h (Multitask Learning) respectively.
We report the training and test time per model.
Please note that we can only report the numbers to generate the final results but not the development time.

\paragraph{Code Release}
The official code can be found under \url{https://github.com/thomassutter/drpm}.
Please note that the results reported in the main text slightly differ from the ones being generated from the official code.
For the main paper, we based our own code for the experiments in \cref{sec:experiments_ws_learning} on the \texttt{disentanglement\_lib} \citep{locatello2020weakly}.
However, the library is based on Tensorflow v1 \citep{abadi2016}, which makes it more and more difficult to maintain and install.
Therefore, we decided to re-implement everything in PyTorch \citep{paszke2019}.

While the metrics of our method and the baselines slightly change, the relative performance between them remains the same.

The code and results for the remaining two experiments in \cref{sec:experiments_clustering,sec:exp_multitask} are the same as in the main text.

\begin{table}[]
\caption{
Total GPU hours per experiment.
We report the cumulative training and testing hours to generate the results shown in the main part of this manuscript.
We relied on our internal cluster infrastructure equipped with RTX2080Ti GPUs.
Hence, we report the number of compute hours for this GPU-type.
}
\label{tab:app_exp_compute}
\centering
\begin{tabular}{lc}
    \toprule
    Experiment & Computation Time (h) \\
    \midrule
    Clustering (\Cref{sec:experiments_clustering}) & 100 \\
    Partitioning of Generative Factors (\Cref{sec:experiments_ws_learning}) & 480 \\
    MTL (\Cref{sec:exp_multitask}) & 100 \\
    \bottomrule
\end{tabular}
\end{table}

\subsection{Approximation quality of \cref{thm:bounds_p_Y}}
\begin{figure}
    \centering
     \hfill
    \begin{subfigure}[t]{0.45\textwidth}
        \includegraphics[width=\textwidth]{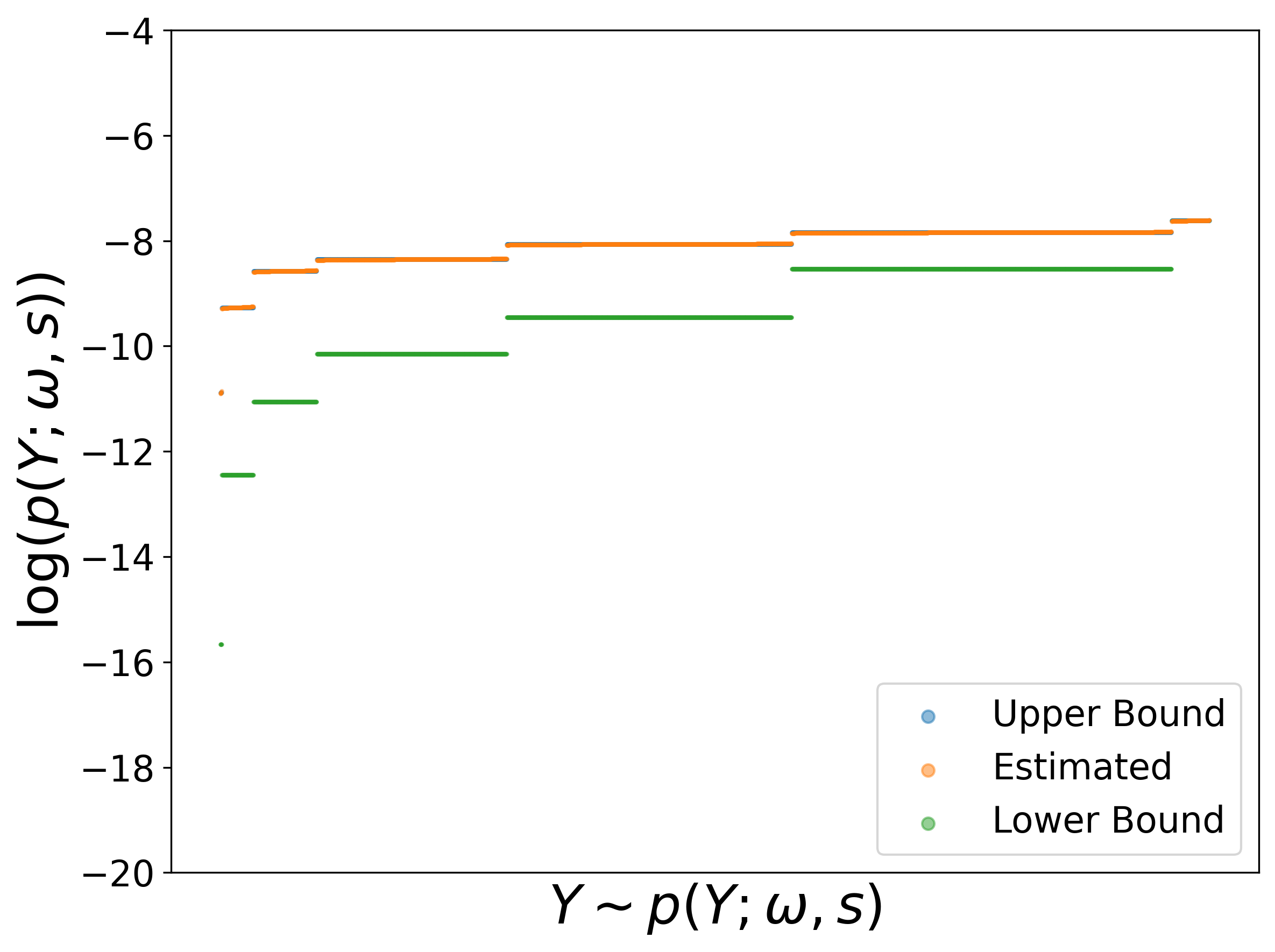}
        \caption{$\bm{\omega}=\mathbbm{1}$, $\bm{s}=\mathbbm{1}$}
    \end{subfigure}
    \begin{subfigure}[t]{0.45\textwidth}
        \includegraphics[width=\textwidth]{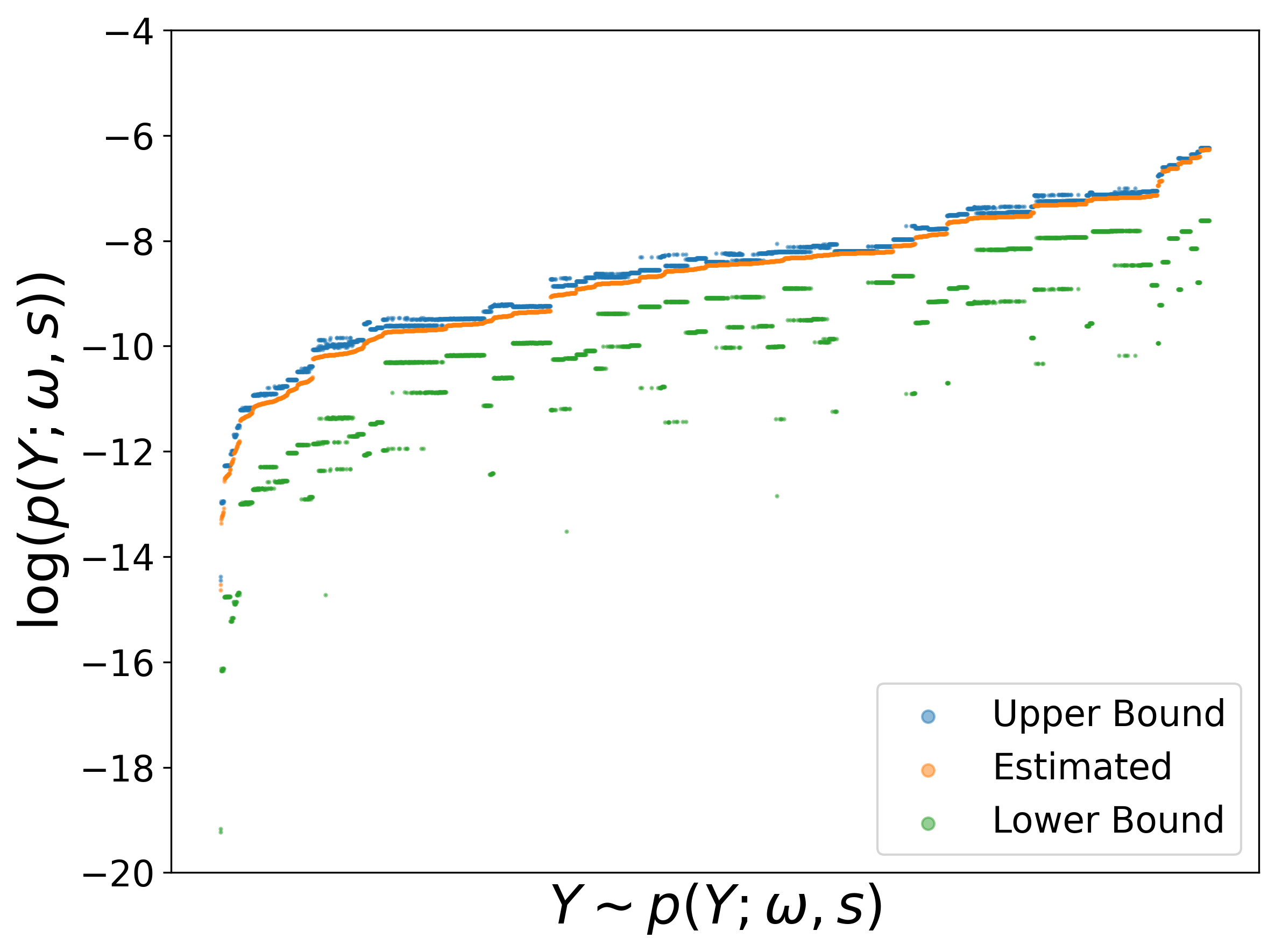}
        \caption{$\bm{\omega}\sim U([0,1])$, $\bm{s}=\mathbbm{1}$}
    \end{subfigure} \hfill
    \vspace{0.5cm}
    \begin{subfigure}[t]{0.45\textwidth}
        \includegraphics[width=\textwidth]{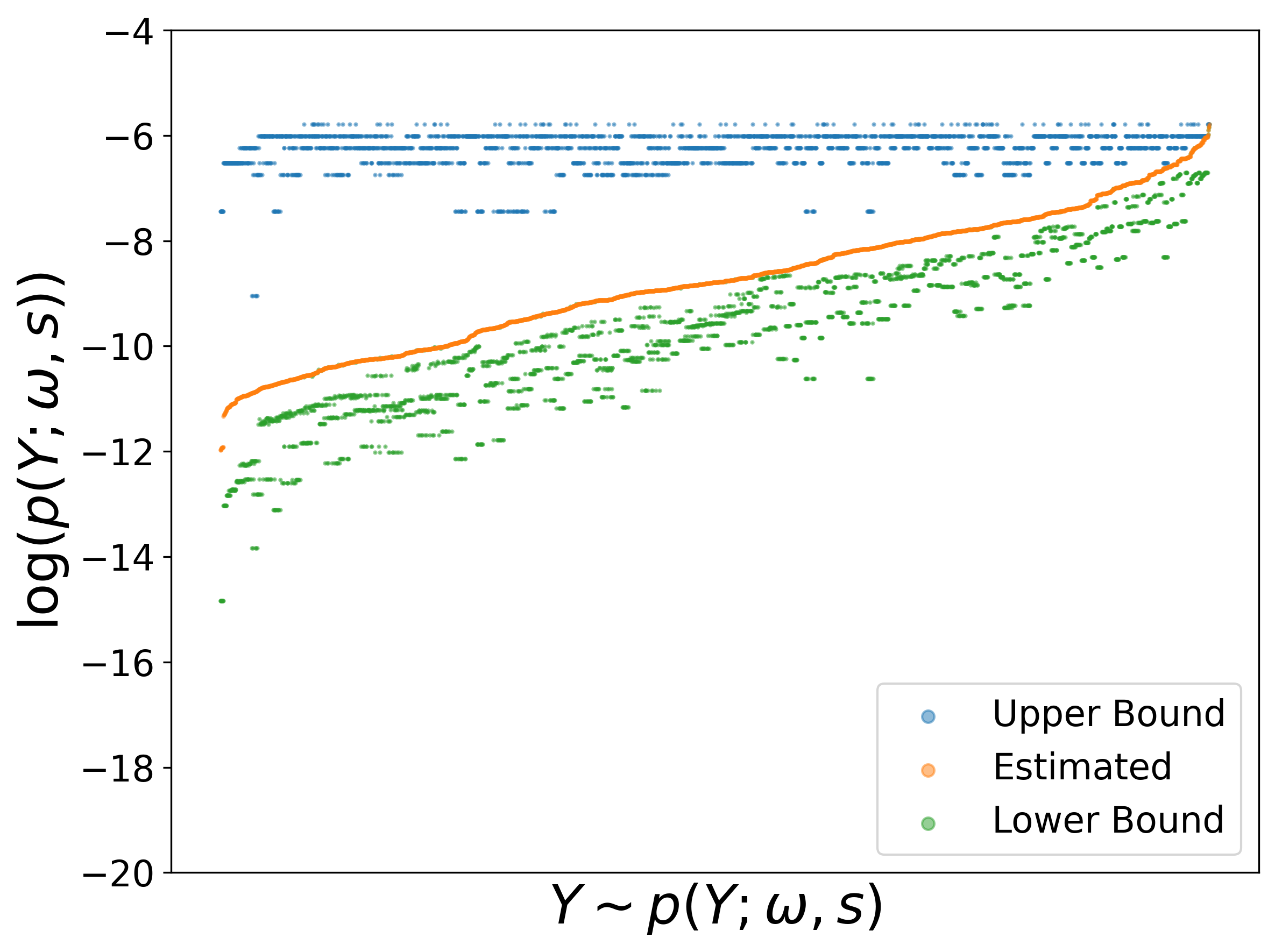}
        \caption{$\bm{\omega}=\mathbbm{1}$, $\bm{s} \sim U([0,1])$}
    \end{subfigure}
    \begin{subfigure}[t]{0.45\textwidth}
        \includegraphics[width=\textwidth]{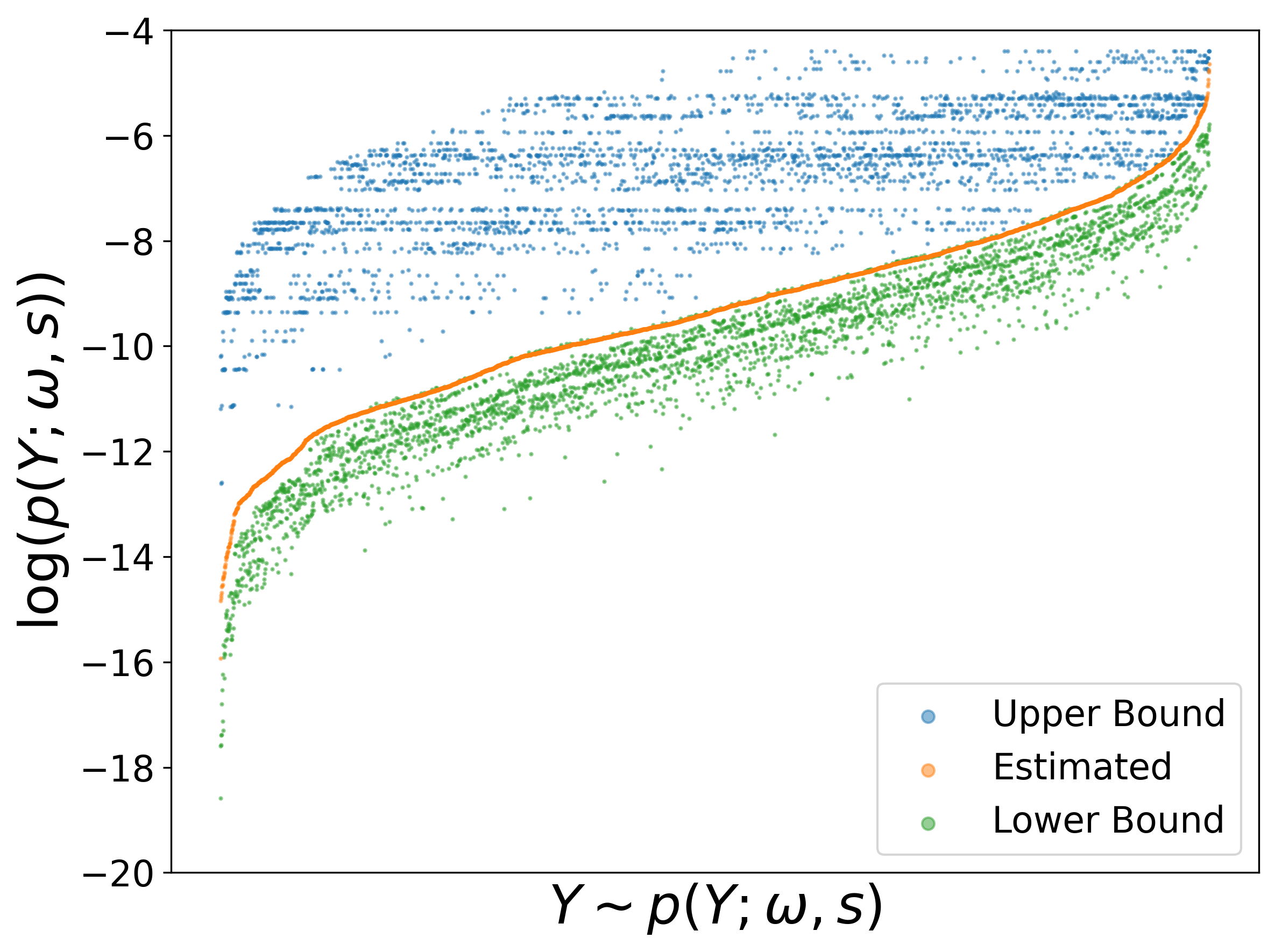}
        \caption{$\bm{\omega} \sim U([0,1])$, $\bm{s}\sim U([0,1])$}
    \end{subfigure}
    \caption{Partitions with $n=5$ and $K=5$ for different $\bm{\omega}$ and $\bm{s}$. Each point in the plots corresponds to one of the $n^K$ different partitions and their respective estimated probability mass and its upper/lower bounds according to \cref{thm:bounds_p_Y}.}
    \label{fig:app_bounds_ablation}
\end{figure}
To provide intuitive understanding of the bounds introduced in \cref{thm:bounds_p_Y}, we present an experiment in this subsection to demonstrate the behavior of the upper and lower bounds. 
It is important to note that RPMs are discrete distributions, as the number of possible samples is finite for a given number of elements $n$ and subsets $K$. 
Therefore, we can estimate the probability of a fixed partition $\Tilde{Y}$ under given $\bm{\omega}$ and $\bm{s}$ by sampling $M$ partitions from $p(Y;\bm{\omega},\bm{s})$, counting the occurrences of $\Tilde{Y}$ in the samples, and dividing the count by $M$. 
As $M$ approaches infinity, we can obtain the true probability mass function (PMF) $p(Y;\bm{\omega},\bm{s})$ for every partition $Y$.

In our experiment, we set $n=5$ and aim to evaluate the quality of our bounds for all possible subset combinations of $5$ elements, we thus set $K=n=5$. 
To obtain a reliable estimate of the true PMF, we set $M=10^8$. 
From \cref{thm:bounds_p_Y}, we know that:
\begin{align*}
    \forall \pi\in\Pi_Y:\hspace{0.1cm} p(\bm{n}; \bm{\omega}) p(\pi; \bm{s})
    \hspace{0.1cm} \leq \hspace{0.1cm} 
    p(Y; \bm{\omega}, \bm{s}) 
    \hspace{0.1cm} \leq \hspace{0.1cm} 
    | \Pi_Y | p(\bm{n}; \bm{\omega}) \max_{\Tilde{\pi}}p(\Tilde{\pi}; \bm{s})
\end{align*}
Let us define
\begin{align*}
    p_U(Y;\bm{\omega},\bm{s})&\coloneqq | \Pi_Y | p(\bm{n}; \bm{\omega}) \max_{\Tilde{\pi}}p(\Tilde{\pi}; \bm{s})\\
    p_L(Y;\bm{\omega},\bm{s})&\coloneqq \max_{\pi\in\Pi_Y}p(\bm{n}; \bm{\omega}) p(\pi; \bm{s})
\end{align*}

In \cref{fig:app_bounds_ablation}, we present the estimated PMF along with the corresponding upper bounds ($p_U$) and lower bounds ($p_L$) for four different combinations of RPM parameters $\bm{\omega}$ and $\bm{s}$.
We observe that when all scores $\bm{s}$ are equal, as in the priors of the experiments in \cref{sec:experiments_clustering,sec:experiments_ws_learning}, $p_U(Y;\bm{\omega},\bm{s})$ approximates $p(Y;\bm{\omega},\bm{s})$ well and serves as a reliable estimate of the PMF. 
However, when the scores vary, the upper bound becomes looser, particularly for lower probability partitions, as it is dominated by the term $\max_{\Tilde{\pi}}p(\Tilde{\pi}; \bm{s})$. 
Although $p_L(Y;\bm{\omega},\bm{s})$ appears looser than $p_U(Y;\bm{\omega},\bm{s})$ for certain configurations of $\bm{\omega}$ and $\bm{s}$, it provides more consistent results across all hyperparameter combinations.
\subsection{Variational Clustering with Random Partition Models}
\label{sec:app_experiments_clustering}
\begin{figure}[h]
    \centering
    \includegraphics[width=0.8\textwidth]{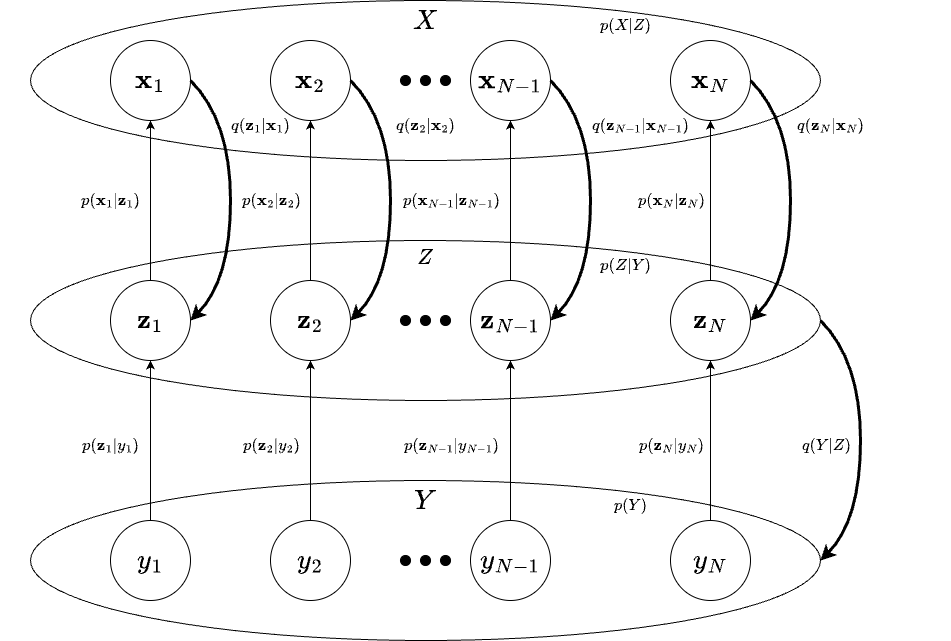}
    \caption{Generative model of the DRPM clustering model. Generative paths are marked with thin arrows, whereas inference is in bold. }
    \label{fig:clustering_graphical_model}
\end{figure}
\subsubsection{Loss Function}
\label{sec:app_clustering_loss}
As mentioned in \Cref{sec:experiments_clustering}, for a given dataset $X$ with $N$ samples, let $Z$ and $Y$ contain the respective latent vectors and cluster assignments for each sample in $X$.
The generative process can then be summarized as follows: First, we sample the cluster assignments $Y$ from an RPM, i.e., $Y\sim P(Y;\bm{\omega}, \bm{s})$.
Given $Y$, we can sample the latent variables $Z$, where for each $\bm{y}$ we have $\bm{z}\sim\mathcal{N}(\bm{\mu_{\bm{y}}},\bm{\sigma_{\bm{y}}^T}\mathbb{I}_l)$, $\bm{z}\in\mathbb{R}^l$. 
Finally, we sample $X$ by passing each $\bm{z}$ through a decoder like in vanilla VAEs.
Using Bayes rule and Jensen's inequality, we can then derive the following evidence lower bound (ELBO):
\begin{align*}
    \log(p(X)) &= \log\left(\int \sum_Y p(X,Y,Z)dZ\right)\\
    &\geq \mathbb{E}_{q(Z,Y|X)}\left[\log\left(\frac{p(X|Z)p(Z|Y)p(Y)}{q(Z,Y|X)}\right)\right]\\
    &:=\mathcal{L}_{ELBO}(X)
\end{align*}
We then assume that we can factorize the approximate posterior as follows:
\begin{align*}
    q(Z,Y|X)=q(Y|X)\prod_{\bm{x}\in X}q(\bm{z}|\bm{x})
\end{align*}
Note that while we do assume conditional independence between $\bm{z}$ given its corresponding $\bm{x}$, we model $q(Y|X)$ with the DRPM and do not have to assume conditional independence between different cluster assignments.
This allows us to leverage dependencies between samples from the dataset. 
Hence, we can rewrite the ELBO as follows:
\begin{align*}
    \mathcal{L}_{ELBO}(X)=&\mathbb{E}_{q(Z|X)}\left[\log(p(X|Z))\right]\\
    &-\mathbb{E}_{q(Y|X)}\left[KL[q(Z|X)||p(Z|Y)]\right]\\
    &-KL[q(Y|X)||p(Y)]\\
    =&\sum_{\bm{x}\in X}\mathbb{E}_{q(\bm{z}|\bm{x})}\left[\log p(\bm{x}|\bm{z})\right]\\
    &-\sum_{\bm{x}\in X}\mathbb{E}_{q(Y|X)}\left[KL[q(\bm{z}|\bm{x})||p(\bm{z}|Y)]\right]\\
    &-KL[q(Y|X)||p(Y)]
\end{align*}
See \Cref{fig:clustering_graphical_model} for an illustration of the generative process and the assumed inference model.
Since computing $P(Y)$ and $q(Y|X)$ is intractable, we further apply \cref{thm:bounds_p_Y} to approximate the KL-Divergence term in $\mathcal{L}_{ELBO}$, leading to the following lower bound:
\begin{align}
    \mathcal{L}_{ELBO}\geq&\sum_{\bm{x}\in X}\mathbb{E}_{q(\bm{z}|\bm{x})}\left[\log p(\bm{x}|\bm{z})\right]\\
    &-\sum_{\bm{x}\in X}\mathbb{E}_{q(Y|X)}\left[KL[q(\bm{z}|\bm{x})||p(\bm{z}|Y)]\right]\\
    \label{eq:clustering_bound_1}
    &-\mathbb{E}_{q(Y|X)}\left[\log\frac{| \Pi_Y | \cdot q(\bm{n}; \bm{\omega}(X))}{p(\bm{n}; \bm{\omega})p(\pi_Y; \bm{s})}\right]\\
    \label{eq:clustering_bound_2}
    &-\log\left(\max_{\Tilde{\pi}}q(\Tilde{\pi}; \bm{s}(X))\right),
\end{align}
where $\pi_Y$ is the permutation that lead to $Y$ during the two-stage resampling process.
Further, we want to control the regularization strength of the KL divergences similar to the $\beta$-VAE \citep{higgins2016}.
Since the different terms have different regularizing effects, we rewrite \Cref{eq:clustering_bound_1,eq:clustering_bound_2} and weight the individual terms as follows, leading to our final loss:
\begin{align}
\label{eq:clustering_rec}
\mathcal{L}\coloneqq&-\sum_{\bm{x}\in X}\mathbb{E}_{q(\bm{z}|\bm{x})}\left[\log p(\bm{x}|\bm{z})\right]\\
\label{eq:clustering_z_kl}
&+\beta \cdot\sum_{\bm{x}\in X}\mathbb{E}_{q(Y|X)}\left[ KL[q(\bm{z}|\bm{x})||p(\bm{z}|Y)]\right] \\
\label{eq:clustering_kl_1}
&+\gamma\cdot\mathbb{E}_{q(Y|X)}\left[\log\left(\frac{| \Pi_Y | \cdot q(\bm{n}; \bm{\omega}(X))}{p(\bm{n}; \bm{\omega})}\right)\right]\\
\label{eq:clustering_kl_2}
&+\delta\cdot\mathbb{E}_{q(Y|X)}\left[\log\left(\frac{\max_{\Tilde{\pi}}q(\Tilde{\pi}; \bm{s}(X))}{p(\pi_Y; \bm{s})}\right)\right]
\end{align}
\subsubsection{Architecture}
\label{app:sec_clustering_arch}
\begin{figure}
    \centering
    \includegraphics[width=0.8\textwidth]{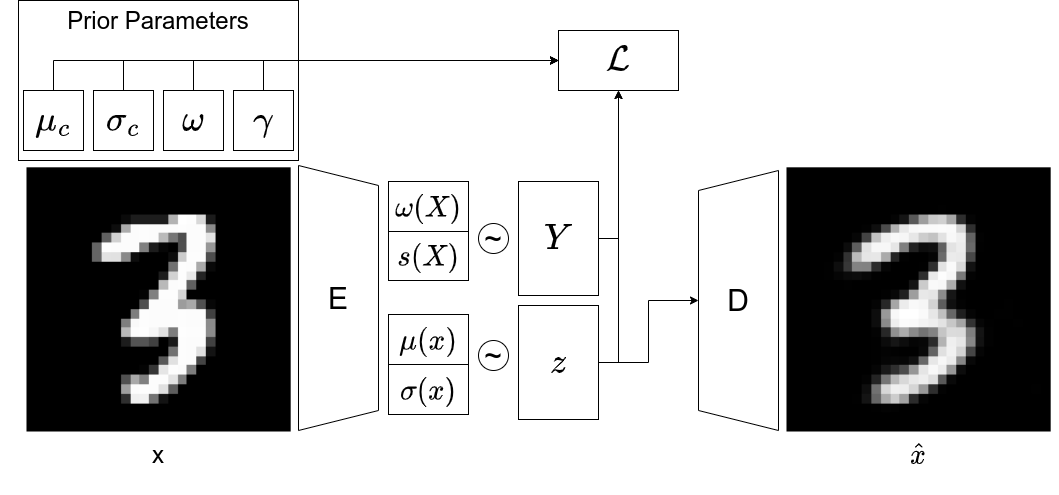}
    \caption{Autoencoder architecture of the DRPM-VC model. }
    \label{fig:clustering_architecture}
\end{figure}
The model for our clustering experiments is a relatively simple, fully-connected autoencoder with a structure as seen in \Cref{fig:clustering_architecture}.
We have a fully connected encoder $E$ with three layers mapping the input to $500$, $500$, and $2000$ neurons, respectively.
We then compute each parameter by passing the encoder output through a linear layer and mapping to the respective parameter dimension in the last layer.
In our experiments, we use a latent dimension size of $l=10$ for MNIST and $l=20$ for FMNIST, such that $\bm{\mu}(\bm{x}),\bm{\sigma}(\bm{x})\in\mathbb{R}^{l}$.
To understand the architecture choice for the DRPM parameters, let us first take a closer look at \cref{eq:clustering_z_kl}.
For each sample $\bm{x}$, this term minimizes the expected KL divergence between its approximate posterior 
$q(\bm{z}|\bm{x})=\mathcal{N}(\bm{\mu}(\bm{x}),\diag(\bm{\sigma}(\bm{x})))$
and the prior at index $\bm{y}$ given by the partition $Y$ sampled from the DRPM $q(Y|X;\bm{s},\bm{\omega})$, i.e., $\mathcal{N}(\bm{\mu}_{\bm{y}},\diag(\bm{\sigma}_{\bm{y}}))$.
Ideally, the most likely partition should assign the approximate posterior to the prior that minimizes this KL divergence.
We can compute such $\bm{s}(X)$ and $\bm{\omega}(X)$ given the parameters of the approximate posterior and priors as follows:
\begin{align*}
    \forall \bm{x}_i \in X: s_i(\bm{x}_i)&= u \cdot (K-\argmin_k \left( KL[\mathcal{N}(\bm{\mu}(\bm{x}_i),\diag(\bm{\sigma}(\bm{x}_i))||\mathcal{N}(\bm{\mu}_k,\diag(\bm{\sigma}_k))]\right))\\
    \bm{\omega}(X) &= \frac{1}{|X|}\sum_{\bm{x}\in X}^N \left\{\frac{\mathcal{N}(\bm{x}|\bm{\mu}_k,\diag(\bm{\sigma}_k))}{\sum_{k'=1}^K\mathcal{N}(\bm{x}|\bm{\mu}_{k'},\diag(\bm{\sigma}_{k'}))}\right\}_{k=1}^K,
\end{align*}
where $u$ is a scaling constant that controls the probability of sampling the most likely partition.
Note that $\bm{\omega}$ and $\bm{s}$ minimize \cref{eq:clustering_z_kl} if defined this way when given the distribution parameters of the approximate posterior and the priors.
The only thing that is left unclear is how much $u$ should scale the scores $\bm{s}$.
Ultimately, we leave $u$ as a learnable parameter but detach the rest of the computation of $\bm{s}$ and $\bm{\omega}$ from the computational graph to improve stability during training.
Finally, once we resample $z\sim\mathcal{N}(\mu(\bm{x}),\sigma(\bm{x}))$, we pass it through a fully connected decoder $D$ with four layers mapping $z$ to $2000$, $500$, and $500$ neurons in the first three layers and then finally back to the input dimension in the last layer to end up with the reconstructed sample $\bm{\hat{x}}$.
\subsubsection{Training}
\label{sec:app_clustering_training}

As in vanilla VAEs, we can estimate the reconstruction term in \Cref{eq:clustering_rec} with MCMC by applying the reparametrization trick \citep{kingma2013} to $q(\bm{z}|\bm{x})$ to sample $M$ samples $\bm{z}^{(i)}\sim q(\bm{z}|\bm{x})$ and compute their reconstruction error to estimate \Cref{eq:clustering_rec}. 
Similarly, we can sample from $q(Y|X)$ $L$ times to estimate the terms in \Cref{eq:clustering_z_kl,eq:clustering_kl_1,eq:clustering_kl_2}, such that we minimize
\begin{align*}
\Tilde{\mathcal{L}}\coloneqq&-\sum_{\bm{x}\in X}\frac{1}{M}\sum_{i=1}^M\log p(\bm{x}|\bm{z}^{(i)})\\
&+\frac{\beta}{L} \cdot\sum_{\bm{x}\in X}\sum_{i=1}^L KL[q(\bm{z}|\bm{x})||p(\bm{z}|Y^{(i)})] \\
&+\frac{\gamma}{L}\cdot\sum_{i=1}^L\log\left(\frac{| \Pi_{Y^{(i)}} | \cdot q(\bm{n}^{(i)}; \bm{\omega}(X))}{p(\bm{n}^{(i)}; \bm{\omega})}\right)\\
&+\frac{\delta}{L}\cdot\sum_{i=1}^L\log\left(\frac{\max_{\Tilde{\pi}}q(\Tilde{\pi}; \bm{s}(X))}{p(\pi_{Y^{(i)}}; \bm{s})}\right)
\end{align*}
In our experiments, we set $M=1$ and $L=100$ since the MVHG and PL distributions are not concentrated around their mean very well, and more Monte Carlo samples thus lead to better approximations of the expectation terms.
We further set $\beta=1$ for MNIST and $\beta=0.1$ for FMNIST, and otherwise $\gamma=1$, and $\delta=0.01$ for all experiments.

To resample $\bm{n}$ and $\pi$ we need to apply temperature annealing \citep{grover2018,sutter2023mvhg}. 
To do this, we applied the exponential schedule that was originally proposed together with the Gumbel-Softmax trick \citep{jang2016,maddison2017}, i.e., $\tau = \max(\tau_{final}, exp(-rt))$, where $t$ is the current training step and $r$ is the annealing rate.
For our experiments, we choose $r=\frac{\log(\tau_{final})-\log(\tau_{init})}{100000}$ in order to annealing over $100000$ training step.
Like \citet{jang2016}, we set $\tau_{init}=1$ and $\tau_{final}=0.5$.

Similar to \citet{jiang2016}, we quickly realized that proper initialization of the cluster parameters and network weights is crucial for variational clustering.
In our experiments, we pretrained the autoencoder structure by adapting the contrastive loss of \citep{li_twin_2022}, as they demonstrated that their representations manage to retain clusters in low-dimensional space.
Further, we also added a reconstruction loss to initialize the decoder properly.
To initialize the prior parameters, we fit a GMM to the pretrained embeddings of the training set and took the resulting Gaussian parameters to initialize our priors.
Note that we used the same initialization across all baselines.
See \cref{app:cluster_rec_pretraining} for an ablation where we pretrain with only a reconstruction loss similar to what was proposed with the VaDE baseline.

To optimize the DRPM-VC in our experiments, we used the AdamW \citep{loshchilov_decoupled_2019} optimizer with a learning rate of $0.0001$ with a batch size of $256$ for $1024$ epochs.
During initial experiments with the DRPM-VC, we realized that the pretrained weights of the encoder would often lose the learned structure in the first couple of training epochs.
We suspect this to be an artifact of instabilities induced by temperature annealing. 
To deal with these problems, we decided to freeze the first three layers of the encoder when training the DRPM-VC, giving us much better results. 
See \Cref{app:cluster_freeze_encoder} for an ablation where we applied the same optimization procedure to VaDE.

Finally, when training the VaDE baseline and the DRPM-VC on FMNIST, we often observe a local optimum where the prior distributions collapse and become identical.
We can solve this problem by refitting the GMM in the latent space every $10$ epochs and by using the resulting parameters to reinitialize the prior distributions.
\subsubsection{Reconstruction Pretraining}
\label{app:cluster_rec_pretraining}
\begin{table}[t!]
    \centering
    \caption{We compare the clustering performance of the DRPM-VC on test sets of MNIST and FMNIST between GMM in latent space (Latent GMM) and Variational Deep Embedding (VaDE) initializing weights using an autoencoder trained on a reconstruction objective. We measure performance in terms of the Normalized Mutual Information (NMI), Adjusted Rand Index (ARI), and cluster accuracy (ACC) over five seeds and put the best model in bold.}
    \label{tab:abblation_clustering_rec_pretraining}
    \vskip 0.15in
    \begin{center}
    \begin{sc}
\begin{tabular}{lcccccc}
\toprule
{} & \multicolumn{3}{c}{MNIST} & \multicolumn{3}{c}{FMNIST} \\
\cmidrule(l){2-4}\cmidrule(l){5-7}
{} &                         NMI &                         ARI &                         ACC &                         NMI &                         ARI &                         ACC \\
\midrule
Latent GMM &  $0.75{\scriptstyle\pm0.00}$ &  $0.66{\scriptstyle\pm0.01}$ &  $\bm{0.75}{\scriptstyle\pm0.01}$ &  $0.56{\scriptstyle\pm0.02}$ &  $0.41{\scriptstyle\pm0.03}$ &  $0.57{\scriptstyle\pm0.02}$ \\
VaDE       &  $\bm{0.77}{\scriptstyle\pm0.02}$ &  $0.62{\scriptstyle\pm0.04}$ &  $0.69{\scriptstyle\pm0.04}$ &  $0.53{\scriptstyle\pm0.07}$ &  $0.35{\scriptstyle\pm0.08}$ &  $0.47{\scriptstyle\pm0.09}$ \\
DRPM-VC       &  $0.74{\scriptstyle\pm0.00}$ &  $\bm{0.67{\scriptstyle\pm0.01}}$ &  $\bm{0.75}{\scriptstyle\pm0.02}$ &  $\bm{0.59}{\scriptstyle\pm0.01}$ &  $\bm{0.47}{\scriptstyle\pm0.02}$ &  $\bm{0.62}{\scriptstyle\pm0.01}$ \\
\bottomrule
\end{tabular}
    \end{sc}
    \end{center}
    \vskip -0.1in
\end{table}
While the results of our variational clustering method depend a lot on the specific pretraining, we want to demonstrate that improvements over the baselines do not depend on the chosen pretraining method.
To that end, we repeat our experiments but initialize the weights of our model with an autoencoder that has been trained to minimize the mean squared error between the input and the reconstruction.
This initialization procedure was originally proposed in \citep{jiang2016}.
We present the results of this ablation in \cref{tab:abblation_clustering_rec_pretraining}.
Simply minimizing the reconstruction error does not necessarily retain cluster structures in the latent space. 
Thus, it does not come as a surprise that overall results get about $10\%$ to $20\%$ worse across most metrics, especially for MNIST, while results on FMNIST only slightly decrease.
However, we still beat the baselines across most metrics, suggesting that modeling the implicit dependencies between cluster assignments helps to improve variational clustering performance. 
\subsubsection{Baselines with fixed Encoder}
\label{app:cluster_freeze_encoder}
\begin{table}[t]
    \centering
    \caption{We compare the clustering performance of the DRPM-VC on test sets of MNIST and FMNIST between GMM in latent space (Latent GMM), and Variational Deep Embedding (VaDE) when freezing the encoder. We measure performance in terms of the Normalized Mutual Information (NMI), Adjusted Rand Index (ARI), and cluster accuracy (ACC) over five seeds and put the best model in bold.}
    \label{tab:abblation_clustering_freeze_enc}
    \vskip 0.15in
    \begin{center}
    \begin{sc}
\begin{tabular}{lcccccc}
\toprule
{} & \multicolumn{3}{c}{MNIST} & \multicolumn{3}{c}{FMNIST} \\
\cmidrule(l){2-4}\cmidrule(l){5-7}
{} &                         NMI &                         ARI &                         ACC &                          NMI &                         ARI &                         ACC \\
\midrule
Latent GMM &  $0.86{\scriptstyle\pm0.02}$ &  $0.83{\scriptstyle\pm0.06}$ &  $0.88{\scriptstyle\pm0.07}$ &   $0.60{\scriptstyle\pm0.00}$ &  $0.47{\scriptstyle\pm0.01}$ &  $0.62{\scriptstyle\pm0.01}$ \\
VaDE       &  $\bm{0.90}{\scriptstyle\pm0.02}$ &  $\bm{0.88}{\scriptstyle\pm0.06}$ &  $0.92{\scriptstyle\pm0.06}$ &  $\bm{0.64}{\scriptstyle\pm0.01}$ &  $0.47{\scriptstyle\pm0.01}$ &  $0.59{\scriptstyle\pm0.03}$ \\
DRPM-VC       &  $0.89{\scriptstyle\pm0.01}$ &  $\bm{0.88}{\scriptstyle\pm0.03}$ &  $\bm{0.94}{\scriptstyle\pm0.02}$ &   $\bm{0.64}{\scriptstyle\pm0.00}$ &  $\bm{0.51}{\scriptstyle\pm0.01}$ &  $\bm{0.65}{\scriptstyle\pm0.00}$ \\
\bottomrule
\end{tabular}
    \end{sc}
    \end{center}
    \vskip -0.1in
\end{table}
For the experiments in the main text, we wanted to implement the VaDE baseline similar to the original method proposed in \cite{jiang2016}.
This means, in contrast to our method, we used their optimization procedure, i.e., Adam with a learning rate of $0.002$ with a decay of $0.95$ every $10$ steps, and did not freeze the encoder as we do for the DRPM-VC.
To ensure our results do not stem from this minor discrepancy, we perform an ablation experiment on VaDE using the same optimizer and learning rate as with the DRPM-VC and freeze the encoder backbone.
The results of this additional experiment can be found in \cref{app:cluster_freeze_encoder}.
As can be seen, VaDE results do improve when adjusting the optimization procedure in this way. However, we still match or improve upon the results of VaDE in most metrics, especially in ARI and ACC, suggesting purer clusters compared to VaDE.
We suspect this is because we assign samples to fixed clusters when sampling from the DRPM, whereas VaDE performs soft assignments by marginalizing over a categorical distribution.
\subsubsection{Additional Partition Samples}
In \cref{sec:experiments_clustering}, we have seen a sample of a partition of the DRPM-VC trained on FMNIST.
We provide additional samples for both MNIST and FMNIST at the end of the appendix in \Cref{fig:more_partition_samples_mnist,fig:more_partition_samples_fmnist}.
We can see that for both datasets, the DRPM-VC learns coherent representations of each cluster that easily allow us to generate new samples from each class.
\subsubsection{Samples per cluster}
In addition to sampling partitions and then generating samples according to the sampled cluster assignments, we can also directly sample from each of the learned priors.
We show some examples of this for both MNIST and FMNIST at the end of the appendix in \cref{fig:cluster_samples_mnist,fig:cluster_samples_fmnist}.
We can again see that the DRPM-VC learns accurate cluster representations since each of the samples seems to correspond to one of the classes in the datasets.
Further, the clusters also seem to capture the diversity in each cluster, as we see a lot of variety across the generated samples.

\subsubsection{Varying the number of Clusters}
\begin{figure}
    \centering
    \begin{subfigure}{0.45\textwidth}
     \centering
        \includegraphics[width=\textwidth]{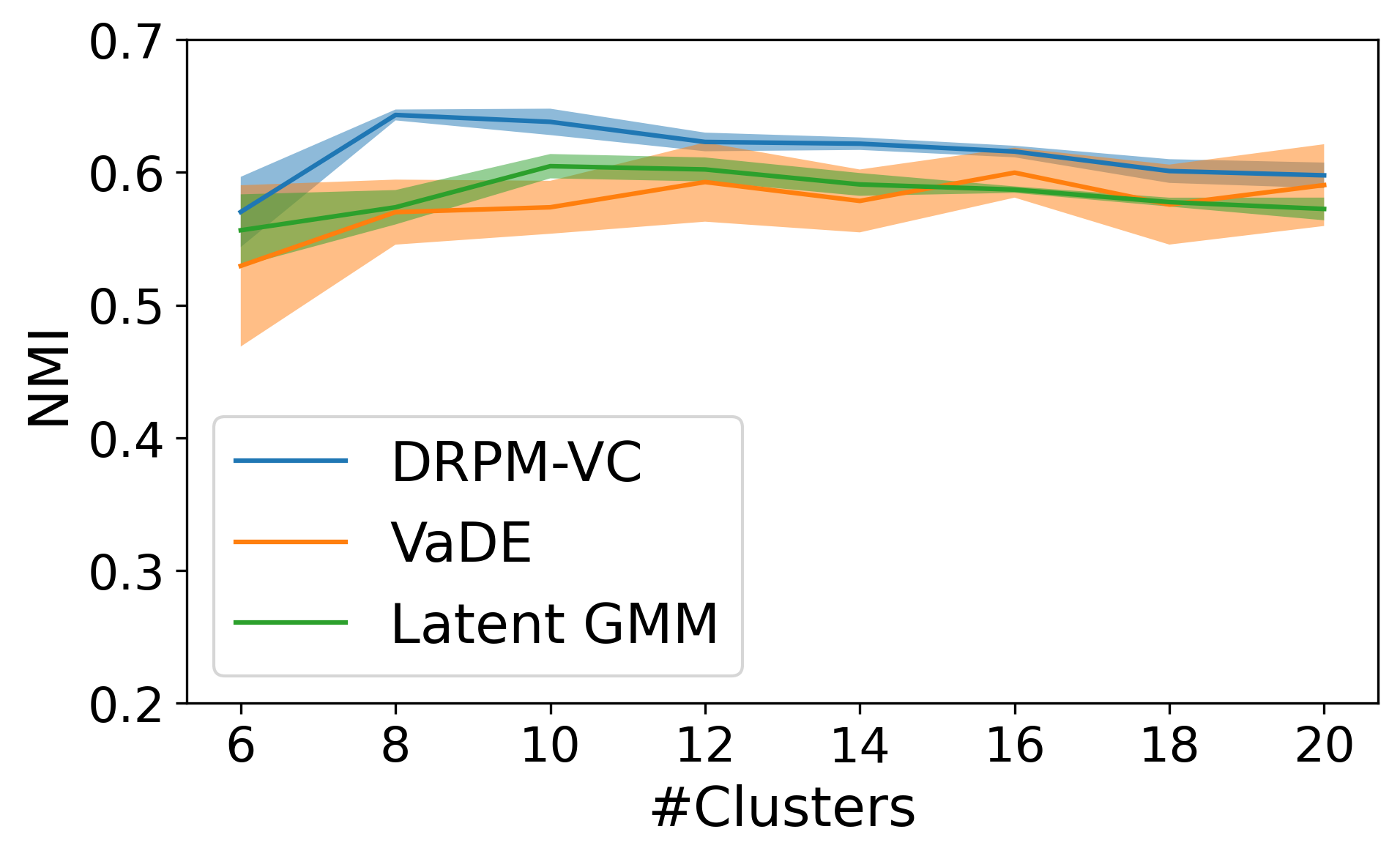}
        \caption{NMI when varying $K$}
    \end{subfigure}
    \begin{subfigure}{0.45\textwidth}
        \centering
        \includegraphics[width=\textwidth]{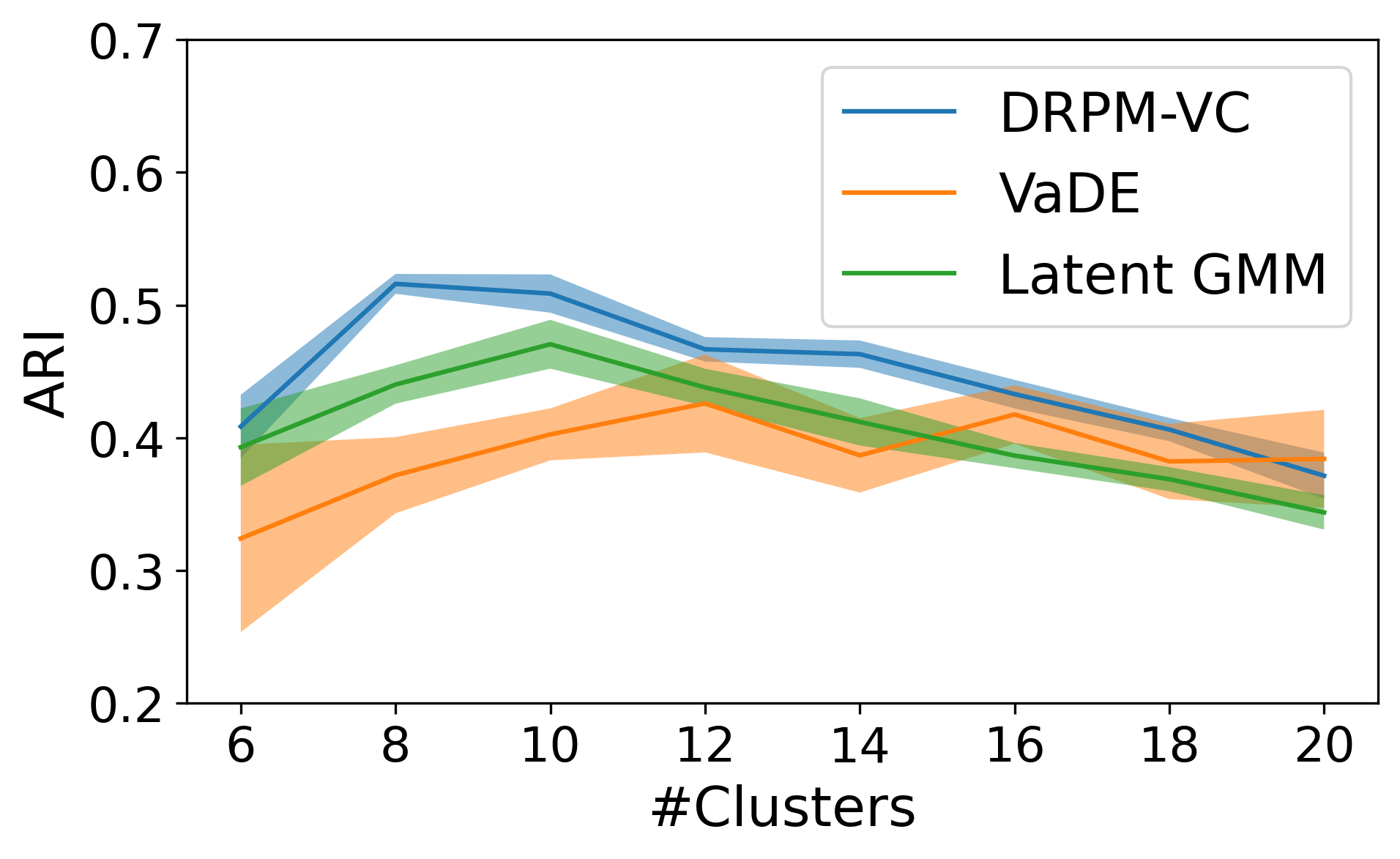}
        \caption{ARI when varying $K$}
    \end{subfigure}
    \begin{subfigure}{0.45\textwidth}
        \centering
        \includegraphics[width=\textwidth]{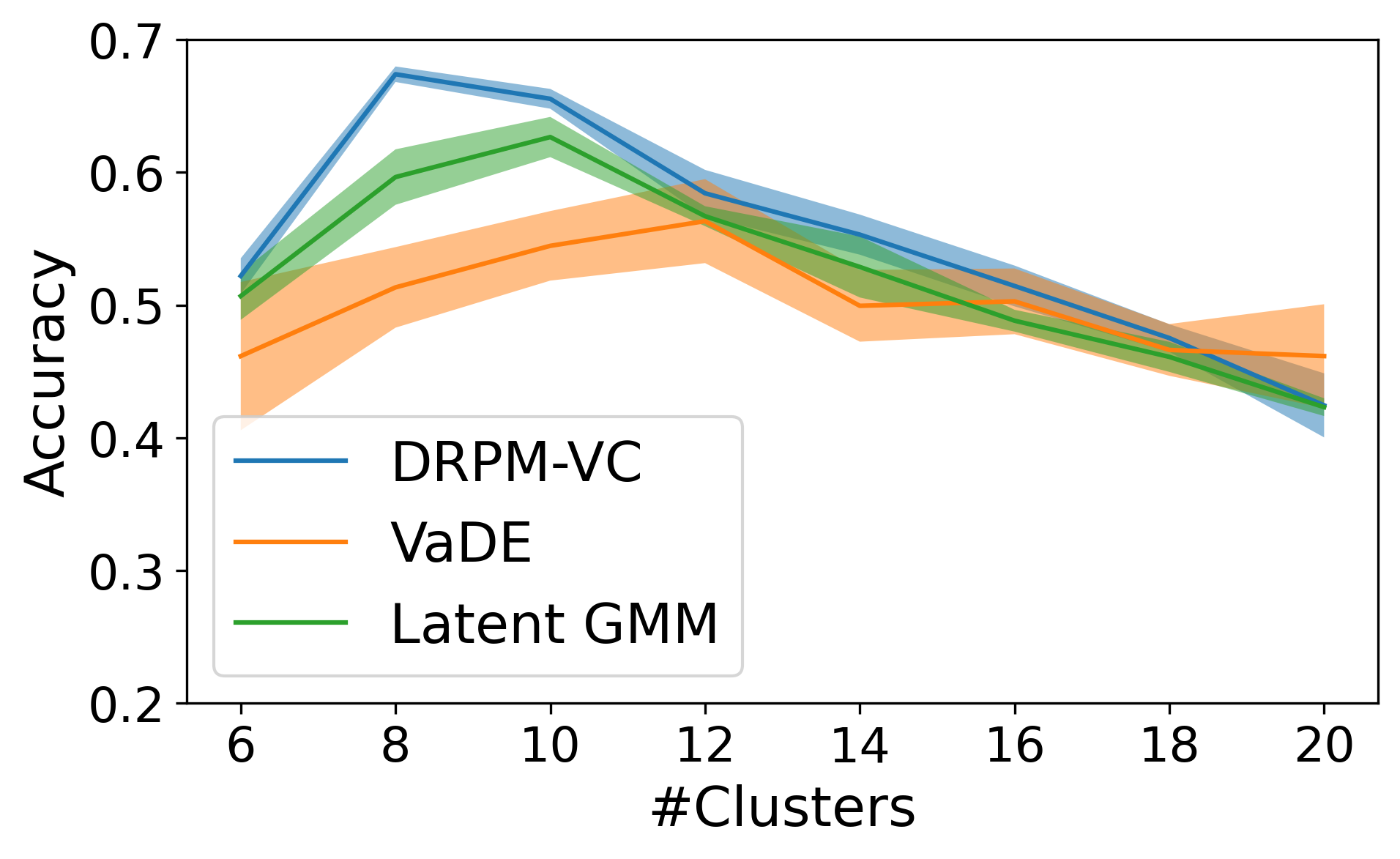}
        \caption{Accuracy when varying $K$}
    \end{subfigure}
    \caption{Clustering performance of Latent GMM, VaDE, and DRPM-VC when varying the number of clusters $K$ for $5$ seeds on Fashion MNIST. DRPM-VC consistently outperforms the baselines except for $K=20$, where all methods perform similarly.}
    \label{fig:app_drpm_vc_vary_k}
\end{figure}
In previous experiments, we assumed that we had access to the true number of clusters of the dataset, which is, of course, not true in practice. 
We thus also want to investigate the behavior of the DRPM-VC when varying the number of partitions $K$ of the DRPM and compare it to our baselines. 
In \cref{fig:app_drpm_vc_vary_k}, we show the performance of Latent GMM, VaDE, and DRPM-VC for $K \in \{6, 8, 10, 12, 14, 16, 18, 20\}$ across $5$ different seeds on FMNIST. 
DRPM-VC clearly outperforms the two baselines for all $K$, except for the extreme case of $K=20$, where all models seem to perform similarly. 
Expectedly, DRPM-VC performs well when $K$ is close to the true number of clusters, but performance decreases the farther we are from it. 
To investigate whether the model still learns meaningful patterns when we are far from the true number of clusters, we additionally generate samples from each prior of the DRPM-VC in \cref{fig:app_drpm_vc_vary_k_gens}. 
Interestingly, we can see that DRPM-VC still detects certain structures in the dataset but starts breaking specific FMNIST categories apart. 
For instance, it splits the clusters sandals/boots into clusters with (Priors $6/15$) and without (Priors $4/9$) heels or the cluster T-shirt into clothing of lighter (Prior $0$) and darker (Prior $3$) color.
Thus, DRPM-VC allows us to investigate clusters in datasets hierarchically, where low values of $K$ detect more coarse and higher values of $K$ more fine-grained patterns in the data.
\begin{figure}
    \centering
    \includegraphics[width=0.9\textwidth]{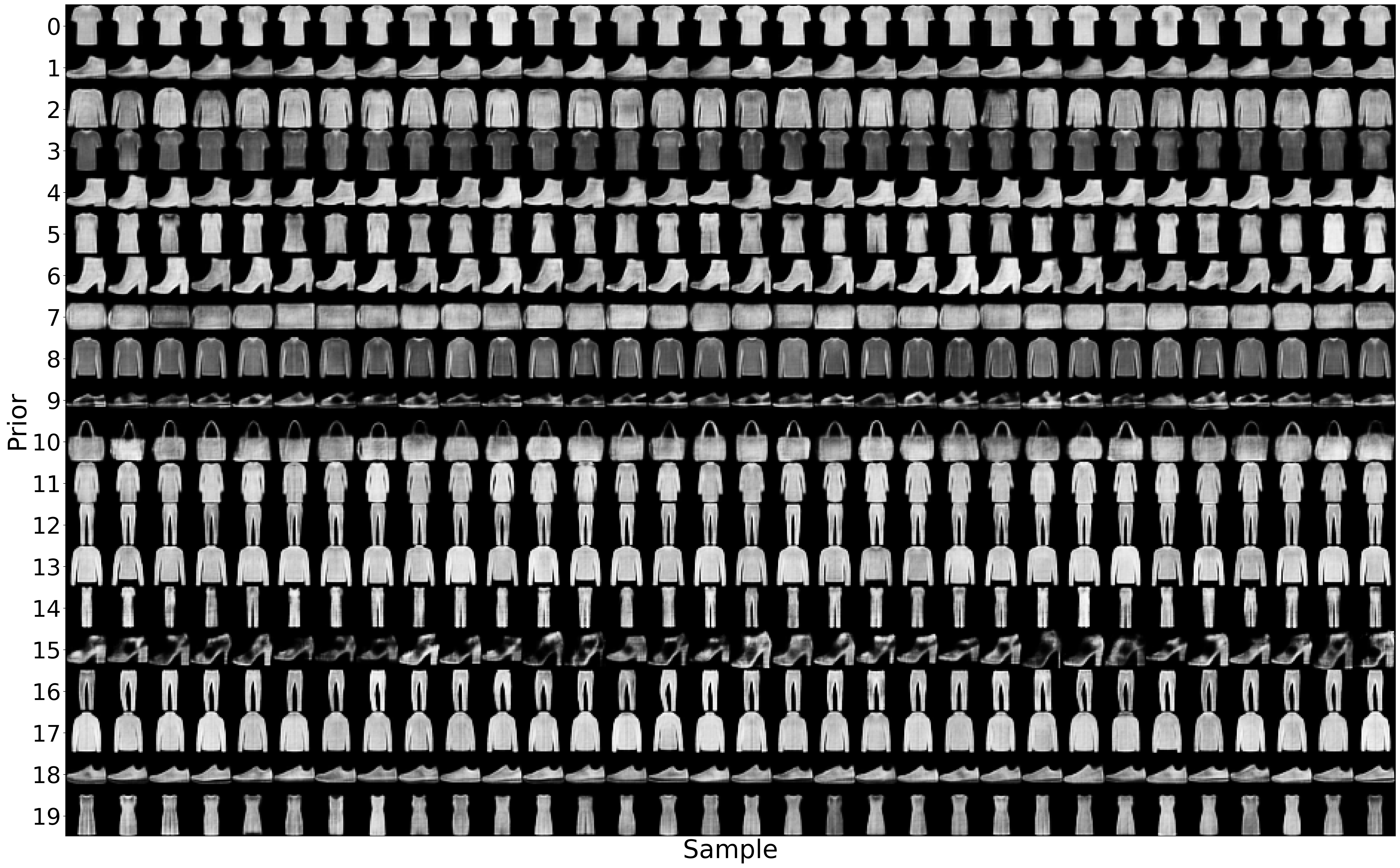}
    \caption{We generate $32$ images from each prior when training DRPM-VC on Fashion MNIST with $K=20$. DRPM-VC starts detecting more fine-grained patterns in the data and splits some of the original clusters, such as sandals, boots, t-shirts, or handbags, into more specific sub-categories.}
    \label{fig:app_drpm_vc_vary_k_gens}
\end{figure}
\subsubsection{Clustering of STL-10}
\begin{table}[t]
    \centering
    \caption{We compare the clustering performance of the DRPM-VC and VaDE on the test set of STL-10 across five seeds. We measure performance in terms of the Normalized Mutual Information (NMI), Adjusted Rand Index (ARI), and cluster accuracy (ACC) over five seeds and put the best model in bold.}
    \label{tab:ablation_stl10}
    \vskip 0.15in
    \begin{center}
    \begin{sc}
\begin{tabular}{lccc}
\toprule
{} & \multicolumn{3}{c}{STL-10} \\
\cmidrule(l){2-4}
{} &                         NMI &                         ARI &                         ACC\\
\midrule
VaDE       &  $0.80{\scriptstyle\pm0.03}$ &  $0.76{\scriptstyle\pm0.04}$ &  $0.88{\scriptstyle\pm0.02}$  \\
DRPM-VC       &  $\bm{0.83}{\scriptstyle\pm0.01}$ &  $\bm{0.80}{\scriptstyle\pm0.02}$ &  $\bm{0.91}{\scriptstyle\pm0.00}$ \\
\bottomrule
\end{tabular}
    \end{sc}
    \end{center}
    \vskip -0.1in
\end{table}
We include an additional variational clustering ablation on the STL-$10$ dataset \citep{coates_analysis_2011} that follows the experimental setup of \citet[VaDE, ][]{jiang2016}.
As noted by \citet{jiang2016}, variational clustering algorithms have difficulties clustering in raw pixel space for natural images, which is why we apply DRPM-VC to representations extracted using an Imagenet \citep{deng_imagenet_2009} pretrained ResNet-$50$ \citep{he_deep_2015} as done in \citet[VaDE, ][]{jiang2016}.
Note that these results are hard to interpret, as STL-$10$ is a subset of Imagenet, meaning that representations are relatively easy to cluster as the pretraining task already encourages separation by label.
For this reason, we list this experiment separately in the appendix instead of including it in the main text.
We present the results of this ablation in \cref{tab:ablation_stl10}, where we again confirm that modeling cluster assignments with our DRPM can improve upon previous work that modeled the assignments independently.

\subsection{Variational Partitioning of Generative Factors}
\label{sec:app_experiments_ws_learning}
\begin{figure}
    \centering
    \centering
    \includegraphics[width=0.65\textwidth]{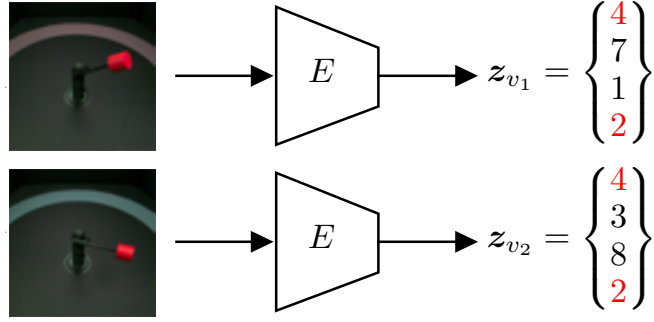}
    \caption{
    Motivation for the Partitioning of Generative Factors under weak supervision.
    The knowledge about the data collection process provides a weak supervision signal.
    We ohave access to a dataset of pairs of images of the same robot arm with a subset of shared generative factors (in red).
    We want to learn the shared and independent generative factors in addition to learning from the data.
    The images of the robot arms are taken from \citet{locatello2020weakly} but originate from the mpi3d toy dataset (see \url{https://github.com/rr-learning/disentanglement_dataset}).
    The image is from \citet{sutter2023mvhg} and their ICLR 2023 presentation video (see \url{https://iclr.cc/virtual/2023/poster/10707}).
    }
    \label{fig:app_exp_wsl_motivation}
\end{figure}

We assume that we have access to multiple instances or views of the same event, where only a subset of generative factors changes between views.
The knowledge about the data collection process provides a form of weak supervision.
For example, we have two images of a robot arm as depicted here on the left side (see \citep{gondal2019}), which we would describe using high-level concepts such as color, position or rotation degree.
From the data collection process, we know that a subset of these generative factors is shared between the two views
We do not know how many generative factors there are in total nor how many of them are shared.
More precisely, looking at the robot arm, we do not know that the views share two latent factors, depicted in red, out of a total of four factors.
Please note that we chose four generative in \Cref{fig:app_exp_wsl_motivation} only for illustrative reason as there are seven generative factors in the \emph{mpi3d} toy dataset.
Hence, the goal of learning under weak supervision is not only to infer good representations, but also inferring the number of shared and independent generative factors.
Learning what is shared and what is independent lets us reason about the group structure without requiring explicit knowledge in the form of expensive labeling.
Additionally, leveraging weak supervision and, hence, the underlying group structure holds promise for learning more generalizable and disentangled representations (see \citep[e.g.,][]{locatello2020weakly}).

\subsubsection{Generative Model}
\label{sec:app_experiments_ws_learning_generative_model}
We assume the following generative model for DRPM-VAE
\begin{align}
    p(\bm{X}) = & \int_{\bm{z}} p(\bm{X}, \bm{z}) d\bm{z} \\
    = & \int_{\bm{z}} p(\bm{X} \mid \bm{z}) p(\bm{z}) d\bm{z}
\end{align}
where $\bm{z} = \{ \bm{z}_s, \bm{z}_1, \bm{z}_2 \}$.
The two frames share an unknown number $n_s$ of generative latent factors $\bm{z}_s$, and an unknown number, $n_1$ and $n_2$, of independent factors $\bm{z}_1$ and $\bm{z}_2$.
The RPM infers $n_k$ and $\bm{z}_k$ using $Y$.
Hence, the generative model extends to
\begin{align}
    p(\bm{X}) = & \int_{\bm{z}} p(\bm{X} \mid \bm{z}) \sum_{Y} p(\bm{z} \mid Y) p(Y) d\bm{z} \nonumber \\
    = & \int_{\bm{z}} p(\bm{x}_1, \bm{x}_2 \mid \bm{z}_s, \bm{z}_1, \bm{z}_2) \sum_{Y} p(\bm{z} \mid Y) p(Y) d\bm{z} \nonumber \\
    = & \int_{\bm{z}_s, \bm{z}_1, \bm{z}_2} p(\bm{x}_1 \mid \bm{z}_s, \bm{z}_1) p(\bm{x}_2 \mid \bm{z}_s, \bm{z}_2) \sum_{Y} p(\bm{z}_s, \bm{z}_1, \bm{z}_2 \mid Y) p(Y) d\bm{z}_s d\bm{z}_1 d\bm{z}_2
\end{align}
\Cref{fig:app_exp_wsl_graphical_model} shows the generative and inference models assumptions in a graphical model.

\begin{figure}
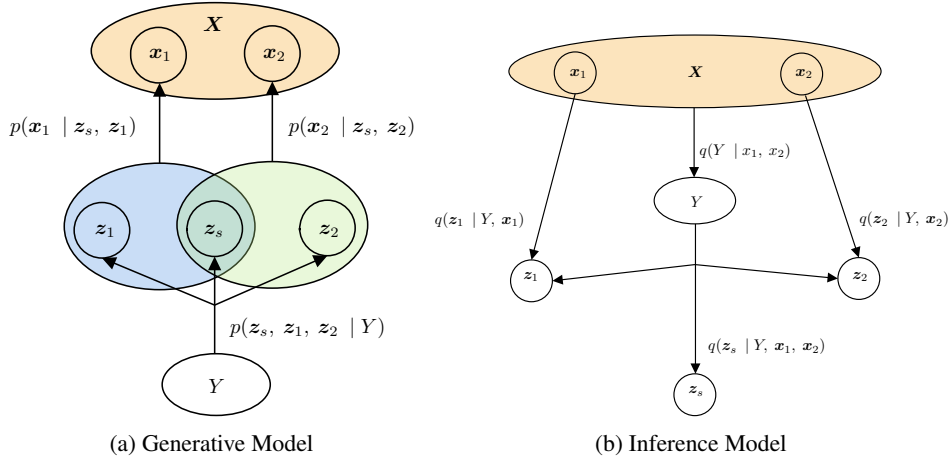

    \centering
    \begin{subfigure}[b]{0.40\textwidth}
            \centering
            \includegraphics[width=1.0\textwidth]{figures/wsl_generative_model.tex}
            \caption{Generative Model}
            \label{fig:exp_wsl_generative_model}
    \end{subfigure}
    \begin{subfigure}[b]{0.5\textwidth}
            \centering
            \includegraphics[width=1.0\textwidth]{figures/wsl_inference_model.tex}
            \caption{Inference Model}
            \label{fig:exp_wsl_inference_model}
    \end{subfigure}
    \caption{
    Graphical Models for DRPM-VAE models in the weakly-supervised experiment.
    }
    \label{fig:app_exp_wsl_graphical_model}
\end{figure}

\subsubsection{DRPM ELBO}
\label{sec:app_exp_ws_learning_elbo}
We derive the following ELBO using the posterior approximation $q(\bm{z}, Y \mid \bm{X})$
\begin{align}
    \mathcal{L}_{ELBO}(\bm{X}) = &~ \E_{q(\bm{z}, Y \mid \bm{X})} \left[ \log p(\bm{X} \mid \bm{z}, Y) - \log \frac{q(\bm{z}, Y \mid \bm{X})}{p(\bm{z}, Y)} \right] \\
    = &~ \E_{q(\bm{z}, Y \mid \bm{X})} \left[ \log p(\bm{X} \mid \bm{z}) - \log \frac{q(\bm{z} \mid Y, \bm{X}) q(Y \mid \bm{X})}{p(\bm{z}) p(Y)} \right] \\
    = &~ \E_{q(\bm{z}, Y \mid \bm{X})} \left[ \log p(\bm{x}_1, \bm{x}_2 \mid \bm{z}) - \log \frac{q(\bm{z} \mid Y, \bm{X})}{p(\bm{z})} - \log \frac{q(Y \mid \bm{X})}{p(Y)} \right] \\
    = &~ \E_{q(\bm{z}, Y \mid \bm{X})} \left[ \log p(\bm{x}_1 \mid \bm{z}_s, \bm{z}_1) \right]  - \E_{q(\bm{z}, Y \mid \bm{X})} \left[ \log p(\bm{x}_2 \mid \bm{z}_s, \bm{z}_2) \right] \nonumber \\
    &~ - \E_{q(\bm{z}, Y \mid \bm{X})} \left[ \log \frac{q(\bm{z}_s, \bm{z}_1, \bm{z}_2 \mid Y, \bm{X})}{p(\bm{z}_s, \bm{z}_1, \bm{z}_2)} \right] - \E_{q(\bm{z}, Y \mid \bm{X})} \left[ \log \frac{q(Y \mid \bm{X})}{p(Y)} \right]
\end{align}
Following \Cref{thm:bounds_p_Y}, we are able to optimize DRPM-VAE using the following ELBO $\mathcal{L}_{ELBO}(\bm{X})$:
\begin{align}
\label{eq:wsl_elbo}
    \mathcal{L}_{ELBO} \geq &~ \E_{q(\bm{z}, Y \mid \bm{X})} \left[ \log p(\bm{x}_1 \mid \bm{z}_s, \bm{z}_1) \right]  - \E_{q(\bm{z}, Y \mid \bm{X})} \left[ \log p(\bm{x}_2 \mid \bm{z}_s, \bm{z}_2) \right] \\
    &~ - \E_{q(\bm{z}, Y \mid \bm{X})} \left[ \log \frac{q(\bm{z}_s, \bm{z}_1, \bm{z}_2 \mid Y, \bm{X})}{p(\bm{z}_s, \bm{z}_1, \bm{z}_2)} \right] \\
    &~ - \mathbb{E}_{q(Y \mid \bm{X})}\left[\log\left(\frac{| \Pi_Y | \cdot q(\bm{n} \mid \bm{X}; \bm{\omega})}{p(\bm{n}; \bm{\omega}_p)p(\pi_Y; \bm{s}_p)}\right)\right]\\
    &~ - \log\left(\max_{\Tilde{\pi}}q(\Tilde{\pi} \mid \bm{X}; \bm{s})\right),
\end{align}
where $\pi_Y$ is the permutation that lead to $Y$ during the two-stage resampling process.
Further, we want to control the regularization strength of the KL divergences similar to the $\beta$-VAE \citep{higgins2016}.
The ELBO $\mathcal{L}(\bm{X})$ to be optimized can be written as
\begin{align}
    \mathcal{L}_{ELBO} = &~  \E_{q(\bm{z}, Y \mid \bm{X})} \left[ \log p(\bm{x}_1 \mid \bm{z}_s, \bm{z}_1) \right]  + \E_{q(\bm{z}, Y \mid \bm{X})} \left[ \log p(\bm{x}_2 \mid \bm{z}_s, \bm{z}_2) \right] \\
    &~ - \beta \cdot \E_{q(\bm{z}, Y \mid \bm{X})} \left[ \log \frac{q(\bm{z}_s, \bm{z}_1, \bm{z}_2 \mid Y, \bm{X})}{p(\bm{z}_s, \bm{z}_1, \bm{z}_2)} \right] \\
    \label{eq:wsl_kl_1}
    & - \gamma\cdot\mathbb{E}_{q(Y \mid \bm{X})}\left[\log\left(\frac{| \Pi_Y | \cdot q(\bm{n}; \bm{\omega}(\bm{X}))}{p(\bm{n}; \bm{\omega}_p)}\right)\right]\\
    \label{eq:wsl_kl_2}
    & - \delta\cdot\mathbb{E}_{q(Y \mid \bm{X})}\left[\log\left(\frac{\max_{\Tilde{\pi}}q(\Tilde{\pi}; \bm{s}(\bm{X}))}{p(\pi_Y; \bm{s}_p)}\right)\right]
\end{align}
where $\bm{s}(\bm{X})$ and $\bm{\omega}(\bm{X})$ denote distribution parameters, which are inferred from $\bm{X}$ (similar to the Gaussian parameters in the vanilla VAE).

As in vanilla VAEs, we can estimate the reconstruction term in \Cref{eq:wsl_elbo} with MCMC by applying the reparametrization trick \citep{kingma2013} to $q(\bm{z} \mid Y, \bm{X})$ to sample $L$ samples $\bm{z}^{(l)}\sim q(\bm{z} \mid Y, \bm{X})$ and compute their reconstruction error to estimate \Cref{eq:wsl_elbo}. 
Similarly, we can sample from $q(Y \mid \bm{X})$ $L$ times.
We use $L=1$ to estimate all expectations in $\mathcal{L}_{ELBO}$.

\subsubsection{Implementation and Hyperparameters}
\label{sec:app_exp_wsl_implementation}
In this experiment, we use the \texttt{disentanglement\_lib} from \citet{locatello2020weakly}.
We use the same architectures proposed in the original paper for all methods we compare to.
The baseline algorithms, LabelVAE \citep{bouchacourt2018,hosoya2018} and AdaVAE \citep{locatello2020weakly} are already implemented in \texttt{disentanglement\_lib}.
For details on the implementation of these methods we refer to the original paper from \citet{locatello2020weakly}.
HGVAE is implemented in \citet{sutter2023mvhg}.
We did not change any hyperparameters or network details.
All experiments were performed using $\beta=1$ as this is the best performing $\beta$ (according to \citet{locatello2020weakly}.
For DRPMVAE we chose $\gamma = 0.25$ for all runs.
All models are trained on 5 different random seeds and the reported results are averaged over the 5 seeds.
We report mean performance with standard deviations.

\begin{figure}
    \centering
    \includegraphics[width=0.6\textwidth]{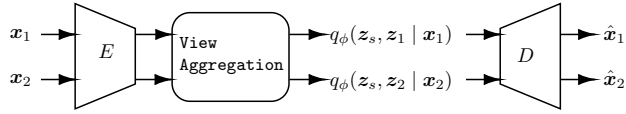}
    \caption{Setup for the weakly-supervised experiment. The three methods differ only in the \texttt{View Aggregation} module.}
    \label{fig:app_exp_wsl_architecture}
\end{figure}

We adapted \Cref{fig:app_exp_wsl_architecture} from \citet{sutter2023mvhg}.
It shows the baseline architecture, which is used for all methods.
As already stated in the main part of the paper, the methods only differ in the \texttt{View Aggregation} module, which determines the shared and independent latent factors.
Given a subset $S$ of shared latent factors, we have
\begin{align}
    q_\phi(z_i \mid \bm{x}_j) =&~ avg(q_\phi (z_i \mid \bm{x}_1), q_\phi (z_i \mid \bm{x}_2)) & \hspace{0.25cm} \forall \hspace{0.25cm} i \in S \\
    q_\phi(z_i \mid \bm{x}_j) =&~ q_\phi(z_i \mid \bm{x}_j) & \hspace{0.25cm} \text{else}
\end{align}
where $avg$ is the averaging function of choice \citep{locatello2020weakly,sutter2023mvhg} and $j \in \{1,2\}$.
The methods used (i. e. Label-VAE, Ada-VAE, HG-VAE, DRPM-VAE) differ in how to select the subset S.

For DRPM-VAE, we infer $\bm{\omega}$ from the pairwise KL-divergences $KL_{pw}$ between the latent vectors of the two views.
\begin{align}
    KL_{pw}(\bm{x}_1, \bm{x}_2) = \frac{1}{2} KL[q(\bm{z}_{1} \mid \bm{x}_1) || q(\bm{z}_{2} \mid \bm{x}_2)] + \frac{1}{2} KL[q(\bm{z}_{2} \mid \bm{x}_2) || q(\bm{z}_{1} \mid \bm{x}_1)]
\end{align}
where $q(\bm{z}_j \mid \bm{x}_j)$ are the encoder outputs of the respective images.
We do not average or sum across dimensions in the computation of $KL_{pw}(\cdot)$ such that the $KL_{pw}(\cdot)$ is $d$-dimensional, where $d$ is the latent space size.
The encoder $E$ in \Cref{fig:app_exp_wsl_architecture} maps to $\bm{\mu}(\bm{x}_j)$ and $\bm{\sigma}(\bm{x}_j)$ of a Gaussian distribution.
Hence, we can compute the KL divergences above in closed form.
Afterwards, we feed the pairwise KL divergence $KL_{pw}$ to a single fully-connected layer, which maps from $d$ to $K$ values
\begin{align}
    \log \bm{\omega} = FC(KL_{pw}(\bm{x}_1, \bm{x}_2))
\end{align}
where $d=10$ and $K=2$ in this experiment.
$d$ is the total number of latent dimensions and $K$ is the number of groups in the latent space.
To infer the scores $\bm{s}(\bm{X})$ we again rely on the pairwise KL divergence $KL_{pw}$.
Instead of using another fully-connected layer, we directly use the log-values of the pairwise KL divergence
\begin{align}
    \log \bm{s} = \log KL_{pw}(\bm{x}_1, \bm{x}_2)
\end{align}

Similar to the original works, we also anneal the temperature parameter for $p(\bm{n}; \bm{\omega})$ and $p(\pi; \bm{s})$ \citep{grover2018,sutter2023mvhg}.
We use the same annealing function as in the clustering experiment (see \Cref{sec:app_experiments_clustering}).
We anneal the temperature $\tau$ from $1.0$ to $0.5$ over the complete training time.

\subsection{Multitask Learning}
\label{sec:app_experiments_mtl}

\subsubsection{MultiMNIST Dataset}
\label{sec:app_exp_mtl_dataset}

\begin{figure}
    \centering
    \includegraphics[width=\textwidth]{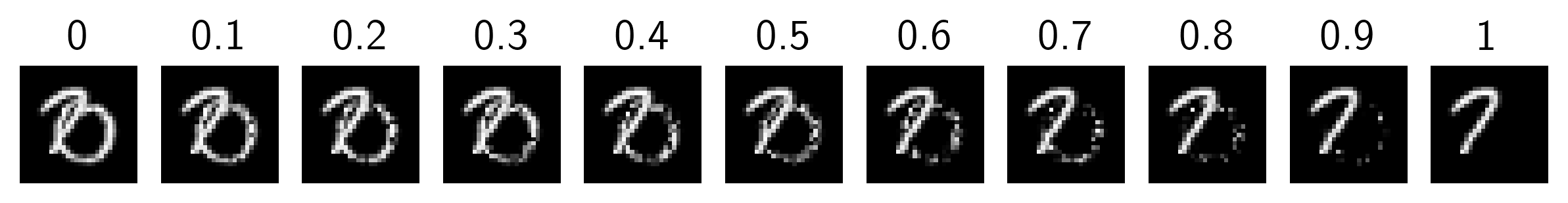}
    \caption{Samples from the noisyMultiMNIST dataset with increasing noise ratio in the right task.}
    \label{fig:app_exp_mtl_noisy_multi_mnist_samples}
\end{figure}

The different tasks in multitask learning often vary in difficulty.
To measure the effect of discrepancies in task difficulties on DRPM-MTL, we introduce the noisyMultiMNIST dataset.

The noisyMultiMNIST dataset modifies the MultiMNIST dataset \citep{sabour2017} as follows.
In the right image, we set each pixel value to zero with probability $\alpha\in[0,1]$.
This is done before merging the left and right image in order to only affect the difficulty of the right task.
Note that for $\alpha=0$ noisyMultiMNIST is equivalent to MultiMNIST and for $\alpha=1$ the right task can no longer be solved.
This allows us to control the difficulty of the right task, without changing the difficulty of the left.
A few examples are shown in~\Cref{fig:app_exp_mtl_noisy_multi_mnist_samples}.

\subsubsection{Implementation \& Architecture}
\label{sec:app_exp_mtl_implementation}
The multitask loss function for the \emph{MultiMNIST} dataset is
\begin{align}
    \mathbb{L} = w_L \mathbb{L}_L + w_R \mathbb{L}_R
\end{align}
where $w_L$ and $w_L$ are the loss weights, and $\mathbb{L}_L$ and $\mathbb{L}_R$ are the individual loss terms for the respective tasks $L$ and $R$.
In our experiments, we set the task weights to be equal for all dataset versions, i.e. $w_L = w_R = 0.5$.
We use these loss weights for the DRPM-MTL and ULS method.
For the ULS method, it is by definition and to see the influence of a mismatch in loss weights.
The DRPM-MTL method on the other hand does not need additional weighting of loss terms.
The task losses are defined as cross-entropy losses
\begin{align}
    \mathbb{L}_t = - \sum_{c=1}^{C_t} \bm{gt}_{c} \log \bm{p}_{c} = - \bm{gt}^T \log \bm{p}
\end{align}
where $C_L = C_R = 10$ for MultiMNIST, $\bm{gt}$ is a one-hot encoded label vector and $\bm{p}$ is a categorical vector of estimated class assignments probabilities, i.e. $\sum_c \bm{p}_{c} = 1$. 

The predictions for the individual tasks $\bm{p}_{t}$ are given as
\begin{align}
    \bm{p}_{t} &= h_{\theta_t}(\bm{z}), \hspace{0.25cm} \text{where} \\
    \bm{z} &= \text{enc}_\theta(\bm{x})
\end{align}
for a sample $\bm{x} \in \bm{X}$ (see also \Cref{fig:app_exp_mtl_multimnist_pipeline}).
We use an adaptation of the LeNet-5 architecture~\cite{lecun1998} to the multitask learning problem \citep{sener2018}.
Both DRPM-MTL and ULS use the same network enc$_\theta(\cdot)$ with shared architecture up to some layer for both tasks, after which the network branches into two task-specific sub-networks that perform the classifications.
Different to the ULS method, the task-specific networks in the DRPM-MTL pipeline predict the digit using only a subset of $\bm{z}$.
DRPM-MTL uses the following prediction scheme
\begin{align}
    \bm{p}_{t} &= h_{\theta_t}(\bm{z}_t), \hspace{0.25cm} \text{where} \\
    \bm{z}_t &= \bm{z} \odot \bm{y}_t \\
    \bm{y}_t &= \text{DRPM}(\bm{\omega}, \bm{s})_t = \text{DRPM}(\text{enc}_\varphi(\bm{x}))_t
\end{align}
The DRPM-MTL encoder first predicts a latent representation $\mathbf z\gets\mathrm{enc}_\theta(\bm{x})$, where $\bm{x}$ is the input image.
Using the same encoder architecture but different parameters $\varphi$ , we predict a partitioning encoding $\bm{z}^\prime\gets \mathrm{enc}_\varphi(\bm{x})$.
With a single linear layer per DRPM log-parameter $\log \bm{\omega}$ and $\log \bm{s}$ are computed.
Next we infer the partition masks $\bm{y}_{\mathrm L},\bm{y}_{\mathrm R}\sim p(\bm{y}_{\mathrm L},\bm{y}_{\mathrm R};\bm{\omega},\bm{s})$.
We then feed the masked latent representations $\bm{z}_{\mathrm L}\gets\bm{z}\odot\bm{y}_{\mathrm L}$ and $\bm{z}_{\mathrm R}\gets\bm{z}\odot\bm{y}_{\mathrm R}$ into the task specific classification networks $h_{\theta_{\mathrm L}}(\bm{z}_{\mathrm L})$ and $h_{\theta_{\mathrm R}}(\bm{z}_{\mathrm R})$ respectively to obtain the task specific predictions.
Since the two tasks in the MultiMNIST dataset are of similar nature, the task-specifc networks $h_{\theta_{\mathrm L}}$ and $h_{\theta_{\mathrm R}}$ share the same architecture, but have different parameters.

\begin{figure}
    \centering
    \input{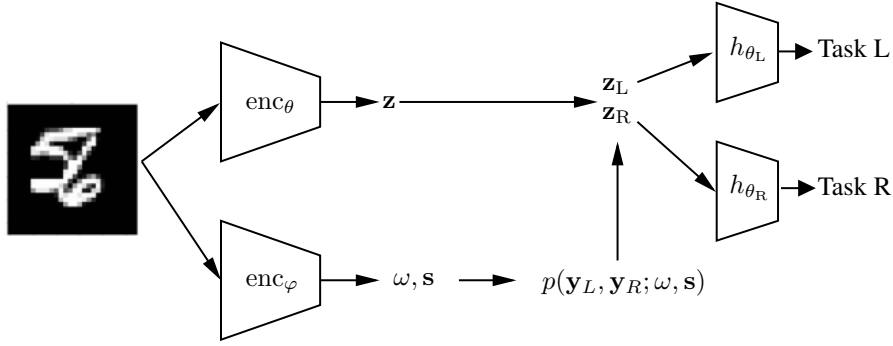}
    \caption{Overview of the multitask learning pipeline of the DRPM-MTL method.}
    \label{fig:app_exp_mtl_multimnist_pipeline}
\end{figure}

\subsubsection{Training}
\label{sec:app_exp_mtl_training}
For both the ULS and the DRPM-MTL model, we use the Adam optimizer with learning rate $0.0005$ and train them for 200 epochs with a batch size of 256.
We again choose an exponential schedule for the temperature $\tau$ and anneal it over the training time, as is explained in \Cref{sec:app_clustering_training}.

In our ablation we use $\alpha\in\{0, 0.1, 0.2, \ldots, 0.9\}$ and train each model with five different seeds.
The reported accuracies and partition sizes are then means over the five seeds with the error bands indicating the variance and standard deviation respectively.
We evaluate each model after the epoch with the best average test accuracy.

%
%

\subsubsection{CelebA for MTL}
\label{sec:app_exp_mtl_celeba}
In addition to the experiment shown in \Cref{sec:exp_multitask}, we show additional results for DRPM-MTL on the CelebA dataset \citep{liu2015}.
In MTL, each of the $40$ attributes of the CelebA dataset serves as an individual task.
Hence, using CelebA for MTL results is a $40$ task learning problem making the scaling of different task losses more difficult compared to MultiMNIST (see \Cref{sec:exp_multitask}) where we only need to scale two different tasks.

We again use the newly introduced DRPM-MTL method and compare it to the ULS model.
We use the same pipeline as for MultiMNIST dataset but with different encoders and hyperparameters (see \Cref{sec:app_exp_mtl_implementation,sec:app_exp_mtl_training}).
We use the pipeline of \citet{sener2018} with a ResNet-based encoder to map an image to a representation of $d = 64$ dimensions.
For architectural details, we refer to \citet{sener2018} and \url{https://github.com/isl-org/MultiObjectiveOptimization}.

Again, ULS inputs all $d = 64$ dimensions to the task-specific sub-networks whereas DRPM-MTL partitions the intermediate representations into $n_T$ different subsets, which are then fed to the respective task networks.
$n_T$ is the number of tasks.

Compared to the MultiMNIST experiment (see \Cref{sec:app_exp_mtl_implementation}), we introduce an additional regularization for the DRPM-MTL method.
The additional regularization is based on the upper bound in \Cref{thm:bounds_p_Y} and is penalizing size of $|\Pi_Y|$ for a given $\bm{n}$.
Hence, the loss function changes to
\begin{align}
    \mathbb{L} = &~ \frac{1}{n_T} \sum_{t=1}^{n_T} \mathbb{L}_t + \lambda \cdot \mathbb{L}_{\text{reg}} \\
    \text{where} \hspace{0.25cm} \mathbb{L}_{\text{reg}} = &~ \log \left( \prod_{t=1}^{n_T} n_t! \right) = \sum_{t=1}^{n_T} \log \Gamma(n_t + 1)
\end{align}
For all versions of the experiment (i.e. $n_T \in{10, 20, 40}$), we set $\lambda = 0.015 \approx \frac{1}{64}$, which is the number of elements we want to partition.
The task losses $\mathbb{L}_t$ are simple BCE losses similar to the MultiMNIST experiments but with two classes per task only.

\begin{table}
\caption{
Results for the MTL experiment on the CelebA dataset.
We compare the DRPM-MTL again to the ULS method.
We assess the performance of both methods on two sub-experiment of the CelebA experiment.
In \Cref{tab:app_exp_mtl_celeba_10}, we form a MTL experiment with $10$ different tasks.
In \Cref{tab:app_exp_mtl_celeba_20}, we form a MTL experiment with $20$ different tasks where the first $10$ tasks are the same as in the $10$ tasks experiment. 
And in Table 6c, a MTL experiment with all $40$ tasks from the dataset.
We train both methods for $50$ epochs using a learning rate of $0.0001$ and a batch size of $128$.
The temperature annealing schedule remains the same as in the MultiMNIST experiment.
We report the per task classification accuracy in percentages (\%) as well as the average task accuracy in the bottow row of both subtables.
}
\label{tab:app_exp_mtl_celeba}
\begin{subtable}[t]{0.45\linewidth}
\caption{10 Tasks}
\label{tab:app_exp_mtl_celeba_10}
\centering
    \begin{tabular}{lcc}
    \toprule
    {} &                         ULS &                        DRPM \\
    \midrule
    T0         &  $92.0{\scriptstyle\pm0.5}$ &  $\bm{92.4{\scriptstyle\pm0.5}}$ \\
    T1         &  $\bm{83.8{\scriptstyle\pm0.4}}$ &  $83.7{\scriptstyle\pm0.2}$ \\
    T2         &  $80.2{\scriptstyle\pm0.5}$ &  $80.2{\scriptstyle\pm0.4}$ \\
    T3         &  $81.9{\scriptstyle\pm0.8}$ &  $\bm{82.2{\scriptstyle\pm0.6}}$ \\
    T4         &  $98.5{\scriptstyle\pm0.2}$ &  $98.5{\scriptstyle\pm0.1}$ \\
    T5         &  $95.2{\scriptstyle\pm0.2}$ &  $\bm{95.3{\scriptstyle\pm0.2}}$ \\
    T6         &  $80.0{\scriptstyle\pm1.4}$ &  $\bm{82.4{\scriptstyle\pm0.4}}$ \\
    T7         &  $82.0{\scriptstyle\pm0.3}$ &  $\bm{82.2{\scriptstyle\pm0.2}}$ \\
    T8         &  $89.7{\scriptstyle\pm0.7}$ &  $\bm{90.7{\scriptstyle\pm0.2}}$ \\
    T9         &  $94.6{\scriptstyle\pm0.5}$ &  $\bm{95.0{\scriptstyle\pm0.2}}$ \\
    avg(Tasks) &  $87.8{\scriptstyle\pm0.3}$ &  $\bm{88.3{\scriptstyle\pm0.1}}$ \\
    \bottomrule
    \end{tabular}
\end{subtable}
\hfill
\begin{subtable}[t]{0.45\linewidth}
\caption{20 Tasks}
\label{tab:app_exp_mtl_celeba_20}
\centering
    \begin{tabular}{lcc}
    \toprule
    {} &                         ULS &                        DRPM \\
    \midrule
    T0         &  $92.4{\scriptstyle\pm0.7}$ &  $\bm{93.0{\scriptstyle\pm0.2}}$ \\
    T1         &  $83.7{\scriptstyle\pm0.6}$ &  $\bm{83.9{\scriptstyle\pm0.7}}$ \\
    T2         &  $79.9{\scriptstyle\pm0.6}$ &  $\bm{80.1{\scriptstyle\pm0.4}}$ \\
    T3         &  $82.4{\scriptstyle\pm0.5}$ &  $\bm{83.0{\scriptstyle\pm0.7}}$ \\
    T4         &  $98.6{\scriptstyle\pm0.1}$ &  $98.6{\scriptstyle\pm0.1}$ \\
    T5         &  $95.2{\scriptstyle\pm0.1}$ &  $\bm{95.5{\scriptstyle\pm0.0}}$ \\
    T6         &  $82.0{\scriptstyle\pm1.3}$ &  $\bm{84.4{\scriptstyle\pm0.4}}$ \\
    T7         &  $82.5{\scriptstyle\pm0.1}$ &  $\bm{82.8{\scriptstyle\pm0.2}}$ \\
    T8         &  $\bm{90.1{\scriptstyle\pm0.9}}$ &  $91.0{\scriptstyle\pm0.4}$ \\
    T9         &  $94.7{\scriptstyle\pm0.2}$ &  $\bm{95.1{\scriptstyle\pm0.1}}$ \\
    T10        &  $95.9{\scriptstyle\pm0.1}$ &  $95.9{\scriptstyle\pm0.1}$ \\
    T11        &  $\bm{84.9{\scriptstyle\pm0.1}}$ &  $84.6{\scriptstyle\pm0.3}$ \\
    T12        &  $91.0{\scriptstyle\pm0.4}$ &  $\bm{91.6{\scriptstyle\pm0.2}}$ \\
    T13        &  $94.7{\scriptstyle\pm0.1}$ &  $\bm{94.9{\scriptstyle\pm0.1}}$ \\
    T14        &  $95.4{\scriptstyle\pm0.3}$ &  $\bm{96.0{\scriptstyle\pm0.1}}$ \\
    T15        &  $99.2{\scriptstyle\pm0.0}$ &  $99.2{\scriptstyle\pm0.1}$ \\
    T16        &  $95.8{\scriptstyle\pm0.3}$ &  $\bm{96.0{\scriptstyle\pm0.1}}$ \\
    T17        &  $97.3{\scriptstyle\pm0.3}$ &  $\bm{97.5{\scriptstyle\pm0.2}}$ \\
    T18        &  $91.2{\scriptstyle\pm0.3}$ &  $91.2{\scriptstyle\pm0.1}$ \\
    T19        &  $87.0{\scriptstyle\pm0.3}$ &  $\bm{87.3{\scriptstyle\pm0.2}}$ \\
    avg(Tasks) &  $90.7{\scriptstyle\pm0.2}$ &  $\bm{91.1{\scriptstyle\pm0.1}}$ \\
    \bottomrule
    \end{tabular}
\end{subtable}
\end{table}

\begin{table}[]
\label{tab:app_exp_mtl_celeba_40}
\begin{subtable}[t]{0.45\linewidth}
\caption*{(c) 40 Tasks}
\centering
\begin{tabular}{lcc}
\toprule
{} &                         ULS &                        DRPM \\
\midrule
T0         &  $\bm{92.9{\scriptstyle\pm0.6}}$ &  $92.6{\scriptstyle\pm0.5}$ \\
T1         &  $83.4{\scriptstyle\pm0.5}$ &  $\bm{83.8{\scriptstyle\pm0.5}}$ \\
T2         &  $80.6{\scriptstyle\pm0.8}$ &  $80.6{\scriptstyle\pm0.3}$ \\
T3         &  $82.8{\scriptstyle\pm1.1}$ &  $\bm{83.1{\scriptstyle\pm0.4}}$ \\
T4         &  $98.6{\scriptstyle\pm0.1}$ &  $\bm{98.7{\scriptstyle\pm0.1}}$ \\
T5         &  $95.4{\scriptstyle\pm0.1}$ &  $95.4{\scriptstyle\pm0.2}$ \\
T6         &  $81.6{\scriptstyle\pm2.2}$ &  $\bm{84.4{\scriptstyle\pm1.2}}$ \\
T7         &  $\bm{82.7{\scriptstyle\pm0.3}}$ &  $82.6{\scriptstyle\pm0.2}$ \\
T8         &  $90.1{\scriptstyle\pm0.8}$ &  $\bm{90.7{\scriptstyle\pm0.5}}$ \\
T9         &  $94.8{\scriptstyle\pm0.2}$ &  $\bm{95.0{\scriptstyle\pm0.1}}$ \\
T10        &  $\bm{96.0{\scriptstyle\pm0.2}}$ &  $95.9{\scriptstyle\pm0.1}$ \\
T11        &  $85.0{\scriptstyle\pm0.5}$ &  $85.0{\scriptstyle\pm0.3}$ \\
T12        &  $91.6{\scriptstyle\pm0.6}$ &  $\bm{92.2{\scriptstyle\pm0.1}}$ \\
T13        &  $94.8{\scriptstyle\pm0.2}$ &  $94.8{\scriptstyle\pm0.2}$ \\
T14        &  $95.6{\scriptstyle\pm0.4}$ &  $\bm{95.7{\scriptstyle\pm0.2}}$ \\
T15        &  $99.3{\scriptstyle\pm0.1}$ &  $99.3{\scriptstyle\pm0.1}$ \\
T16        &  $\bm{96.2{\scriptstyle\pm0.2}}$ &  $96.1{\scriptstyle\pm0.1}$ \\
T17        &  $97.4{\scriptstyle\pm0.2}$ &  $\bm{97.5{\scriptstyle\pm0.1}}$ \\
T18        &  $91.1{\scriptstyle\pm0.1}$ &  $\bm{91.3{\scriptstyle\pm0.3}}$ \\
T19        &  $\bm{87.1{\scriptstyle\pm0.3}}$ &  $87.0{\scriptstyle\pm0.5}$ \\
T20        &  $\bm{98.6{\scriptstyle\pm0.0}}$ &  $98.5{\scriptstyle\pm0.1}$ \\
T21        &  $93.6{\scriptstyle\pm0.1}$ &  $93.6{\scriptstyle\pm0.2}$ \\
T22        &  $\bm{96.0{\scriptstyle\pm0.1}}$ &  $95.9{\scriptstyle\pm0.2}$ \\
T23        &  $91.8{\scriptstyle\pm0.4}$ &  $\bm{92.6{\scriptstyle\pm0.6}}$ \\
T24        &  $95.4{\scriptstyle\pm0.2}$ &  $\bm{95.5{\scriptstyle\pm0.1}}$ \\
T25        &  $71.5{\scriptstyle\pm1.6}$ &  $\bm{72.9{\scriptstyle\pm1.2}}$ \\
T26        &  $96.0{\scriptstyle\pm0.4}$ &  $\bm{96.4{\scriptstyle\pm0.2}}$ \\
T27        &  $74.9{\scriptstyle\pm0.5}$ &  $\bm{76.1{\scriptstyle\pm0.3}}$ \\
T28        &  $93.5{\scriptstyle\pm0.5}$ &  $\bm{94.0{\scriptstyle\pm0.3}}$ \\
T29        &  $92.8{\scriptstyle\pm0.2}$ &  $\bm{93.5{\scriptstyle\pm0.2}}$ \\
T30        &  $\bm{96.3{\scriptstyle\pm0.3}}$ &  $96.1{\scriptstyle\pm0.3}$ \\
T31        &  $92.7{\scriptstyle\pm0.2}$ &  $\bm{93.0{\scriptstyle\pm0.1}}$ \\
T32        &  $81.0{\scriptstyle\pm0.4}$ &  $\bm{82.1{\scriptstyle\pm0.4}}$ \\
T33        &  $83.3{\scriptstyle\pm0.2}$ &  $\bm{83.8{\scriptstyle\pm0.6}}$ \\
T34        &  $89.2{\scriptstyle\pm0.2}$ &  $\bm{89.3{\scriptstyle\pm0.1}}$ \\
T35        &  $98.9{\scriptstyle\pm0.0}$ &  $\bm{99.0{\scriptstyle\pm0.0}}$ \\
T36        &  $91.7{\scriptstyle\pm0.1}$ &  $\bm{91.8{\scriptstyle\pm0.2}}$ \\
T37        &  $87.6{\scriptstyle\pm0.6}$ &  $\bm{88.1{\scriptstyle\pm0.1}}$ \\
T38        &  $95.5{\scriptstyle\pm0.5}$ &  $\bm{95.6{\scriptstyle\pm0.2}}$ \\
T39        &  $\bm{87.9{\scriptstyle\pm0.1}}$ &  $82.7{\scriptstyle\pm3.2}$ \\
avg(Tasks) &  $90.6{\scriptstyle\pm0.1}$ &  $\bm{90.8{\scriptstyle\pm0.1}}$ \\
\bottomrule
\end{tabular}
\end{subtable}
\end{table}

We perform two different experiments based on the CelebA experiment.
First, we use form a MTL experiments using the first $10$ attributes out of the $40$ attributes.
Second, we increase the number of different tasks to $20$.
Because we sort the attributes alphabetically in both cases, the first $10$ tasks are shared between the two experiment versions.
And third, we set $n_T = 40$, where the first $20$ tasks are the shared with the previous experiment.

\Cref{tab:app_exp_mtl_celeba} shows the results of all CelebA experiments for both methods, ULS and DRPM-MTL.
We see that the DRPM-MTL scales better to a larger number of tasks compared to the ULS method, highlighting the importance of finding new ways of automatic scaling between tasks.
Interestingly, the DRPM-MTL outperforms the ULS method on most tasks for the $20$-tasks experiment even though it has only access to $d/n_T = 64/20 = 3.2$ dimensions on average.
And even more extrem for $n_T = 40$, DRPM-MTL on average has only access to $d/n_T = 64/40 = 1.6$ dimensions.
On the other hand, the ULS method can access the full set of $64$ dimensions for every single task.
\subsection{Supervised Learning}
Given the true partition $Y$, we can also adapt the DRPM to learn partitions in a supervised fashion.
One instance where we know the true partition of elements is in the case of classification.
There, for a given batch $X$ containing $B$ samples, we have $Y:=(\bm{y}_1,\dots,\bm{y}_B)$ where $\bm{y}_i\in\mathbb{R}^K$ is the one-hot encoding of the label of the $i$-th sample in the batch.
In this ablation, we infer $\hat{Y}:=DRPM(\bm{\omega}_{\theta_1}(X), \bm{s}_{\theta_2}(X))$, where we compute $\bm{\omega}(X)$ and $\bm{s}(X)$ as in the variational clustering experiment (\cref{app:sec_clustering_arch}) and use the DRPM without resampling as in the multitask experiment (\cref{sec:exp_multitask}).
We optimize parameters $\theta_1$ and $\theta_2$ by minimizing the following loss:
\begin{align*}
    \mathcal{L}(X,Y):&= \mathcal{L}_1(X,Y) + \alpha \mathcal{L}_2(X,Y)\\
    \mathcal{L}_1(X,Y) &:=\frac{1}{B}\mathcal{L}_{CE}(\hat{Y},Y)\\
    \mathcal{L}_2(X,Y) &:=\frac{1}{K}\|\bm{n}(X)-\sum_{i=1}^B\bm{y}_i\|^2,\\
\end{align*}
where $\mathcal{L}_{CE}$ denotes the standard cross-entropy loss, $\bm{n}$ denotes the output of the MVHG leading to $\hat{Y}$, and $\mathcal{L}_2$ ensures that $\bm{n}_i$ matches the number of appearances of label $i$ in the current batch.
Using this simple training scheme, we achieve an f1-score of $96.43\pm0.02$ and $82.71\pm0.03$ on MNIST and FMNIST, respectively, further demonstrating the applicability and versatility of the DRPM to a number of different problems.

\begin{figure}[h]
    \centering
    \includegraphics[width=\textwidth]{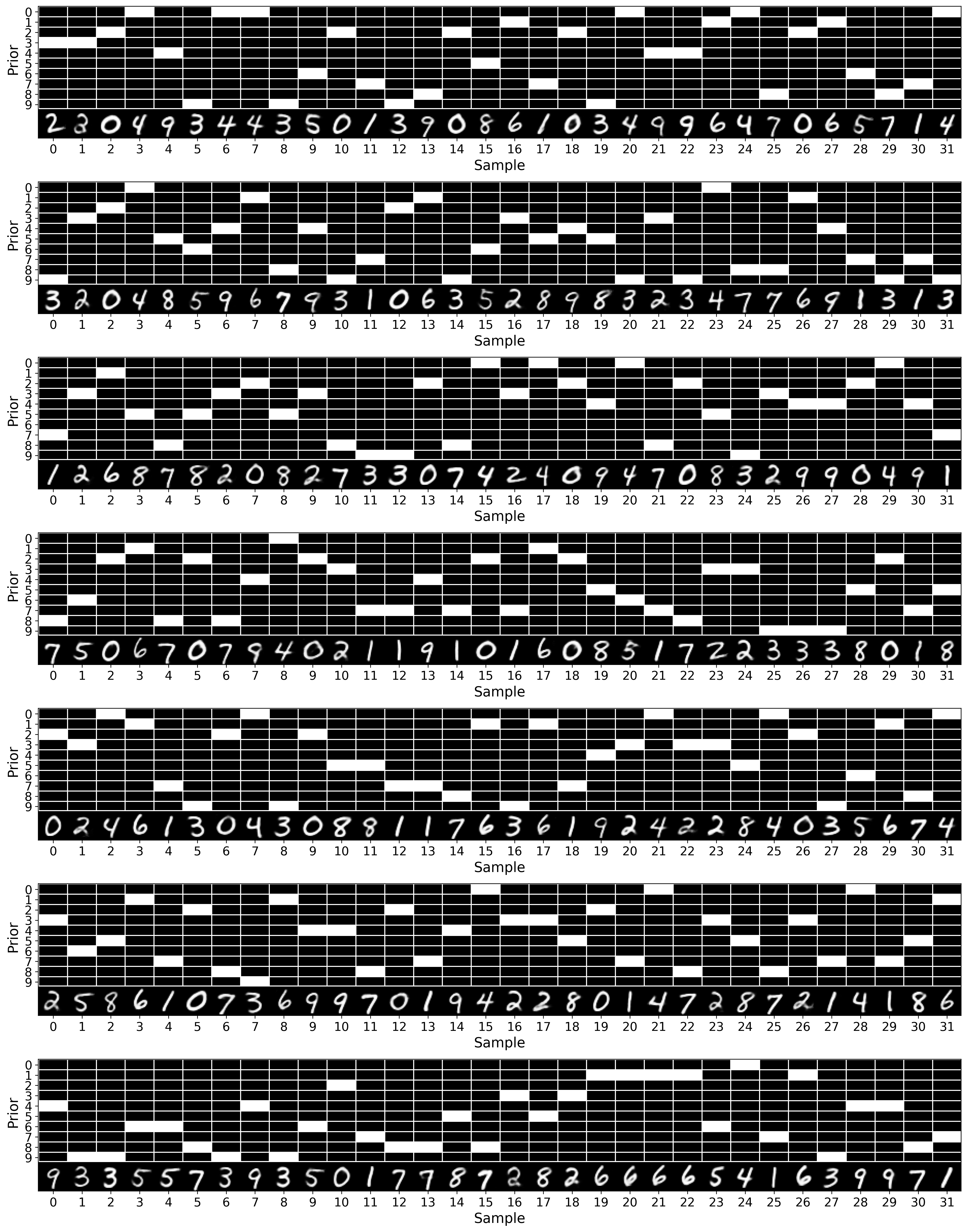}
    \caption{Additional partition samples from the DRPM-VC trained on MNIST. The different sets of each partition match each of the digits very well, even after repeatedly sampling from the model.}
    \label{fig:more_partition_samples_mnist}
\end{figure}
\begin{figure}[h]
    \centering
    \includegraphics[width=\textwidth]{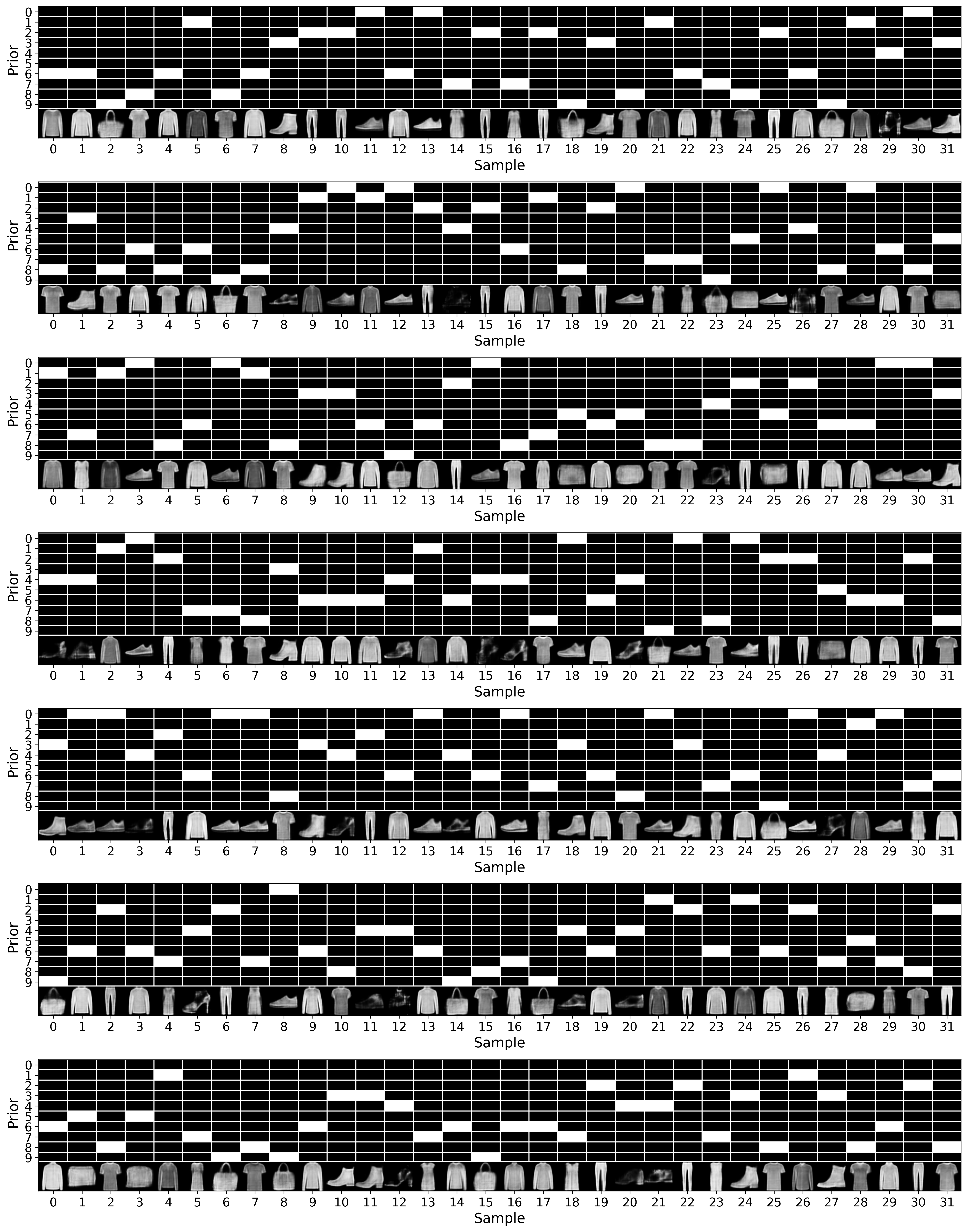}
    \caption{Additional partition samples from the DRPM-VC trained on FMNIST. Most clusters accurately represent one of the clothing categories and generate new samples very well. The only problem is with the handbag class, where the DRPM-VC learns two different clusters for different kinds of handbags (cluster $5$ and $6$).}
    \label{fig:more_partition_samples_fmnist}
\end{figure}
\begin{figure}[h]
    \centering
    \includegraphics[width=\textwidth]{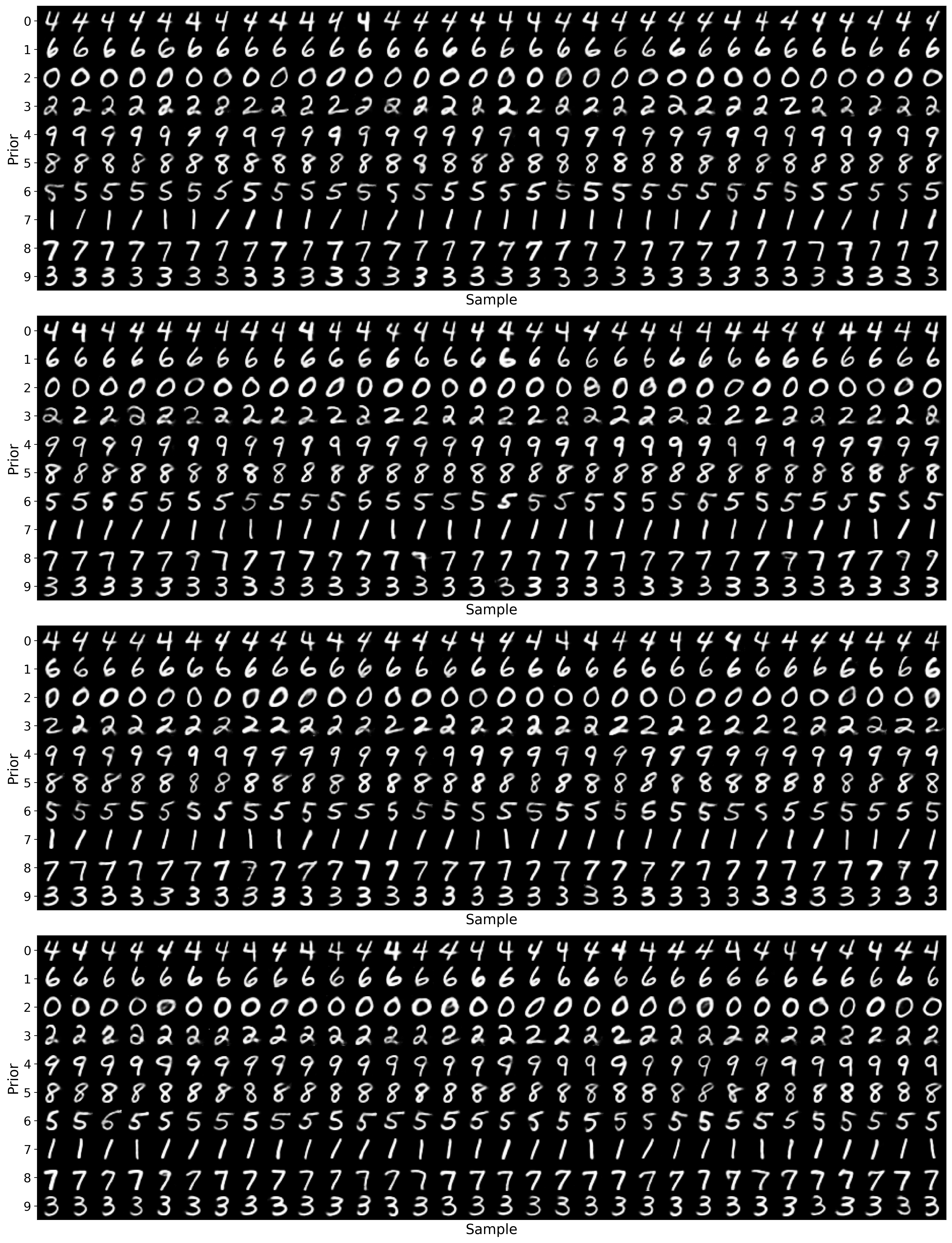}
    \caption{Various samples from each of the generative priors. Each prior learns to represent one of the digits. Further, we see a lot of variation between the different samples, suggesting that the clusters of the DRPM-VC manage to capture some of the diversity present in the dataset.}
    \label{fig:cluster_samples_mnist}
\end{figure}
\begin{figure}[h]
    \centering
    \includegraphics[width=\textwidth]{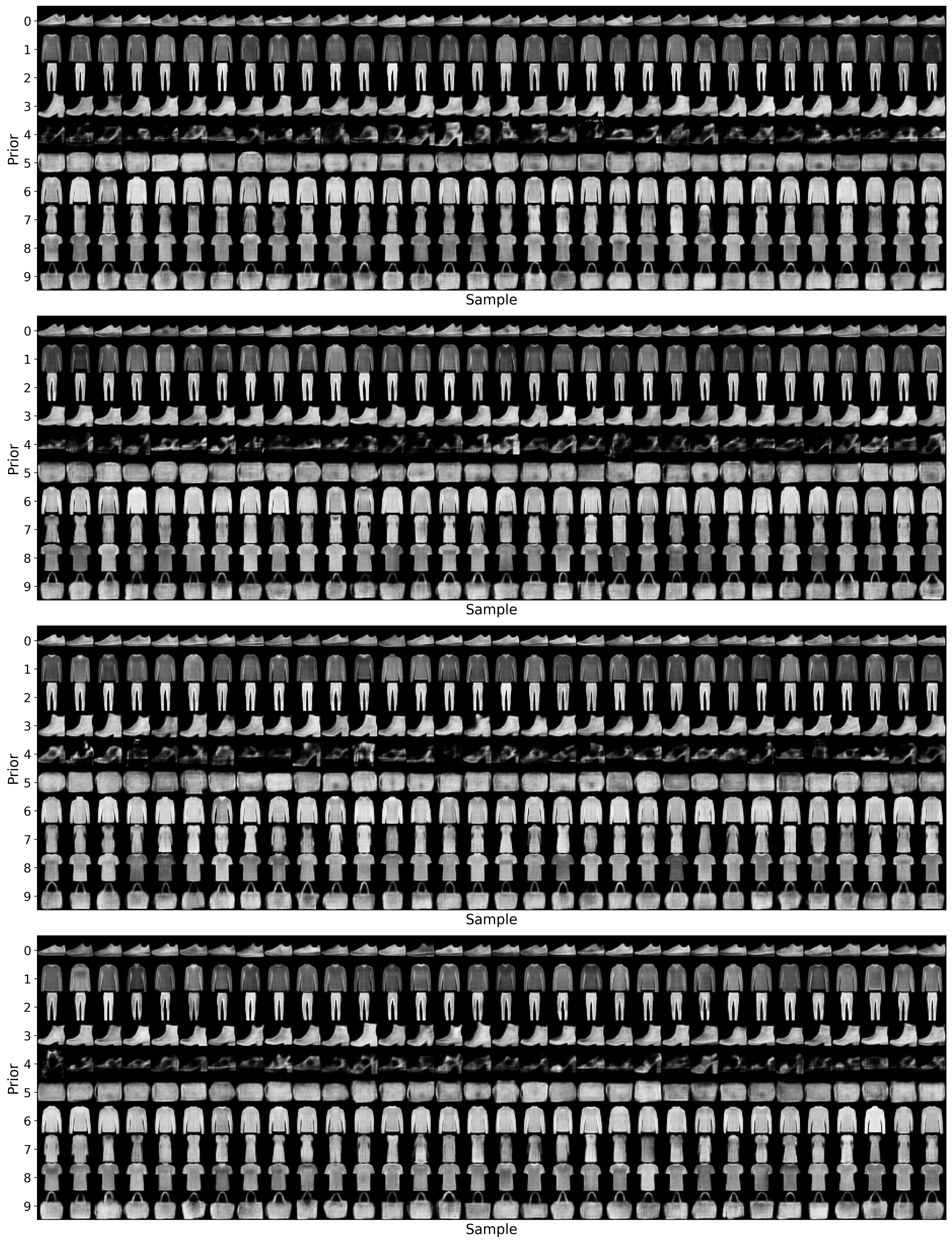}
    \caption{Various samples from each of the generative priors. Each prior learns to represent one of the digits. The DRPM-VC learns nice representations that provide coherent generations of most classes. For high-heels (cluster $4$), generating new samples seems difficult due to the heterogeneity within that class.}
    \label{fig:cluster_samples_fmnist}
\end{figure}


\end{document}